\documentclass[12pt]{article} %

\usepackage{etoolbox}
\newtoggle{colt}
\togglefalse{colt}

\iftoggle{colt}{}{
\usepackage[sort&compress,numbers]{natbib}
}

\usepackage{boxedminipage}
\usepackage{multirow,nicefrac}
\usepackage{makecell,upgreek}
\usepackage{footnote}
\usepackage{longtable}
\usepackage{tablefootnote}
\usepackage[T1]{fontenc}
\usepackage{verbatim} 
\usepackage[utf8]{inputenc}

\usepackage{booktabs}   
\usepackage{adjustbox}

\iftoggle{colt}{}{
\usepackage[margin=0.75in]{geometry}
}
\usepackage{algorithm}
\usepackage{algpseudocode}

\usepackage[dvipsnames]{xcolor}

\usepackage{colortbl}
\definecolor{lightgray}{gray}{0.9}  %

\iftoggle{colt}{}{
\usepackage{amsmath}
\usepackage{amsthm}

\usepackage[breaklinks=true]{hyperref}
\hypersetup{
	colorlinks=true,
	linkcolor=blue,
	citecolor=blue,
	urlcolor=blue
}
}

\iftoggle{colt}{}{
  \usepackage[capitalise,nameinlink]{cleveref}
}
\usepackage{amsfonts,amssymb,mathrsfs}
\usepackage{mathtools}

\usepackage{appendix}
\newcommand{\defeq}{:=}

\iftoggle{colt}{
  
}{
\theoremstyle{plain}
	\newtheorem{theorem}{Theorem}

	\newtheorem{lemma}{Lemma}

	\newtheorem{proposition}{Proposition}

	\theoremstyle{definition}

	\newtheorem{definition}{Definition}

	\newtheorem{remark}{Remark}

}

\Crefname{appendix}{Appendix}{Appendices}

\usepackage{autonum}

\usepackage[T1]{fontenc}
\usepackage[scaled]{beramono}
\usepackage{listings}
\definecolor{varorange}{RGB}{230,126,34}     %
\definecolor{opblue}{RGB}{52,152,219}        %
\definecolor{ifred}{RGB}{192,57,43}          %
\definecolor{commentgray}{RGB}{150,150,150}  %
\definecolor{codebg}{rgb}{0.98,0.98,0.98}    %

\newsavebox\codebox

\usepackage{authblk}

\newcommand{\rr}{\mathbb{R}}
\newcommand{\ee}{\mathbb{E}}
\newcommand{\pp}{\mathbb{P}}

\newcommand{\divf}[2]{\mathrm{D}_{f}\left( #1 \middle\| #2 \right)}
\newcommand{\chisq}[2]{\mathrm{D}_{\chi^2}\left( #1 \middle\| #2 \right)}

\DeclareMathOperator*{\argmin}{argmin}
\DeclareMathOperator*{\argmax}{argmax}

\DeclareMathOperator{\reg}{Reg}
\DeclareMathOperator{\Var}{Var}
\DeclareMathOperator{\Cov}{Cov}

\newcommand{\rstar}{r^{\star}}
\newcommand{\TV}{\mathrm{TV}}
\newcommand{\iid}{\overset{\mathrm{iid}}{\sim}}

\newcommand{\epw}{\varepsilon_{\mathrm{pw}}}
\newcommand{\epwnaught}{\varepsilon_{\mathrm{pw}_{0}}}
\newcommand{\ermsq}{\varepsilon^2_{\scriptscriptstyle\mathrm{RM}}}
\newcommand{\indic}{\mathbf{1}}
\newcommand{\sign}{\mathrm{sign}}
\newcommand{\piref}{\pi_{\mathrm{ref}}}
\newcommand{\piEM}{\pi_{M}}
\newcommand{\piEMhat}{\widehat{\pi}_{M}}

\newcommand{\yhat}{\hat{y}}
\newcommand{\rhat}{\widehat{r}}
\newcommand{\pistar}{\pi^\star}
\newcommand{\Rmax}{R_{\mathrm{max}}}
\newcommand{\R}{R}
\newcommand{\Reg}{\mathrm{Reg}}

\newcommand{\pihatbon}{\pihat_{\mathrm{BoN}}}

\newcommand{\pihat}{\widehat\pi}

\newcommand{\YhatN}{\widehat{\cY}_N}

\renewcommand{\Cov}{\mathrm{Cov}}

\newcommand{\epspw}{\epw}
\newcommand{\epssq}{\ermsq}

\renewcommand{\epsilon}{\varepsilon}

\def\ddefloop#1{\ifx\ddefloop#1\else\ddef{#1}\expandafter\ddefloop\fi}
\def\ddef#1{\expandafter\def\csname bb#1\endcsname{\ensuremath{\mathbb{#1}}}}
\ddefloop ABCDEFGHIJKLMNOPQRSTUVWXYZ\ddefloop

\def\ddefloop#1{\ifx\ddefloop#1\else\ddef{#1}\expandafter\ddefloop\fi}
\def\ddef#1{\expandafter\def\csname frak#1\endcsname{\ensuremath{\mathfrak{#1}}}}
\ddefloop ABCDEFGHIJKLMNOPQRSTUVWXYZ\ddefloop

\def\ddefloop#1{\ifx\ddefloop#1\else\ddef{#1}\expandafter\ddefloop\fi}
\def\ddef#1{\expandafter\def\csname fr#1\endcsname{\ensuremath{\mathfrak{#1}}}}
\ddefloop ABCDEFGHIJKLMNOPQRSTUVWXYZ\ddefloop

\def\ddefloop#1{\ifx\ddefloop#1\else\ddef{#1}\expandafter\ddefloop\fi}
\def\ddef#1{\expandafter\def\csname eul#1\endcsname{\ensuremath{\EuScript{#1}}}}
\ddefloop ABCDEFGHIJKLMNOPQRSTUVWXYZ\ddefloop

\def\ddefloop#1{\ifx\ddefloop#1\else\ddef{#1}\expandafter\ddefloop\fi}
\def\ddef#1{\expandafter\def\csname scr#1\endcsname{\ensuremath{\mathscr{#1}}}}
\ddefloop ABCDEFGHIJKLMNOPQRSTUVWXYZ\ddefloop

\def\ddefloop#1{\ifx\ddefloop#1\else\ddef{#1}\expandafter\ddefloop\fi}
\def\ddef#1{\expandafter\def\csname b#1\endcsname{\ensuremath{\mathbf{#1}}}}
\ddefloop ABCDEGHIJKLMNOPQRSTUVWXYZ\ddefloop

\def\ddefloop#1{\ifx\ddefloop#1\else\ddef{#1}\expandafter\ddefloop\fi}
\def\ddef#1{\expandafter\def\csname bhat#1\endcsname{\ensuremath{\hat{\mathbf{#1}}}}}
\ddefloop ABCDEFGHIJKLMNOPQRSTUVWXYZ\ddefloop

\def\ddefloop#1{\ifx\ddefloop#1\else\ddef{#1}\expandafter\ddefloop\fi}
\def\ddef#1{\expandafter\def\csname btil#1\endcsname{\ensuremath{\tilde{\mathbf{#1}}}}}
\ddefloop ABCDEFGHIJKLMNOPQRSTUVWXYZ\ddefloop

\def\ddefloop#1{\ifx\ddefloop#1\else\ddef{#1}\expandafter\ddefloop\fi}
\def\ddef#1{\expandafter\def\csname bst#1\endcsname{\ensuremath{\mathbf{#1}^\star}}}
\ddefloop ABCDEFGHIJKLMNOPQRSTUVWXYZ\ddefloop

\def\ddefloop#1{\ifx\ddefloop#1\else\ddef{#1}\expandafter\ddefloop\fi}
\def\ddef#1{\expandafter\def\csname bst#1\endcsname{\ensuremath{\mathbf{#1}^\star}}}
\ddefloop abcdeghijklmnopqrstuvwxyz\ddefloop

\def\ddefloop#1{\ifx\ddefloop#1\else\ddef{#1}\expandafter\ddefloop\fi}
\def\ddef#1{\expandafter\def\csname bhat#1\endcsname{\ensuremath{\hat{\mathbf{#1}}}}}
\ddefloop abcdefghijklmnopqrstuvwxyz\ddefloop

\def\ddefloop#1{\ifx\ddefloop#1\else\ddef{#1}\expandafter\ddefloop\fi}
\def\ddef#1{\expandafter\def\csname b#1\endcsname{\ensuremath{\mathbf{#1}}}}
\ddefloop abcdeghijklnopqrstuvwxyz\ddefloop

\def\ddefloop#1{\ifx\ddefloop#1\else\ddef{#1}\expandafter\ddefloop\fi}
\def\ddef#1{\expandafter\def\csname barb#1\endcsname{\ensuremath{\bar{\mathbf{#1}}}}}
\ddefloop abcdefghijklmnopqrstuvwxyz\ddefloop

\def\ddef#1{\expandafter\def\csname c#1\endcsname{\ensuremath{\mathcal{#1}}}}
\ddefloop ABCDEFGHIJKLMNOPQRSTUVWXYZ\ddefloop
\def\ddef#1{\expandafter\def\csname h#1\endcsname{\ensuremath{\widehat{#1}}}}
\ddefloop ABCDEFGHIJKLMNOPQRSTUVWXYZ\ddefloop
\def\ddef#1{\expandafter\def\csname hc#1\endcsname{\ensuremath{\widehat{\mathcal{#1}}}}}
\ddefloop ABCDEFGHIJKLMNOPQRSTUVWXYZ\ddefloop
\def\ddef#1{\expandafter\def\csname t#1\endcsname{\ensuremath{\widetilde{#1}}}}
\ddefloop ABCDEFGHIJKLMNOPQRSTUVWXYZ\ddefloop
\def\ddef#1{\expandafter\def\csname tc#1\endcsname{\ensuremath{\widetilde{\mathcal{#1}}}}}
\ddefloop ABCDEFGHIJKLMNOPQRSTUVWXYZ\ddefloop

\usepackage{color-edits}
\addauthor{ab}{red}
\addauthor{vs}{teal}

\title{Revisiting the (Sub)Optimality of Best-of-N for Inference-Time Alignment}
\usepackage{times}

\author[1]{Ved Sriraman\thanks{ved@cs.columbia.edu}}
\author[1,2]{Adam Block\thanks{adam.block@columbia.edu}}

\affil[1]{Department of Computer Science, Columbia University}
\affil[2]{Department of Electrical Engineering, Columbia University}

\date{}

\begin{document}

\maketitle

\begin{abstract}%
    
Best-of-$N$ (BoN) sampling is a widely used inference-time alignment method for language models, whereby $N$ candidate responses are sampled from a reference model and the one with the highest predicted reward according to a learned reward model is selected.  Despite its widespread practical use, recent theoretical work has suggested that it is statistically suboptimal and vulnerable to reward hacking, the process by which models exploit weaknesses in the learned reward model to achieve high estimated reward without genuinely improving performance.
We revisit this question under assumptions that more closely reflect practice than that of prior work.  In particular, in contradistinction to earlier analyses that focused on expected true reward, which may not be meaningful in many practical settings, we investigate how inference-time alignment affects the \emph{win-rate}, a pairwise comparison-based metric more closely aligned with how reward models are trained and evaluated in practice.  We demonstrate that, under minimal conditions on the quality of the reference model and learned reward model, properly tuned BoN is both computationally and statistically optimal in achieving high win-rate, partially explaining its widespread practical success.  Because BoN remains susceptible to reward-hacking in this setting, we propose a simple and practical variant that provably eliminates reward-hacking while maintaining optimal statistical performance.  Finally, we show that prior approaches are provably \emph{suboptimal} when considering win-rate, highlighting the importance of choosing appropriate objectives when analyzing inference-time alignment methods.

\end{abstract}

\section{Introduction}
\label{sec:intro}

\looseness-1
Inference-time compute has emerged as a critical new scaling axis in Language Models (LMs) over the past few years, resulting in major advances in mathematical reasoning, coding ability, and reliability \citep{brown2024large,guo2025deepseek,jaech2024openai}.  Previous works have identified two primary approaches to scaling inference-time compute: (i) \emph{parallel scaling}, where a model samples multiple outputs in parallel and selects the one to return to a user; and (ii) \emph{serial scaling}, where a model generates a much longer output that allows it access to computationally more involved solutions \citep{mirtaheri2025let}.  While both approaches have demonstrated significant empirical success, parallel scaling is particularly appealing due to its simplicity, ease of implementation, and ability to leverage existing models without additional training, with a number of empirical results demonstrating strong performance improvements from parallel scaling alone \citep{brown2024large,gui2024bonbon,huang2025self}.  Additionally, the current approach to training models using Reinforcement Learning to take advantage of serial scaling relies critically on parallel scaling in order to estimate reward signals during the training process \citep{guo2025deepseek}.  Thus, understanding the theoretical underpinnings of parallel scaling is a critical step towards understanding the capabilities of modern LMs.

The critical question that arises in the parallel scaling regime is how best to select the final returned output from a set of $N$ independently sampled candidates.  Initial empirical approaches focused on verifiable settings, where the model's output could be directly scored against a known ground-truth reward $\rstar(x,y)$ depending on a prompt $x$ and candidate response $y$; in this setting, the highest-scoring candidate could be selected \citep{brown2024large}.  While such an approach leads to a strong skyline performance, it is not applicable in the vast majority of real-world settings where no such verification is possible.  Instead, it has become the standard pracatice to use some learned proxy reward $\rhat(x,y)$, again depending on the prompt and candidate response, to select the best candidate \citep{stiennon2020learning,stroebl2024inference,nakano2021webgpt,wang2022self,li2022competition}; such an approach leads to the well-known \emph{Best-of-N} (BoN) algorithm that returns $\yhat_N \in \argmax_{y_i} \rhat(x, y_i)$.  While the BoN algorithm has demonstrated strong empirical performance across a variety of settings and has seen some prior theoretical work \citep{huang2025self,huang2025best}, there remain critical lacunae in our understanding of its behavior.  

In \citet{huang2025best}, the authors provide a theoretical analysis of the BoN algorithm in the setting where the learned reward $\rhat$ is close to the true reward $\rstar$ in terms of mean-squared error under the sampling distribution $\piref(\cdot | x)$ used to generate candidates $y_i$ and demonstrate reasonable performance compared to some `good' language model $\pistar$ as long as $\pistar$ is close to $\piref$ in an appropriate sense.  Critically, however, they demonstrate that the BoN algorithm is \emph{suboptimal} in this setting and that in order to achieve optimal performance, one must instead use a more sophisticated (and complicated) $\chi^2$\emph{-regularized} variant of the BoN algorithm introduced therein.  This suboptimality arises due to another drawback of BoN: when the number of candidates $N$ is large, the algorithm engages in \emph{reward-hacking}, a manifestation of Goodhart's law \citep{goodhart1984monetary} whereby the algorithm selects candidates that achieve high reward $\rhat$ but are in fact low-quality according to the true reward $\rstar$ \citep{gao2023scaling}.  The $\chi^2$-regularized BoN algorithm is able to mitigate this reward-hacking behavior by penalizing candidates that are unlikely under the sampling distribution $\piref$, leading to improved performance.  While an attractive theoretical result, the empirical relevance of this suboptimality is unclear, as optimally tuned BoN algorithms mostly outperformed the more complicated $\chi^2$-regularized variant in practice, either statistically (leading to better performance) or computationally (requiring fewer samples to achieve the same performance).

The gap between theory and practice in \citet{huang2025best} is partially explained by the fact that the authors in that paper recognize that in the special case where rewards are binary, as is often the case in reasoning tasks where outputs are either correct or incorrect, the BoN algorithm is in fact optimal when the number of candidate responses $N$ is properly tuned.\footnote{Technically, in that work the authors allow for comparator policies with \emph{uniformly bounded likelihood ratios}, but such an assumption is unlikely to hold for reasonable $\pistar$ in the absence of binary rewards due to the increasingly long generations from modern LMs.}  On the other hand, in many situations, such as open-ended generation tasks like dialog or creative writing, rewards are not binary but instead take on a wide range of values depending on the quality of the response.  In such settings, it remains unclear whether the BoN algorithm is optimal or suboptimal, and if suboptimal, whether more complicated variants can outperform it in practice.  Moreover, in any setting, it remains unclear whether the mean-squared error metric used in \citet{huang2025best} is the most appropriate way to measure the quality of learned reward models $\rhat$ and what the appropriate notion of `closeness' between the sampling distribution $\piref$ and the target language model $\pistar$ should be.  In addition, the performance metric studied in that work is the expected true reward, which is often not very meaningful in many practical situations, where \emph{win-rate with respect to some comparator policy} (cf. \eqref{eq:win-rate-definition}) is a more natural evaluation \citep{alpaca_eval}.

In this work, we aim to address these open questions by providing a more complete theoretical understanding of the BoN algorithm that we believe better fits the empirical realities in which it is used, as well as present a new algorithm that is both theoretically optimal and provably mitigates reward-hacking.  Our starting point is to observe that neither the mean-squared error metric for reward models nor the metric of expected true reward under the target language model are fully appropriate for understanding the performance of the BoN algorithm.  Indeed, in most situations where rewards for language models are nonbinary, the `ground truth' notion of reward is the \emph{win-rate}, i.e., the proportion of times that some oracular alignment source (either human labellers or a strong LM) prefers the returned response to an offline reference response \citep{alpaca_eval,dubois2023alpacafarm,dubois2024length,ouyang2022training}.  Moreover, most reward models are trained in a similar manner, via pairwise comparisons between candidate responses rather than direct regression against some ground-truth reward value, which is rarely available in practice \citep{rafailov2023direct}.  Indeed, there is a philosophical question to be asked as to the sense in which a ground-truth reward $\rstar$ even exists when there is never direct supervision thereof; most empirical work asserts the Bradley-Terry model \citep{bradley1952rank,ouyang2022training} for practical purposes, but the expected reward remains a somewhat artificial construct.

Furthermore, while natural in the context of learning theory, the use of squared error is somewhat contrived in the context of the BoN algorithm in that it, unlike the algorithm itself, is not \emph{scale-invariant}: multiplying the reward model $\rhat$ by a positive constant does not change the behavior of the BoN algorithm, but it does change the mean-squared error. Instead, a more natural notion of reward mismatch respects the way in which $\rhat$ is used in BoN, i.e., monotone transformations should not affect our quality metric.  Finally, we address the ad hoc nature of the $\chi^2$-regularized BoN algorithm by considering a substantially more general notion of quality, that of $\cE_M$-divergence \citep{polyanskiy2010channel}, which was extensively studied in \citet{block2023sample} and intimately connected to the  recently introduced notion of \emph{coverage} \citep{huang2025self,chen2025coverage}, which  has been fruitfully applied to understand LMs in a variety of settings.\footnote{Formally, coverage is defined as $\Cov_M(\pistar \| \piref) = \pp_{y \sim \pistar}\left( \frac{d\pistar}{d \piref}(y) \geq M \right)$.  It is easy to see that $\cE_M(\pistar \| \piref) = \Cov_M(\pistar \| \piref) - M \cdot \pp_{y \sim \piref}\left( \frac{d \pistar}{d \piref}(y) \geq M \right)$ is bounded above by coverage. }

\looseness-1 We have two main results in this work, the first of which is as follows.
\begin{theorem}[Informal Statement of \Cref{thm:bon-win-rate-upper-bound,thm:skyline-lb}]\label{thm:informal_bon}
    If $\rhat$ is close to $\rstar$ in terms of pairwise win-rate error (cf. \eqref{eq:pairwise-win-rate-error}), then the BoN algorithm using $\rhat$ with tuned $N$ achieves optimal performance.
\end{theorem}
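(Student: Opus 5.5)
The plan has two halves: an upper bound showing that BoN with $N \asymp M$ attains win-rate close to the comparator $\pistar$, and a matching lower bound showing that no algorithm can do better given the same information. The setup is as follows. Fix a prompt $x$ and suppress it from notation. The \emph{win-rate} of a policy $\pi$ against a comparator $\pi_0$ is $\pp_{y\sim \pi, y'\sim\pi_0}(\rstar(y) > \rstar(y'))$, with ties split. The pairwise error is
\[
\epw = \pp_{y,y'\iid\piref}\bigl(\sign(\rhat(y)-\rhat(y')) \neq \sign(\rstar(y)-\rstar(y'))\bigr).
\]
The comparator is assumed to satisfy $\cE_M(\pistar\|\piref)\le \varepsilon$. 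This means that $\pistar$ can be written as a mixture $(1-\varepsilon)\tilde\pi + \varepsilon\,\nu$ with $d\tilde\pi/d\piref \le M/(1-\varepsilon)$. The reason is that $\cE_M$ is exactly the mass of $\pistar$ lying above $M\piref$, so one can truncate the density ratio at $M$ and renormalize.

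\textbf{Key reduction: everything in terms of $\piref$-quantiles.} Let $F^\star(y) = \pp_{y'\sim\piref}(\rstar(y')<\rstar(y))$ be the true quantile of $y$ under $\piref$, and let $\hat F$ be the analogous quantile for $\rhat$. Since $\hat F$ is scale-invariant, BoN returns the candidate with maximal $\hat F$, and with the correct $\rhat$ the value $\hat F(\yhat_N)$ has the law of the maximum of $N$ uniforms. I would measure performance through the \emph{win-rate against $\piref$}, namely $\ee[F^\star(\hat y)]$, or through win-rate against $\pistar$ via the same argument. For $\pistar$, the bounded-ratio part $\tilde\pi$ can place at most mass $M u$ on the set $\{F^\star\ge 1-u\}$. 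Hence $\ee_{\pistar}[F^\star] \le 1 - \Theta(1/M) + \varepsilon$, and this is the benchmark.

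\textbf{Upper bound.} I would compare BoN with $\rhat$ to oracle BoN with $\rstar$. For oracle BoN, $F^\star(\yhat_N) \sim \mathrm{Beta}(N,1)$, so its expected quantile is $N/(N+1)$, which beats $\pistar$'s bound once $N \gtrsim M$. The error from using $\rhat$ instead of $\rstar$ is handled as follows. The event that the $\rhat$-argmax has true quantile below $1-t$ requires some candidate to be misordered relative to the true maximizer. A union bound over ordered pairs $(i,j)$ bounds this probability by about $N^2\epw$, since pairs are i.i.d.\ from $\piref$. A sharper conditioning argument, which conditions on $\hat F(\yhat_N)$ being near $1$ and notes that disagreements must then concentrate in the top $1/N$ quantile, should give a loss of $O(N\epw)$. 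Combining the two terms yields win-rate loss $\lesssim 1/N + N\epw + \varepsilon$. Choosing $N \asymp \max\{M, \cdot\}$, or $N\asymp \epw^{-1/2}$ when $\epw$ is large, balances them and gives loss $O(\sqrt{\epw} + 1/M + \varepsilon)$ relative to the benchmark.

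\textbf{Lower bound, and the main obstacle.} For \Cref{thm:skyline-lb}, I would construct two instances that share $\piref$ and $\rhat$ but differ in $\rstar$ on a set of $\piref$-mass about $\sqrt{\epw}$ near the top. The two $\rstar$'s invert the order on this set, so that $\epw$ is consistent for both and no algorithm can tell them apart. The losses would then be compared via a Le Cam two-point argument, with a $\pistar$ of density ratio $M$ concentrated on the top-quantile region to make the $1/M$ term tight.

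I expect the main obstacle to be sharpening the union bound from $N^2\epw$ to $N\epw$. The difficulty is that pairwise errors are measured under $\piref\otimes\piref$, but BoN only cares about errors in the upper tail. I would first try to bound the probability that the $\hat F$-top candidate has true rank below $k$ by $\ee[\#\{\text{misordered pairs involving the top}\}]$, using exchangeability. That expectation is $(N-1)\cdot\pp(\text{pair misordered} \mid \text{one element is the } \rhat\text{-max})$, and symmetry over the $N$ candidates gives $O(N\epw)$.
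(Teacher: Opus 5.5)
\textbf{Upper bound.} Your quantile reduction ($F^\star(\yhat_N)\sim\mathrm{Beta}(N,1)$, so oracle BoN has win-rate $N/(N+1)$ and beats the $1-\Theta(1/M)$ benchmark once $N\gtrsim M$) holds only when $\rstar(y)$ has an atomless law under $\piref$. The theorem allows arbitrary rewards, and the extremal instances are atomic. Take $\rstar=\indic_A$, $\piref(A)=1/M$ and $\pistar=\piref(\cdot\mid A)$. Then oracle BoN trails $\pistar$ by exactly $\tfrac12(1-1/M)^N\approx\tfrac12 e^{-N/M}$, not by $O(1/N)$. That is where the paper's $e^{-N/M}$ term, and hence the $\log(1/\epw)$ in \eqref{eq:bon_ub}, comes from.

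As a result, your final rate $O(\sqrt{\epw}+1/M+\varepsilon)$ is false in general. Take $\rhat\equiv 0$, $\rstar=\indic_A$, $\piref(A)=\epsilon$ and $\pistar=\piref(\cdot\mid A)$. Then $\epw(\rhat)=\epsilon(1-\epsilon)$ and $\cE_{1/\epsilon}(\pistar\|\piref)=0$. Yet BoN with uniform tie-breaking outputs a $\piref$-draw and has regret $\tfrac12(1-\epsilon)$ for every $N$.

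The paper never computes BoN's win-rate. It replaces $\pistar$ by its $M$-capped TV projection $\pistar_M$, at cost $\cE_M(\pistar\|\piref)+M\epw$. It then approximates $\pistar_M$ by a rejection-sampling selection rule on the same $N$ draws, at cost $\tfrac12 e^{-N/M}$. Finally, BoN beats every selection rule batchwise under $\rhat$, because $\widetilde F_{\rhat}$ is nondecreasing.

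Separately, the step you flag as the main obstacle fails as proposed. Conditioning on being the $\rhat$-maximizer selects exactly the tail where $\rhat$ can be wrong, so $\pp(\text{pair misordered}\mid\text{one element is the }\rhat\text{-max})$ need not be $O(\epw)$. Suppose $\rhat$ is maximal and $\rstar$ minimal on a set $P$ of mass $\epsilon$. Then $\epw\approx 2\epsilon$, and the top candidate lies in $P$ with probability $\approx N\epsilon$. When it does, all $N-1$ of its pairs are misordered, so the expected count is $\Theta(N^2\epw)$ and Markov just returns the union bound.

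The missing idea is to measure disagreement against a fresh $y'\sim\piref$ rather than within the batch. The BoN output law has density ratio at most $N$ with respect to $\piref$. The change of measure in \Cref{lem:win-transfer-em} then gives $|\R^{\rstar}(\pihatbon)-\R^{\rhat}(\pihatbon)|\le N\epw$ directly.

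\textbf{Lower bound.} Inverting $\rstar$ on a block $S$ of mass $m\approx\sqrt{\epw}$ cannot force more than $O(m)$ regret. To keep $\epw\approx m^2$, $\rhat$ must order $S$ correctly against its complement. Then the $\rhat$- and $\rstar$-win-rates of every point differ by at most about $m$, and simply trusting $\rhat$ has regret $O(\sqrt{\epw})$ on both instances. This exhibits the rate only at the single scale $M\approx\epw^{-1/2}$. It cannot produce the $\Omega(M\epsilon)$ lower bounds (and $\Omega(1)$ once $M\epsilon\gtrsim 1$) that show the $M\epw$-versus-$\cE_M$ tradeoff is unimprovable. There is also no $1/M$ term in the target bound to make tight.

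The paper's argument is not an indistinguishability argument. With $\rhat$ constant, the algorithm's output law $\pihat$ does not depend on $\rstar$. Markov gives $\piref(\{d\pihat/d\piref\le 2\})\ge\tfrac12$, and one hides $A$ with $\piref(A)=\epsilon$ inside that set. Setting $\rstar=\indic_A$ gives $\epw\le\epsilon$, and $\pistar$ is given density $M^\star$ on $A$. The regret is then $\tfrac12(\pistar(A)-\pihat(A))\ge\tfrac12(M^\star-2)\epsilon$.
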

This result demonstrates that when using an appropriate notion of reward model error that is commonly used in practice, the BoN algorithm is in fact optimal, somewhat opposing the practical intuition that arises from \citet{huang2025best}.  Indeed, while not a formal contradiction to the difference in formal settings, our result suggests that the suboptimality identified in \citet{huang2025best} is an artifact of the assumptions therein rather than a fundamental limitation of the BoN algorithm.

On the other hand, the fundamental insight that reward-hacking can occur when $N$ is large remains valid, and thus we further introduce a new \emph{$\cE_M$-regularized} BoN algorithm that directly penalizes candidates that are unlikely under the sampling distribution $\piref$ in terms of the $\cE_M$-divergence.  Our second main result is as follows:
\begin{theorem}[Informal Statement of \Cref{thm:topk-winrate-upper-bound}]\label{thm:informal_coverage_bon}
    If $\rhat$ is close to $\rstar$ in terms of pairwise win-rate error, then the $\cE_M$-regularized BoN algorithm (cf. \eqref{eq:cov-reg-var-prob}) using $\rhat$ with appropriately tuned regularization is both optimal and achieves performance that provably does not decay with $N$.
\end{theorem}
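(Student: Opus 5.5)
The plan is to first fix the objects that the informal statement leaves implicit. The algorithm of \eqref{eq:cov-reg-var-prob} draws $N$ candidates $y_1,\dots,y_N \iid \piref(\cdot\mid x)$. It then returns a randomized selection: a distribution $q$ on $\{1,\dots,N\}$ maximizing the empirical $\rhat$-value of the selected candidate, subject to the constraint $q_i \le M/N$. This is the empirical analogue of the constraint $\frac{d\pi}{d\piref}\le M$. Equivalently, $q$ is uniform over the top $\lceil N/M\rceil$ candidates ranked by $\rhat$, i.e.\ a ``top-$k$'' rule with $k = N/M$. Its population limit as $N\to\infty$ is the policy $\piEM$ that rescales $\piref$ by $M$ on the upper $1/M$-quantile of $\rhat(x,\cdot)$ under $\piref$. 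The proof then follows the template of \Cref{thm:bon-win-rate-upper-bound}: compare the win-rate of the returned response against the comparator $\pistar$, and decompose the loss into three terms. The first is an approximation term (population $\piEM$ with true reward versus $\pistar$). The second is a reward-mismatch term (using $\rhat$ instead of $\rstar$ to rank). The third is a finite-sample term (the empirical top-$k$ versus $\piEM$).

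The approximation term will be controlled through the variational characterization of $\cE_M$. For any $\pi$ with density ratio bounded by $M$, and any event $A$, we have $\pistar(A) \le \pi(A) + \cE_M(\pistar\|\piref)$ after optimizing over $\pi$. Applying this with $A$ the super-level sets of $\rstar(x,\cdot)$ (the quantity the win-rate against the reference response integrates over) gives that the ideal true-reward top-$1/M$ policy loses at most $\cE_M(\pistar\|\piref)$ in win-rate relative to $\pistar$.

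The reward-mismatch term is where the pairwise win-rate error of \eqref{eq:pairwise-win-rate-error} enters. Ranking by $\rhat$ instead of $\rstar$ changes which pairs are ordered correctly. Since $\piEM$ has density at most $M$ relative to $\piref$, any misordering probability under $\piEM\otimes\piref$ is at most $M$ times the misordering probability under $\piref\otimes\piref$, which is $\epw$. This yields a loss of order $M\epw$.

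For the finite-sample term, the selected response is uniform over the top $k$ of $N$ i.i.d.\ draws. Its law has density ratio to $\piref$ bounded by $M$ deterministically, so the same transfer argument applies. Its deviation from $\piEM$ is then controlled by quantile concentration of the order statistics (a DKW-type bound), giving an error of order $\sqrt{M/N}$ or $M/N$. This error vanishes as $N$ grows rather than growing with $N$. That gives the claimed non-decay in $N$: the bound is $\cE_M + O(M\epw) + o_N(1)$, which is monotone up to vanishing terms. Tuning $M$ to balance $\cE_M(\pistar\|\piref)$ against $M\epw$ matches the lower bound of \Cref{thm:skyline-lb}, which gives optimality.

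I expect the main obstacle to be the mismatch step. The pairwise error $\epw$ is measured on i.i.d.\ pairs from $\piref$, while the selected response is a data-dependent order statistic correlated with the other candidates. I would first handle this by conditioning on the rank threshold and showing that the selected response's marginal law is absolutely continuous with ratio at most $M$ (exchangeability ensures each $y_i$ is selected with probability at most $M/N$). This allows the reduction to $\piref\otimes\piref$ pairs. If that step needs more slack, I would pay a constant factor, using $\lceil N/M\rceil$ rounding to get ratio at most $2M$.
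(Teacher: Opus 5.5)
Your outline reaches the informal claim, but by a different route from the paper, and your mismatch step relies on an ingredient you do not state.

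\textbf{The decomposition and the mismatch step.} The paper never introduces the $\rstar$-top-quantile policy $\pi^{\rstar}_M$. It pivots instead on the TV projection $\pistar_M$, which has density ratio at most $M$ and satisfies $\TV(\pistar,\pistar_M)=\cE_M(\pistar\|\piref)$. It transfers $\pistar_M$ from $\rstar$ to $\rhat$ at cost $M\epw$ via \Cref{lem:win-transfer-em}. It then uses that $\piEM$ maximizes $\R^{\rhat}$ over \emph{all} $M$-capped policies, because $\R^{\rhat}(\pi)=\ee_{\pi}[\widetilde F_{\rhat}(\rhat(y))]$ with $\widetilde F_{\rhat}$ nondecreasing.

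Your layer-cake argument over the nested super-level sets of $\rstar$ is a valid, slightly more elementary substitute for the TV projection. Note that the inequality should read $\pistar(A)\le\min\{1,M\piref(A)\}+\cE_M(\pistar\|\piref)$; it fails for a generic capped $\pi$.

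Pivoting through $\pi^{\rstar}_M$, however, makes your mismatch term $\R^{\rstar}(\pi^{\rstar}_M)-\R^{\rstar}(\piEM)$ a comparison of two \emph{different} policies. A change of measure under $\piEM\otimes\piref$ only controls $|\R^{\rstar}(\piEM)-\R^{\rhat}(\piEM)|$. You must also show $\R^{\rstar}(\pi^{\rstar}_M)\le\R^{\rhat}(\piEM)+O(M\epw)$. There are two ways to get this:
\begin{itemize}
\item With exact $(1-\frac{1}{M})$-quantiles for both rewards it is an equality, since both ideal policies have self-win-rate $1-\frac{1}{2M}$ (\Cref{lem:exact-conditional-winrate}).
\item Robustly to ties, transfer $\pi^{\rstar}_M$ to $\rhat$ (another $M\epw$) and invoke the paper's optimality of $\piEM$ among $M$-capped policies.
\end{itemize}

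\textbf{The finite-sample term.} Quantile concentration of the cutoff is essentially the paper's argument for a general comparison measure $q$ (\Cref{thm:topk-winrate-upper-bound-q}). There, $|\mu(t)-\mu(\tau)|\le M|t-\tau|$, and $\tau\sim\mathrm{Beta}(N-k,k+1)$ with $\Var(\tau)\lesssim\frac{1}{MN}$, giving $\frac{M}{N}+\sqrt{\frac{M}{N}}$. A plain DKW bound would only give $|t-\tau|\lesssim N^{-1/2}$, hence $M/\sqrt{N}$.

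For $q=\piref$ the paper does better. It uses that the $\rhat$-win-rate is \emph{linear} in the randomized rank, $\R^{\rhat}(\pi)=\ee[\tilde u]$, so $\R^{\rhat}(\piEMhat)=1-\frac{k+1}{2(N+1)}$ exactly (\Cref{lem:topk-exact-winrate-hatr}). Fluctuations of the empirical cutoff then cancel in expectation instead of being paid in absolute value, which yields the stated $1/N$. Your route is the more general one (it is what survives for arbitrary $q$); the paper's is the sharper one for $q=\piref$.

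Your rate still supports the informal claims: the bound is nonincreasing in $N$ and tends to $\cE_M(\pistar\|\piref)+O(M\epw)$. But it has three costs:
\begin{itemize}
\item It does not recover the formal theorem's $1/N$.
\item It is vacuous for $N<M$, where the rule is exactly BoN.
\item It needs $N\gtrsim 1/(M\epw^2)$ rather than $N\gtrsim 1/(M\epw)$ to make the finite-sample term negligible.
\end{itemize}

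\textbf{Minor points.} No slack is needed in the density bound, since each top-$k$ index is output with conditional probability $1/k\le M/N$. Also, the LP with $q_i\le M/N$ coincides with the uniform top-$\lceil N/M\rceil$ rule only when $N/M$ is an integer. This is harmless, as both rules have density ratio at most $M$.
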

This result demonstrates that by using $\cE_M$-regularization, we can mitigate reward-hacking.  A key advantage of our proposed algorithm over prior approaches to combatting reward overoptimization is the simplicity of implementation, which does not require online estimation, extensive additional training, or significant computational overhead beyond that of the BoN algorithm itself \citep{huang2025best,khalaf2025inference,jinnai2025regularized}.  Moreover, we show in \Cref{prop:chi2-vs-cov-arbitrary-c} that the $\chi^2$-regularized BoN algorithm of \citet{huang2025best} can be arbitrarily worse than our $\cE_M$-regularized BoN algorithm in terms of win-rate regret, demonstrating the insufficiency of prior approaches in this setting.

We now proceed in \Cref{sec:preliminaries} by introducing and motivating our formal setting as well as clarifying how it differs from that considered in prior work, before stating our results for the BoN algorithm in \Cref{sec:BoN}.  We then introduce our coverage-regularized BoN algorithm in \Cref{sec:monotone_algos} before discussing some key proof techniques in \Cref{sec:proof_techniques}.  While in much of the paper we consider a simplified setting where the comparator policy defining win-rate is equal to $\piref$, we extend our results to apply to settings where the win-rate is defined with respect to an arbitrary comparator in \Cref{sec:win_rate_arbit}. We conclude with a discussion of related work and future directions in \Cref{sec:discussion}.

\section{Preliminaries and Problem Setup}
\label{sec:preliminaries}

In this section, we introduce our inference-time alignment framework and formally state the problems we consider.  We first describe the computational model through which we interact with language and reward models, before discussing the appropriate notions of model quality. 
\subsection{Inference-time alignment framework}
\label{sec:inference_time_alignment_framework}
\looseness-1 Our framework is inspired by that of \citet{huang2025self,huang2025best}, although we consider different desiderata.  Throughout, we let $\cY$ denote a space of possible responses and $\cX$ denote a space of possible prompts or contexts.  A \emph{language model} is any map $\pi: \cX \to \Delta(\cY)$ from prompts to distributions over responses; due to the connections with RL, we use the terms language model and policy interchangeably.  For the purposes of analysis, we will suppose that there exists a true reward function $\rstar: \cX \times \cY \to \rr$ that measures the quality of a response $y$ given a prompt $x$; in practice, this reward function will be unknown and only accessible through some learned reward model $\rhat: \cX \times \cY \to \rr$.  For the sake of simplicity, we will always assume that $0 \leq \rstar(x,y), \rhat(x,y) \leq \Rmax$ for some known $\Rmax > 0$.
As in \citet{huang2025best}, we will consider a simplified setting where the prompt $x$ is fixed throughout, and thus we will suppress it from notation for clarity, i.e., we will write $\piref(\cdot)$ instead of $\piref(\cdot | x)$ and similarly for $\rstar$ and $\rhat$.  Note that there is no loss of generality in doing so, as we can take expectations over the marginal distribution of prompts at the end of our analysis if desired. 

We suppose that the learner has access to the following computational model, motivated by the fact that empirical LM inference engines like vLLM \citep{kwon2023efficient} can quickly sample from large language models and evaluate reward models in parallel.
\begin{definition}[Sample-and-Evaluate]\label{def:sample-and-evaluate-framework}
    The learner has access to a \emph{reference model} $\piref$ from which he can sample responses $y \sim \piref$ independently.  In addition, for any sampled response $y$, the learner can query the reward model $\rhat$ to obtain the reward value $\rhat(y)$.  The efficiency of the learner is measured by the total number of samples, $N$, drawn from $\piref$.
\end{definition}
The goal of the learner is to use this sample-and-evaluate access to produce a response $\yhat$ that achieves high true reward $\rstar(\yhat)$ while using as few samples from $\piref$ as possible.  In order to evaluate candidate policies $\pihat$ producing responses $\yhat \sim \pihat$, it is natural to compare them to some \emph{comparator policy} $\pistar$ that achieves high true reward through regret.  While \citet{huang2025best} consider regret in terms of expected true reward, i.e., trying to minimize $\ee_{y \sim \pistar}[\rstar(y)] - \ee_{y \sim\pihat}[\rstar(y)]$, we note that in many practical settings, the true reward is either (i) binary or (ii) accessible only through (potentially noisy) pairwise comparisons.  In the former case, expected true reward drastically simplifies to a search problem and is tightly analyzed in \citet{huang2025best}.  In the latter case, however, the expected numerical value of the reward is less meaningful as the learner will never have access to $\rstar$ itself.  Instead, we consider the \emph{win-rate} induced by $\rstar$ of the learned policy against the comparator policy $\piref$, defined to be:
\begin{align}\label{eq:win-rate-definition}
    R^{\rstar}(\pi) = \pp_{y \sim \pi, y' \sim \piref}(\rstar(y) > \rstar(y')) + \frac 12 \cdot \pp_{y \sim \pi, y' \sim \piref}\left( \rstar(y) = \rstar(y') \right).
\end{align}
More generally, $\rstar$ could be replaced with another reward function, such as $\rhat$, and \eqref{eq:win-rate-definition} would define the win-rate under that reward function. We remark that while \eqref{eq:win-rate-definition} uses the base model from which we draw candidate samples, $\piref$, as the comparator policy, in principle one could evaluate win-rate against an arbitrary comparison distribution $q$.  This latter setting is relevant in practice, as it corresponds to AlpacaEval-style pipelines \citep{dubois2023alpacafarm,dubois2024length} where $q$ corresponds to a strong comparator like GPT-4.  For the sake of simplicity and clarity of presentation, we will first present our results for the special case where $q = \piref$, and then show that for general $q$, under natural closeness assumptions between $q$ and $\piref$, the same arguments yield qualitatively similar results, which we detail in \Cref{sec:win_rate_arbit}. It is immediate that expected rewards coincide (up to a constant factor) with the notion of win rate in the special case where all reward functions are binary. Given a comparator policy $\pistar$, we wish to minimize the \emph{regret} of a candidate policy $\pihat$, i.e., $\reg(\pihat) = R^{\rstar}(\pistar) - R^{\rstar}(\pihat)$.  The fundamental question we ask is:
\begin{quotation}
    \emph{Within the sample-and-evaluate framework, how many samples from $\piref$ are necessary and sufficient to achieve low regret with respect to a given comparator policy $\pistar$ and how should we select which output to return?}
\end{quotation}
In order to answer this question, we have to better understand in what sense the reward model $\rhat$ is close to the true reward $\rstar$ and in what sense the reference policy $\piref$ is a good proxy for the comparator policy $\pistar$.  We discuss these two questions in turn below.

\subsection{In what sense is $\rhat$ close to $\rstar$?}
\label{sec:error-notions}

Our first goal is to formalize the sense in which the learned reward model $\rhat$ is close to the true reward $\rstar$.
Critically, the reward model $\rhat$ is trained on `typical' outputs, which we formalize by assuming that $\rhat$ is close to $\rstar$ on samples drawn from $\piref$.  While \citet{huang2025best} supposed that the \emph{squared error}, $\epssq(\rhat) = \ee_{y \sim \piref}\left[ \left( \rhat(y) - \rstar(y) \right)^2 \right]$ is small, we argue that this is not the most appropriate metric in our setting for two reasons.  First, squared error is not \emph{scale-invariant}, i.e., multiplying $\rhat$ by a positive constant does not change the behavior of any selection algorithm that uses $\rhat$ to select outputs, but it does change the squared error.  Second, assuming small squared error is somewhat unnatural when rewards are only accessible through pairwise comparisons.  Indeed, there is a philosophical point to be made here: if rewards are only accessible through pairwise comparisons, then it is not clear that there is any meaningful ground-truth numerical reward function $\rstar$ at all beyond the ordinal information it induces.
Numerical reward functions are often trained by running Maximum Likelihood Estimation (MLE) on pairwise comparison data assuming the Bradley-Terry model \citep{bradley1952rank,rafailov2023direct}, which is known to suffer sample complexity scaling exponentially in the maximum reward value\footnote{Indeed, it arises naturally as a simple calculation reveals that the maximum inverse Fisher information scales similarly.} \citep{rosset2024direct,xie2024exploratory}. Instead, we propose to measure the quality of $\rhat$ through its \emph{pairwise win-rate error} with respect to $\rstar$:
\iftoggle{colt}{
\begin{align}
\label{eq:pairwise-win-rate-error}
\epspw(\rhat)
\defeq
\ee_{y,y'\sim\piref}\!\left[|
\phi_{\rhat}(y, y')-
\phi_{\rstar}(y, y')|
\right],
\end{align}
where $\phi_r(y, y')\defeq\indic\{r(y)>r(y')\}+\tfrac12\,\indic\{r(y)=r(y')\}$ encodes the outcome of a single pairwise comparison under reward function $r$.
}{
    \begin{align}
\label{eq:pairwise-win-rate-error}
\epspw(\rhat)
\defeq
\ee_{y,y'\sim\piref}\!\left[|
\phi_{\rhat}(y, y')-
\phi_{\rstar}(y, y')|
\right] \quad \text{and} \quad \phi_r(y, y')\defeq\indic\{r(y)>r(y')\}+\tfrac12\,\indic\{r(y)=r(y')\}
\end{align}
}
 Here, $\epspw(\rhat)$ quantifies the amount by which $\rhat$ disagrees with $r^\star$ on the outcome of two `typical' responses sampled  from $\piref$.  In particular, the absolute difference $|
\phi_{\rhat}(y, y')-
\phi_{\rstar}(y, y')|$ vanishes when $\rhat$ and $\rstar$ agree on the pairwise comparison outcome between $y$ and $y'$, is $\nicefrac{1}{2}$ when one of the reward functions has a tie and the other does not, and penalizes disagreements between $\rhat$ and $\rstar$ that flip the win-versus-loss outcome with a full penalty of $1$.    This definition of $\epspw(\rhat)$ is thus closely aligned wth the notion of win rate, which rewards strict wins more than ties. 
This alignment with the win-rate functional (which also half-counts ties and ranges between $0$ and $1$) hearkens back to our discussion on win-rate in \Cref{sec:intro} and will prove to be important in our later analyses.

This metric is clearly scale-invariant and more closely matches the way reward models are trained in practice \citep{ouyang2022training,rafailov2023direct,lambert2024tulu}, suggesting that \eqref{eq:pairwise-win-rate-error} may be a more reasonable quantity to expect controlled than $\epssq$.  In the Bradley-Terry model, an elementary analysis shows that the tightest relationship between the two quantities scales exponentially in the maximum reward value, suggesting that when learning with finitely many samples from $\piref$, we may expect $\epspw(\rhat) \ll \epssq(\rhat)$.  In general, however, $\epspw(\rhat)$ is \emph{incomparable} to $\epssq(\rhat)$ as we show in \Cref{sec:appendix_diff_notions_of_coverage}.  Indeed, the scale invariance of $\epspw$ immediately implies that there are examples where $\epspw = 0$ but $\epssq$ is arbitrary.  Another sense in which requiring $\epssq(\rhat)$ to be small is a stronger assumption than requiring $\epspw(\rhat)$ to be small is that we cannot recover small regret in terms of expected true reward from small $\epspw(\rhat)$ (cf. \Cref{prop:forallA-EM-impossibility} in \Cref{sec:appendix_diff_notions_of_coverage}), in marked contrast to small $\epspw(\rhat)$ \citep{huang2025best}; as we discussed above, in many situations, winrate is the more natural notion of reward.

\subsection{In what sense is $\piref$ a good reference model?}
\label{sec:pw-error}

In addition to the quality of $\rhat$ as a proxy for $\rstar$, we must also consider the quality of the $\piref$.  Intuitively, if $\piref$ places much less mass on high-reward outputs than the comparator policy $\pistar$, then $N$ must be very large in order to obtain comparable performance to the comparator.  Thus, any notion of sample complexity must depend on some measure of discrepancy between $\pistar$ and $\piref$.  
While prior work has supposed that $\chisq{\pistar}{\piref}$ is small, where we recall that $\chisq{\nu}{\mu} = \ee\left[ \left( \nicefrac{d\nu}{d\mu} - 1 \right)^2 \right]$ is the $\chi^2$-divergence between two distributions $\nu$ and $\mu$, this assumption is somewhat ad hoc.  Indeed, the requirement of small $\chisq{\pistar}{\piref}$ falls directly out of the assumption that $\epssq(\rhat)$ is small due to the duality between $\chi^2$-divergence and squared error; thus while natural in that context, there is no \emph{a priori} reason to prefer $\chi^2$-divergence over any other discrepancy measure.  We instead propose to measure this discrepancy through the fundamental notion of $\cE_M$-divergence \citep{polyanskiy2010channel,block2023sample}, where for any $M \geq 0$, we let
\begin{align}\label{eq:em-divergence-definition}
    \cE_M(\pistar \| \piref) = \ee_{y \sim \piref}\left[ \left( \nicefrac{d\pistar}{d\piref}(y) - M \right)_+ \right]. %
\end{align}
While the $\cE_M$-divergence has many applications \citep{polyanskiy2010channel,polyanskiy2025information}, its relevance here stems from the fact that \citet{block2023sample} demonstrated its intimate connection to the problem of approximate sampling: in order to generate samples from a target distribution $\pistar$ using only samples from a reference distribution $\piref$ within tolerance (in total variation distance) $\epsilon$, any algorithm requires at least $M$ samples from $\piref$, where $\cE_M(\pistar \| \piref) \leq \epsilon$.  Such a characterization is natural in the sample-and-evaluate framework of \Cref{def:sample-and-evaluate-framework} because a natural skyline for our setting would be to approximately sample from $\pistar$ using samples from $\piref$ and then return the sampled output, if the learner had access to knowledge of $\pistar$ beyond that furnished by $\rhat$.  Moreover, the $\cE_M$-divergence is intimately related to the recently introduced notion of \emph{coverage} \citep{chen2025coverage}, which has been fruitfully applied to understand LMs in a variety of settings.  Coverage measures the tail discrepancy between two distributions and is defined for $M \geq 0$ to be \iftoggle{colt}{
$\Cov_M(\pistar \| \piref) = \pp_{y \sim \pistar} \left( \nicefrac{d \pistar}{d \piref}(y) \geq M \right)$; it
}{
\begin{align}
\Cov_M(\pistar \| \piref) = \pp_{y \sim \pistar} \left( \nicefrac{d \pistar}{d \piref}(y) \geq M \right).
\end{align}
It} is straightforward to see that $\cE_M(\pistar \| \piref) \leq \Cov_M(\pistar \| \piref)$ for all $M \geq 0$.  More generally, $\cE_M$-divergence is a special case of $f$-divergences \citep{polyanskiy2025information}, defined for convex $f: \rr_{\geq 0} \to \rr_{\geq 0}$ with $f(1) = f'(1) = 0$ to be 
\iftoggle{colt}{
$\divf{\nu}{\mu} = \ee_{y \sim \mu}\left[ f\left( \nicefrac{d\nu}{d\mu}(y) \right) \right]$.
}{
\begin{align}
\divf{\nu}{\mu} = \ee_{y \sim \mu}\left[ f\left( \nicefrac{d\nu}{d\mu}(y) \right) \right].
\end{align}
} This family includes many well-known divergences, such as KL-divergence, Total Variation distance, and $\chi^2$-divergence and a key result of \citet{block2023sample} shows that for any such $f$-divergence, we have 
$\cE_M(\pistar \| \piref) \leq \nicefrac{M \cdot \divf{\pistar}{\piref}}{f(M)}$.

With these notions of reward model and reference model quality in place, we may now refine our central question to be: 
\iftoggle{colt}{
}{
\begin{quotation}
\emph{Within the sample-and-evaluate framework, how many samples from $\piref$ are necessary and sufficient to achieve low regret with respect to a given comparator policy $\pistar$ as a function of the pairwise win-rate error $\epspw(\rhat)$ and $\cE_M(\pistar \| \piref)$?}
\end{quotation}
}

\section{Optimal Performance with Best-of-$N$}
\label{sec:BoN}

In this section we begin answering the question posed at the end of \Cref{sec:preliminaries} by providing upper bounds on the win-rate regret achieved by the most na{\"i}ve algorithm in the sample-and-evaluate framework, Best-of-$N$ (BoN):
\begin{align}\label{eq:bon}
    \pihatbon = \argmax_{y \in \{y_1, \ldots, y_N\}} \rhat(y), \quad y_i \sim \piref.
\end{align}
The BoN algorithm in \eqref{eq:bon}, introduced in \citep{stiennon2020learning}, simply pretends that $\rstar = \rhat$ and returns the sample with the highest predicted reward among $N$ i.i.d.\ draws from the reference policy $\piref$.  While BoN is susceptible to overoptimization, due to the fact that $\rhat$ may not approximate $\rstar$ on typical outputs drawn from $\pihatbon$ even if it does so on typical outputs drawn from $\piref$, the algorithm has seen widespread practical use. We have the following upper bound on the performance.
\begin{theorem}
\label{thm:bon-win-rate-upper-bound}
For any $N \geq 1$, reference model $\piref$, comparator $\pistar\ll\piref$, ground truth reward $\rstar$, and learned reward model $\rhat$, it holds that 
\begin{align}\label{eq:bon_ub}
\R^{\rstar}(\pistar)-\R^{\rstar}(\pihatbon)
\lesssim
N \cdot \epspw(\rhat) \cdot \log\left( \nicefrac{1}{\epspw(\rhat)} \right) + \cE_{N / \log(1/\epspw(\rhat))}\left( \pistar \| \piref \right).
\end{align}
\end{theorem}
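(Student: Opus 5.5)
The plan is to reduce everything to ``quantile'' functions under $\piref$ and then compare BoN to a bounded-density skyline. For a reward $r$, define $F_r(y) \defeq \ee_{y' \sim \piref}[\phi_r(y,y')]$. By \eqref{eq:win-rate-definition}, $\R^{r}(\pi) = \ee_{y\sim\pi}[F_r(y)]$ for every policy $\pi$. Write $F^\star = F_{\rstar}$ and $\hat F = F_{\rhat}$. Jensen's inequality applied to the inner expectation in \eqref{eq:pairwise-win-rate-error} gives
\begin{align}
\ee_{y\sim\piref}\bigl[|\hat F(y) - F^\star(y)|\bigr] \le \epspw(\rhat) \eqdef \epsilon .
\end{align}
We then decompose the regret as
\begin{align}
\R^{\rstar}(\pistar) - \R^{\rstar}(\pihatbon) = \Bigl(\ee_{\pistar}[F^\star] - \ee_{\pihatbon}[\hat F]\Bigr) + \ee_{\pihatbon}\bigl[\hat F - F^\star\bigr],
\end{align}
and bound the two terms separately.

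\textbf{Second term (transfer error).} The law of $\pihatbon$ has density with respect to $\piref$ that is at most $N$ everywhere. It is also exponentially small on the region $\{\hat F \le 1 - L/N\}$: heuristically the density is $N \hat F^{N-1}$, and ties can be handled by noting that the selected point is the $\rhat$-maximum of $N$ i.i.d.\ draws. I would split on this region with $L = \log(1/\epsilon)$. On the complement the density is at most $N$, so that piece contributes at most $N\epsilon$; the bad region has BoN-mass at most $e^{-L} = \epsilon$. This gives
\begin{align}
\ee_{\pihatbon}\bigl[|\hat F - F^\star|\bigr] \lesssim N\epsilon + \epsilon .
\end{align}
The crude bound $N\epsilon$ also suffices here; the truncation is what lets the final $\cE_M$ term carry the $\log$.

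\textbf{First term (optimization vs.\ skyline).} First, by exchangeability of the $N$ candidates and a fresh $y' \sim \piref$, with ties half-counted, one gets
\begin{align}
\ee_{\pihatbon}[\hat F] \ge \tfrac{N}{N+1}.
\end{align}
Second, for any $M$, writing $w = d\pistar/d\piref$,
\begin{align}
\ee_{\pistar}[F^\star] \le \ee_{\piref}\bigl[\min(w,M)\, F^\star\bigr] + \cE_M(\pistar\|\piref).
\end{align}
The first piece is at most $\sup\{\ee_{\piref}[g F^\star] : 0\le g\le M,\ \ee_{\piref} g\le 1\}$. Since $F^\star$ is (super-)uniform under $\piref$, this supremum is attained by putting mass on the top $1/M$ quantile and is at most $1 - \tfrac{1}{2M}$. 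Taking $M = N/\log(1/\epsilon)$ makes $1 - \tfrac{1}{2M} \le \tfrac{N}{N+1}$ once $\log(1/\epsilon) \gtrsim 1$. Hence the first term is at most $\cE_{N/\log(1/\epsilon)}(\pistar\|\piref)$, plus lower-order slack that is absorbed into $N\epsilon\log(1/\epsilon)$. In the degenerate regime where $\log(1/\epsilon)$ is too small, the bound is trivial because regret is at most $1$.

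\textbf{Main obstacle.} I expect the hardest step to be the careful treatment of atoms and ties in both $\rstar$ and $\rhat$. This bears on two places: the exact exchangeability statement $\ee_{\pihatbon}[\hat F]\ge N/(N+1)$, and the quantile bound $\sup_g \ee[gF^\star] \le 1 - \Omega(1/M)$ when $F^\star$ is not exactly uniform under $\piref$. The half-counting convention in $\phi_r$ should be exactly what makes both hold. My first approach would be a randomized tie-breaking coupling that converts $F_r$ into an exactly uniform variable without changing win-rates; after that coupling, the remaining steps are routine.
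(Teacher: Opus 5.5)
Your transfer term is fine: $d\pihatbon/d\piref\le N$ together with $\ee_{\piref}|\hat F-F^\star|\le\epsilon$ gives $N\epsilon$. The first term, however, has a genuine gap. The inequality $\ee_{\pihatbon}[\hat F]\ge \frac{N}{N+1}$ is false whenever $\rhat(y)$ has atoms under $\piref$, and the statement covers arbitrary $\rhat$. The $\rhat$-win-rate of BoN depends only on $\max_i\rhat(y_i)$, not on the tie-breaking rule. For example, if $\rhat$ is constant, every policy has $\R^{\rhat}=\frac12$.

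Randomized tie-breaking does not rescue this. With randomized quantiles $\tilde u_i$ one has $\ee[\max_i\tilde u_i]=\frac{N}{N+1}$. But $\hat F(y_{\mathrm{out}})$ is the mean of $\tilde u(y_{\mathrm{out}};V)$ under a \emph{fresh} seed $V$, whereas the seed attached to the selected candidate is the largest among the tied top candidates. So the coupling overstates the win-rate.

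The underlying problem is structural. You bound $\pistar$'s $\rstar$-win-rate by the universal ceiling $1-\frac{1}{2M}$ and BoN's $\rhat$-win-rate by a universal floor, without ever relating the two. Take $\rhat=\rstar=\indic_A$ with $\piref(A)=a$ and $\pistar=\piref(\cdot\mid A)$. Your first term then equals $\frac12(1-a)^N>0$. Yet for $1/a\le M\le (N+1)/2$ you have $\cE_M(\pistar\|\piref)=0$ and slack $\frac{1}{N+1}-\frac{1}{2M}\le 0$, so your chain would certify that a positive quantity is nonpositive. BoN genuinely pays an $e^{-\Theta(N/M)}$ probability of missing the top atom, and this is exactly where $\log(1/\epsilon)$ enters the theorem; your argument never produces such a term. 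If $\rhat(y)$ has a continuous law under $\piref$, your argument is valid and even gives $\cE_{(N+1)/2}(\pistar\|\piref)+N\epsilon$ with no logarithm.

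To repair it, transfer the \emph{capped comparator} to $\rhat$ before comparing with BoN, as the paper does. Using your own estimate $\ee_{\piref}|\hat F-F^\star|\le\epsilon$, you get $\ee_{\pistar}[F^\star]\le \ee_{\piref}[\min(w,M)\hat F]+M\epsilon+\cE_M(\pistar\|\piref)$.

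Then show BoN nearly dominates every $M$-capped sub-policy under $\hat F$:
\begin{itemize}
\item Since $\hat F$ is nondecreasing in $\rhat$, $\hat F(y_{\mathrm{out}})=\max_i\hat F(y_i)$.
\item With $q_s=\piref(\hat F>s)$, concavity of $x\mapsto 1-e^{-x}$ on $[0,N/M]$ gives $\pp_{\pihatbon}(\hat F>s)=1-(1-q_s)^N\ge(1-e^{-N/M})\min\{1,Mq_s\}$.
\item The measure $\min(w,M)\piref$ gives $\{\hat F>s\}$ mass at most $\min\{1,Mq_s\}$.
\item Integrating over $s$ yields $\ee_{\pihatbon}[\hat F]\ge \ee_{\piref}[\min(w,M)\hat F]-e^{-N/M}$.
\end{itemize}
The paper obtains the same $e^{-N/M}$ loss differently, via an approximate rejection sampler for the TV-projection $\pistar_M$ that BoN dominates batch by batch.

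Altogether the regret is at most $\cE_M(\pistar\|\piref)+(M+N)\epsilon+e^{-N/M}$, and choosing $M=N/\log(1/\epsilon)$ gives the claim.
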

There are two terms in the upper bound \eqref{eq:bon_ub}.  The first term quantifies the degree to which reward hacking occurs: as $N$ increases, the algorithm is more likely to select outputs on which $\rhat$ and $\rstar$ disagree.  The second term quantifies the extent to which, even if $\rhat = \rstar$, the BoN algorithm is able to find high-reward outputs comparable to those of the comparator $\pistar$ using only samples from $\piref$; here the $\cE_N$-divergence naturally arises as the relevant measure of discrepancy between $\pistar$ and $\piref$ due to the connection between $\cE_N$ and approximate rejection sampling outlined in \citet{block2023sample}.  Applying a key lemma from that work relating $\cE_M$-divergence to the $f$-divergence (cf. \Cref{sec:preliminaries}), we may further bound \eqref{eq:bon_ub} by 
$N \cdot \epspw(\rhat) + \nicefrac{N \cdot \divf{\pistar}{\piref}}{f(N)}$; in the special case of $\chi^2$-divergence, tuning $N$, we get (up to a log factor) an optimal rate of $\sqrt{\epspw(\rhat) \cdot \chi^2(\pistar\|\piref)}$ of the regret, which bears at least cosmetic similarity to the results of \citet{huang2025best}, although we emphasize that here we are considering win rate as opposed to expected reward.  In the special case of binary rewards, however, expected reward becomes a special case of win rate (cf. \Cref{sec:preliminaries}), and thus the above upper bound recovers the result of \citet{huang2025best} in this setting.

In contradistinction to the case of expected reward, where BoN's performance is \emph{suboptimal} statistically, we find that BoN is in fact \emph{optimal} for win-rate regret up to constant factors, as established by the following lower bound.
\begin{theorem}
\label{thm:skyline-lb}
For any non-atomic $\piref$, any $\pihat_\cA$ in the sample-and-evaluate framework, and any $\epsilon < \nicefrac 12$, there exist reward functions $\rhat$, $\rstar$ satisfying $\epspw(\rhat) \leq \epsilon$ and a comparator policy $\pistar\ll\piref$ for which
\begin{align}
\label{eq:skyline-lb-statement}
    R^{\rstar}(\pistar) - R^{\rstar}(\pihat_\cA) \gtrsim \inf_{M} \left\{ M \cdot \epsilon + \cE_M(\pistar \| \piref) \right\}.
\end{align}
\end{theorem}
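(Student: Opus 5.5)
We will prove the lower bound by building, for the given $\epsilon$, a finite family of hard instances and then averaging over them. The adversary gets to choose $\rhat$, $\rstar$, and $\pistar$ after seeing the algorithm. Two regimes should appear. In the first, the algorithm never observes a response from a small, high-reward region, so it pays roughly the $\cE_M$ term. In the second, the learned reward $\rhat$ is uninformative on a set of $\piref$-mass about $\epsilon$, so the algorithm cannot tell which candidates are truly good.

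Since $\piref$ is non-atomic, we may identify it with $\mathrm{Unif}[0,1]$ via a measurable bijection, i.e. a quantile transform. All rewards and comparators can then be defined on intervals.

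The basic construction, for a fixed $\delta\in(0,\epsilon]$, is as follows.
\begin{enumerate}
\item Partition $[0,1]$ into $K \approx 1/\delta$ cells $A_1,\dots,A_K$ of mass $\delta$ each.
\item Set $\rhat \equiv 0$ on $\bigcup_{k\le K'} A_k$ for some block of $K'$ cells whose total mass is at most about $2\epsilon$, and let $\rhat$ be, say, decreasing in $y$ elsewhere.
\item For a hidden index $j$, set $\rstar_j$ equal to $\rhat$ except that $\rstar_j = \Rmax$ on $A_j$, so that $A_j$ is the unique top cell.
\end{enumerate}
Then $\epspw(\rhat)\lesssim \delta \le \epsilon$: the two rewards disagree only on pairs with at least one endpoint in $A_j$, and there only half-penalized ties are flipped. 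To keep this exactly at most $\epsilon$ we rescale constants.

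Take $\pistar_j = \piref(\cdot\mid A_j)$. Its win-rate is $R^{\rstar_j}(\pistar_j)\approx 1-\delta/2$, and its density ratio is $1/\delta$ on $A_j$, so $\cE_M(\pistar_j\|\piref) = (1-M\delta)_+$. The algorithm's $\rhat$-queries carry no information about $j$ within the ambiguous block, because $\rhat$ is identical across $j$. Its output is a measurable function of its samples and internal randomness, so averaging over a uniformly random $j\in\{1,\dots,K'\}$ gives
\begin{align}
\frac{1}{K'}\sum_{j} \pp_{\pihat_\cA}(\yhat\in A_j) \le \frac{1}{K'} .
\end{align}
When $\yhat\notin A_j$, the algorithm's win-rate is at most that of the best non-$A_j$ response. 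We design $\rhat$ outside the block to rank strictly below $A_j$ in the $\rstar_j$ order, so this loses a constant amount relative to $\pistar_j$. Choosing $K'$ to be a large constant and $\delta \asymp \epsilon$ yields regret $\gtrsim 1$ whenever $\epsilon$ is a constant. To get the interpolating rate we refine the construction: make the comparator's advantage only $\eta$ instead of constant, by mixing, $\pistar = (1-\eta)\piref + \eta\,\piref(\cdot\mid A_j)$. This gives regret $\gtrsim \eta$ with $\cE_M(\pistar\|\piref) = \eta(1-M\delta)_+$, and hence $\cE_M \approx \eta$ for $M\le 1/(2\delta)$.

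The key step is then matching the infimum. With $\delta \asymp \epsilon$ and $\eta$ a constant, the regret $\gtrsim \eta$ should dominate $\inf_M\{M\epsilon+\cE_M(\pistar\|\piref)\}$. To see this, take $M = 0$, where $\cE_0 = 1$, and $M\approx 1/\delta$, where $M\epsilon\approx 1$ and $\cE_M\approx 0$. The infimum is therefore at most a constant, and it is at least a constant because for $M<1/(2\delta)$ we have $\cE_M\gtrsim\eta$. So the right-hand side of \eqref{eq:skyline-lb-statement} is $\Theta(1)$ and our regret is $\Theta(1)$. The statement only asks for existence of some instance, and a single family works for every $\epsilon<1/2$. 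This is why we will choose constants so that both sides are order one.

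The main obstacle will be verifying $\epspw(\rhat)\le\epsilon$ exactly while keeping a constant regret gap. This means carefully accounting for the half-penalty tie pairs inside the flat block, whose contribution is of order mass$(A_j)\cdot$mass(block). If the constant-size instance proves too crude, we will instead let $\delta$ range and fall back on the mixture parametrization above to cover each scale of the infimum.
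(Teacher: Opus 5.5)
There is a genuine gap at the step where you assert that an output $\yhat\notin A_j$ ``loses a constant amount relative to $\pistar_j$.'' Win-rate is a quantile functional. A response ranked just below $A_j$ under $\rstar_j$ beats almost every $y'\sim\piref$ outside $A_j$, so its win-rate is about $1-\delta$, compared with $R^{\rstar_j}(\pistar_j)=1-\delta/2$. Ranking strictly below $A_j$ therefore costs only about $\piref(A_j)/2=\delta/2$.

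In your instance an algorithm can actually attain this. Off $A_j$ you have $\rstar_j=\rhat$, and $\rhat$ is strictly decreasing outside the flat block. So BoN under $\rhat$ returns a sample near the top of the non-block region, with $\rstar_j$-win-rate $1-\delta-O(1/N)$ and hence regret $O(\delta+1/N)$. Meanwhile, for $\pistar_j=\piref(\cdot\mid A_j)$ with $\delta\le\epsilon$, the right-hand side $\inf_M\{M\epsilon+(1-M\delta)_+\}$ is at least $1$. Your argument therefore only covers constant $\epsilon$, not all $\epsilon<\tfrac12$.

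The mixture comparator does not repair this. Since $R^{\rstar_j}(\piref)=\tfrac12$, the mixture $(1-\eta)\piref+\eta\,\piref(\cdot\mid A_j)$ has win-rate $\tfrac{1-\eta}{2}+\eta\big(1-\tfrac{\delta}{2}\big)$. BoN exceeds this once $\delta$ is small and $N$ is large, so the ``regret'' can be negative. Your $\cE_M$ formula for the mixture is also off: the density ratio on $A_j$ is $1-\eta+\eta/\delta$, so $\cE_M$ already vanishes near $M\approx\eta/\delta$, not at $1/(2\delta)$. Separately, pairs with one point in $A_j$ and one outside the block are full flips, not half-penalized ties, so $\epspw\approx 2\delta$. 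That last error only costs a constant factor.

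The missing idea is to make $\rstar$ \emph{flat} off the planted set, so that every non-planted response ties with most of $\piref$ and has win-rate near $\tfrac12$. Take $\rhat\equiv 0$ on all of $\cY$ and $\rstar=\indic_A$. Then $\epspw(\rhat)=\piref(A)(1-\piref(A))$ exactly, and $R^{\rstar}(\pi)=\tfrac12(1-\piref(A))+\tfrac12\pi(A)$, so the regret equals $\tfrac12(\pistar(A)-\pihat(A))$.

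Because $\rhat$ no longer depends on $A$, $\pihat$ is a fixed law, and your averaging argument goes through. Take $K=\lfloor 1/\delta\rfloor$ disjoint cells of mass $\delta=\min\{\epsilon,\tfrac14\}$; some cell has $\pihat(A_j)\le\tfrac43\delta\le\tfrac13$. With $\pistar=\piref(\cdot\mid A_j)$ the regret is at least $\tfrac13$. The right-hand side is at most $\epsilon+\cE_1(\pistar\|\piref)\le\tfrac32$.

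This is the paper's construction, with two differences. The paper finds the low-mass set via Markov's inequality on $d\pihat/d\piref$, choosing $A\subseteq\{d\pihat/d\piref\le 2\}$, rather than by averaging. It also uses a comparator with density $M^\star$ on $A$ and a constant elsewhere, which gives hard instances at every scale of the infimum rather than only at order one.
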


This result shows that unlike in the expected-reward setting, BoN is in fact optimal up to a logarithmic factor for win-rate regret within the sample-and-evaluate framework, which helps explain its widespread practical use despite its simplicity.  This observation gently pushes back against the conclusion reached by \citet{huang2025best} that observed that, when optimizing for expected reward, BoN is statistically suboptimal and more sophisticated algorithms are necessary to achieve optimal performance.  As we have emphasized repeatedly, however, we believe that win rate is often a more natural objective than expected reward in many practical settings, and thus the suggestion that BoN should be avoided for being statistically suboptimal may be overstated.  In addition to the statistical lower bound above, we also provide a complementary computational lower bound.
\begin{proposition}
\label{prop:query-lb-hit-topset}
For any $M \geq 2$ and $\delta < \nicefrac{1}{2}$, there exist reward functions $\rstar =\rhat$ and policies $\pistar, \piref$ with $\cE_M(\pistar \| \piref) = 0$ such that within the sample-and-evaluate framework, any policy $\pihat$ achieving regret $\R^{\rstar}(\pistar) - \R^{\rstar}(\pihat) \leq \delta$
must use at least $N = \widetilde{\Omega}(M)$ samples from $\piref$.
\end{proposition}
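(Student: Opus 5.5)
We prove \Cref{prop:query-lb-hit-topset} with a needle-in-a-haystack construction. Fix $M \geq 2$ and take $\piref$ uniform on $[0,1]$, or on a finite set if convenient. Let $A \subset \cY$ be a set with $\piref(A) = 1/M$, say $A = [0, 1/M)$. Set $\pistar = \piref(\cdot \mid A)$. Then $\nicefrac{d\pistar}{d\piref} = M\indic_A \leq M$, so $\cE_M(\pistar \| \piref) = 0$. For the rewards, take $\rstar = \rhat$ to be strictly increasing on $A$ and to assign the value $0$ outside $A$; for instance $\rstar(y) = 1 + (1/M - y)$ on $A$ and $0$ off $A$. To rule out algorithms that know $A$ in advance, we randomize $A$ over a family of disjoint (or translated) sets of mass $1/M$. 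The learner only sees samples from $\piref$, which is the same for every choice of $A$, together with the reward values at those samples. So until a sample lands in $A$, the learner has no information about $A$.

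First, compute the comparator's win-rate. A draw $y \sim \pistar$ lies in $A$ and has $\rstar(y) \geq 1 > 0$, so it beats every $y' \notin A$ (mass $1 - 1/M$). Within $A$, the rewards are continuous and strictly ordered, so it wins half the time on average. Hence $R^{\rstar}(\pistar) = 1 - 1/M + \tfrac12 \cdot \tfrac1M = 1 - \tfrac{1}{2M}$. Any output $\yhat \notin A$ has reward $0$ and therefore win-rate at most $\tfrac12$, counting the tie against $\cY \setminus A$ as half. Consequently, if $p = \pp(\yhat \in A)$, then $R^{\rstar}(\pihat) \leq p + (1-p)/2$, and the regret is at least $(1-p)/2 - \tfrac{1}{2M}$.

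Requiring regret at most $\delta < \tfrac12$ therefore forces $p \geq 1 - 2\delta - 1/M$, which is a constant bounded away from zero once $M$ is moderately large. Small $M$ only contributes constants and is absorbed into the $\widetilde{\Omega}$.

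The main step is to show that $p$ is small unless $N \gtrsim M$. The probability that none of the $N$ samples lands in $A$ is $(1-1/M)^N \geq 1 - N/M$. On that event, the learner's view is independent of which member of the randomized family is $A$. Its output is then a measurable choice made without knowledge of $A$, and we average over a family of $K$ disjoint candidate sets. Here $K = M$, since $M$ sets of mass $1/M$ tile $\cY$. The output then lands in the true $A$ with probability at most $1/K$. This gives $p \leq N/M + 1/M$, so $N \geq M(1 - 2\delta) - 2 = \Omega(M)$. By the averaging argument, some deterministic choice of $A$ must be hard.

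The point I expect to need the most care is that the output may be any $y \in \cY$, not only a sampled one. The averaging over $A$ handles this, but the learner must not gain information about $A$ from reward values outside $A$. This holds because $\rstar$ is identically $0$ off $A$ for every member of the family, so the observed values carry no information about $A$. The $\widetilde{\Omega}$ slack covers any logarithmic loss if one instead argues via a high-probability event.
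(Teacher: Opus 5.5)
\textbf{Main gap: quantifier order.} The proposition asks for one instance $(\piref,\pistar,\rstar=\rhat)$ that defeats every algorithm. Your averaging over the family $\{A_j\}$ only shows the reverse order: for each algorithm, some $A_j$, depending on that algorithm, is hard.

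In the model you adopt, where $\yhat$ may be any point of $\cY$, the single-instance statement is actually false. For your fixed $A=[0,\nicefrac1M)$, the algorithm that draws no samples and returns $\yhat=0$ has $\rstar(\yhat)=1+\nicefrac1M$. Its win-rate is therefore $1$, and its regret is $-\nicefrac{1}{2M}<0$.

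The missing ingredient is the restriction the paper's proof relies on: in the sample-and-evaluate framework, the returned response is one of the drawn samples, $\yhat\in\{y_1,\dots,y_N\}$. With this restriction, no randomization over $A$ is needed. The paper takes $\rstar=\rhat=\indic\{y\in I\}$ with $I=[1-\nicefrac1M,1]$ (your fixed $A$ would work equally well). With probability $(1-\nicefrac1M)^N$, for every algorithm, no sample lands in $I$. On that event the output has reward $0$ and regret exactly $\nicefrac12$. So regret at most $\delta$ forces $(1-\nicefrac1M)^N\le 2\delta$, i.e.\ $N\ge \log(\nicefrac{1}{2\delta})/\log(\nicefrac{M}{M-1})\gtrsim M\log(\nicefrac{1}{2\delta})$. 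Keeping this exact no-hit probability, rather than the Bernoulli bound $1-\nicefrac NM$, also recovers the $\log(1/\delta)$ dependence that your linear bound $N\ge M(1-2\delta)-2$ discards.

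\textbf{Secondary issue: information leakage.} Your claim that zero reward values ``carry no information about $A$'' is incorrect. Seeing $\rstar(y_i)=0$ rules out the tile containing $y_i$. Conditional on no sample hitting $A_J$, the posterior of $J$ is uniform over the unsampled tiles. An optimal learner's conditional success probability therefore strictly exceeds $1/K$; for $N<K$ it equals $1/\bigl(K(1-1/K)^N\bigr)$.

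Your bound $p\le (N+1)/M$ for the averaged instance survives only as a joint statement. On the no-hit event, the output coincides with $g(y_{1:N},\xi)$, the learner's rule applied to the all-zero view, and its law does not depend on $J$. Hence $\pp(\text{no hit},\ \yhat\in A_J)\le \pp(g(y_{1:N},\xi)\in A_J)$, which averages over $J$ to at most $1/K$. Even repaired this way, the argument gives only the minimax form ($\forall$ algorithm, $\exists$ instance), not the proposition as stated.
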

This bound is \emph{computational} in the sense that if we did not care about computational efficiency (quantified by $N$ in the sample-and-evaluate framework), then it is trivial to achieve small regret with $\pihatbon$, as $\rhat = \rstar$; on the other hand, if we want to achieve small regret, then we have a strong lower bound on the number of samples required.  We emphasize that this lower bound matches the upper bound of \Cref{thm:bon-win-rate-upper-bound} and thus BoN is computationally and statistically optimal for win-rate regret within the sample-and-evaluate framework.

\section{Achieving Optimal Performance without Reward-Hacking}
\label{sec:monotone_algos}

While in the previous section we saw that the na{\"i}ve Best-of-$N$ algorithm given in \eqref{eq:bon} is computationally and statistically optimal for win-rate regret with optimally tuned $N$, it is still susceptible to reward-hacking. That is, as $N$ increases, the algorithm is more likely to select outputs on which $\rhat$ and $\rstar$ disagree, leading to performance that scales non-monotonically in $N$.  While understandable, this is an unfortunate property, as in practice one would like to be able to increase $N$ to improve performance without worrying about overfitting to the reward proxy $\rhat$.  In this section, we present a new algorithm, \emph{$\cE_M$-regularized Best-of-$N$}, that is able to mitigate reward-hacking and achieve performance that is monotone in $N$ while still being statistically and computationally optimal.

Intuitively, we would like a policy that returns high-reward outputs without straying too far from the reference policy $\piref$, as outputs that are unlikely under $\piref$ are more likely to be subject to reward-hacking.  Our analysis and results from the previous section suggest that the appropriate way to measure discrepancy from $\piref$ is via the $\cE_M$-divergence for an appropriate choice of $M$, defined in \eqref{eq:em-divergence-definition}.  Motivated by this observation, we consider the following variational problem:
\begin{align}\label{eq:cov-reg-var-prob}
    \pi_M \in \argmax_{\pi} \ee_{\pi}\left[ \rhat(y) \right] - \Rmax \cdot \cE_M(\pi \| \piref),
\end{align}
whose solution we refer to as the \emph{$\cE_M$-regularized Best-of-$N$ policy}.  Note that it is perhaps not immediate that \eqref{eq:cov-reg-var-prob} is a desirable objective to consider; indeed, while a similar objective, with $\cE_M$ replaced by $\chi^2$, was considered by \citet{huang2025best} in the context of expected reward maximization, in our setting we do not care about the expected reward per se, but rather the win-rate.  Another potential concern is that the variational problem in \eqref{eq:cov-reg-var-prob} is not easily sampled from within the sample-and-evaluate framework, as it is not immediately clear how to draw samples from the optimal policy $\pi_M$.  The second concern is allayed, however, by the following result, which shows that the optimal policy for \eqref{eq:cov-reg-var-prob} takes a particularly simple form.
\begin{lemma}\label{lem:cov-reg-top-quantile}
    Let $\lambda$ be the $(1 - \nicefrac{1}{M})$-quantile of $\rhat$ under $\piref$, i.e. any value satisfying $\pp\left( \rhat(y) \geq \lambda \right) = \nicefrac 1M$.  Then the optimal policy $\pi_M$ solving \eqref{eq:cov-reg-var-prob} is given by $\pi_M(\cdot) = \piref(\cdot \mid \rhat(\cdot) \geq \lambda)$.  
\end{lemma}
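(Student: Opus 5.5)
We prove the lemma by rewriting \eqref{eq:cov-reg-var-prob} as a pointwise optimization over the likelihood ratio $w = \nicefrac{d\pi}{d\piref}$ and then comparing any feasible $w$ against the candidate $w^\star = M\cdot\indic\{\rhat(y)\ge\lambda\}$. Write $\pi$ via $w \ge 0$ with $\ee_{\piref}[w]=1$. If $\pi \not\ll \piref$, the $\cE_M$ term is infinite (or we restrict to $\pi\ll\piref$ by convention), so we may assume absolute continuity. The objective then becomes
\begin{align}
J(w) = \ee_{\piref}\left[ w(y)\,\rhat(y) - \Rmax\,(w(y)-M)_+ \right].
\end{align}
Note that $\piref(\cdot\mid\rhat\ge\lambda)$ has density exactly $w^\star$, since $\pp(\rhat\ge\lambda)=\nicefrac1M$. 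It also has $\cE_M(\pi_M\|\piref)=0$, so $J(w^\star)=M\,\ee_{\piref}[\rhat\,\indic\{\rhat\ge\lambda\}]$.

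\textbf{Step 1: reduce to $w\le M$.} Since $0\le\rhat\le\Rmax$, the marginal gain from raising $w(y)$ above $M$ is $\rhat(y)-\Rmax\le 0$. Given any feasible $w$, we cap it at $M$ and redistribute the removed mass $\ee[(w-M)_+]$ onto the region $\{w<M\}$. The capped part loses at most $\Rmax\cdot\ee[(w-M)_+]$ in the linear term but gains exactly that amount in the penalty, and the redistribution adds nonnegative reward because $\rhat\ge0$. The redistribution is possible since $\ee[\min(w,M)]\le1\le M$, so there is room below the cap. Hence $J$ does not decrease, and we may assume $0\le w\le M$ with no penalty term.

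\textbf{Step 2: bathtub argument.} Now maximize $\ee_{\piref}[w\rhat]$ subject to $0\le w\le M$ and $\ee[w]=1$. This is a fractional knapsack problem, and we compare directly:
\begin{align}
\ee[(w^\star-w)\rhat] = \ee[(w^\star-w)(\rhat-\lambda)] \ge 0.
\end{align}
The equality uses $\ee[w^\star]=\ee[w]=1$. The inequality holds because on $\{\rhat\ge\lambda\}$ we have $w^\star-w = M-w\ge 0$ and $\rhat-\lambda\ge0$, while on $\{\rhat<\lambda\}$ we have $w^\star-w=-w\le0$ and $\rhat-\lambda<0$. Hence $w^\star$ is optimal, which proves the lemma.

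\textbf{Main obstacle.} The delicate point is Step 1: making the redistribution argument rigorous, and confirming that the penalty weight $\Rmax$ is exactly large enough to make exceeding $M$ unprofitable. A cleaner alternative is to bound directly, for any $w$,
\begin{align}
w\rhat - \Rmax(w-M)_+ \le \min(w,M)\,\rhat.
\end{align}
The general case then follows from the bathtub comparison applied to $\tilde w = \min(w,M)$, whose total mass is at most $1$. The same sign argument still works, because the deficit $1-\ee[\tilde w]$ enters as $\lambda(1-\ee[\tilde w])\ge0$ using $\lambda\ge0$.

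The existence of $\lambda$ with $\pp(\rhat\ge\lambda)=\nicefrac1M$ is assumed in the statement, for example when the law of $\rhat$ is non-atomic. Uniqueness up to null sets would additionally require the inequality to be strict whenever $w\ne w^\star$ on a set where $\rhat\ne\lambda$, but the lemma only asserts that $\pi_M$ is an optimizer.
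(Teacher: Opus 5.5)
Your proof is correct, but it takes a different route from the paper's.

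\textbf{The paper's route.} The paper works on a finite $\cY$ (\Cref{lem:EM-reduces-to-capped-finite}). It invokes Slater's condition and strong duality to argue that a dual-optimal multiplier $\alpha^\star$ must satisfy $s(y)-\alpha^\star-\beta\le 0$, and uses this to restrict the inner maximization to $w\le M$. It then reads off the threshold structure from the KKT stationarity and complementary-slackness conditions (\Cref{prop:KKT-top-quantile}), assuming the $\nicefrac{1}{M}$-quantile is hit exactly.

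\textbf{Your route.} You verify optimality of the candidate directly. The pointwise inequality $w\rhat-\Rmax(w-M)_+\le\min(w,M)\,\rhat$ replaces the duality-based reduction. Your identity $\ee[(w^\star-\tilde w)\rhat]=\ee[(w^\star-\tilde w)(\rhat-\lambda)]+\lambda(1-\ee[\tilde w])$ is a weak-duality certificate with the multiplier exhibited explicitly as $\alpha=\lambda$.

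\textbf{What each approach buys.} Your version needs no finiteness of $\cY$ and no appeal to strong duality or to the existence of KKT multipliers. The boundary set $\{\rhat=\lambda\}$ causes no difficulty. The same inequality, using only $\rhat\le\Rmax\le\beta$, also gives the paper's remark that any penalty weight $\beta\ge\Rmax$ yields the same solution. The KKT route instead derives necessity: every optimizer equals $M$ on $\{s>\alpha^\star\}$ and $0$ on $\{s<\alpha^\star\}$. Your comparison as written only certifies that $\pi_M$ is an optimizer, which is all the lemma asserts. Tracking the equality cases in your chain would recover necessity as well.

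\textbf{One step to tighten.} You use $\lambda\ge 0$ without justification. For $M>1$ it is automatic: $\pp(\rhat<\lambda)=1-\nicefrac{1}{M}>0$ together with $\rhat\ge 0$ forces $\lambda>0$. For $M=1$ the candidate policy is $\piref$ itself, even if a negative $\lambda$ is chosen. Optimality then follows directly from $\min(w,1)\le 1$ and $\rhat\ge 0$.
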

\begin{remark}
One might instead consider a different variation of \eqref{eq:cov-reg-var-prob}, where $\Rmax$ is replaced by a general scaling parameter $\beta > 0$.  If $\beta \geq \Rmax$ then this will end up having the same solution, but if $\beta < \Rmax$ then the solution can grow more complicated.
\end{remark} 
\Cref{lem:cov-reg-top-quantile} is proved in \Cref{app:kkt-derivation}, where we use Lagrange multipliers and convexity to derive a remarkably simple characterization of the optimal policy.  Note that $\pi_M$ can be approximately sampled from within the sample-and-evaluate framework by drawing $N$ samples from $\piref$ and returning a sample uniform from the top-$\lceil\nicefrac NM \rceil$ samples according to $\rhat$ with randomized tie-breaking (cf. \Cref{app:monotone_algos}); indeed, the total variation distance between this sampling procedure and $\pi_M$ is bounded by $\nicefrac 1N$ and $\piEMhat$ can be interpreted as doing BoN for $N \lesssim M$, but maintaining monotonicity as $N$ grows. We should emphasize that, in contrast to the procedure of \citet{huang2025best}, $\piEMhat$ is much simpler and does not require any additional estimation or rejection sampling. We have the following bound on the performance of this variant, which we call $\pihat_M$. 
\begin{theorem}
\label{thm:topk-winrate-upper-bound}
For any $M, N \geq 1$, reference model $\piref$, comparator $\pistar\ll\piref$, ground truth reward $\rstar$, and learned reward model $\rhat$, it holds that
\begin{align}\label{eq:cov-reg-winrate-ub}
    R^{\rstar}(\pistar) - R^{\rstar}(\piEMhat) \lesssim \cE_M(\pistar \| \piref) + M \cdot \epw + \frac{1}{N},
\end{align}
\end{theorem}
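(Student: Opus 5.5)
The plan is to split the regret into three pieces, routing through the idealized policy $\pi_M$ of \Cref{lem:cov-reg-top-quantile}:
\begin{align}
R^{\rstar}(\pistar) - R^{\rstar}(\piEMhat) = \underbrace{R^{\rstar}(\pistar) - R^{\rstar}(\pi_M)}_{(\mathrm{I})} + \underbrace{R^{\rstar}(\pi_M) - R^{\rstar}(\piEMhat)}_{(\mathrm{II})}.
\end{align}
Term (II) is at most $\TV(\pi_M,\piEMhat)$, because $R^{\rstar}(\pi)=\ee_{y\sim\pi}[g(y)]$ with $g(y)=\ee_{y'\sim\piref}[\phi_{\rstar}(y,y')]\in[0,1]$. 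By the sampling guarantee stated after \Cref{lem:cov-reg-top-quantile}, this total variation distance is at most $\nicefrac 1N$. If that bound is only asserted informally, I would verify it directly. Each sample lands in the top-$\nicefrac 1M$ quantile set $S=\{\rhat\ge\lambda\}$ independently with probability $\nicefrac 1M$. Conditional on the set of indices landing in $S$, those samples are i.i.d.\ from $\pi_M$. So uniform selection from the top $\lceil N/M\rceil$ produces an exact $\pi_M$ draw except on the event that the count in $S$ deviates from $\lceil N/M\rceil$. I would then control that deviation by a randomized top-$k$ coupling argument (or absorb it into the $M\epw$ term if needed).

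For term (I), write $g^\star(y)=\ee_{y'\sim\piref}\phi_{\rstar}(y,y')$ and $\hat g(y)=\ee_{y'\sim\piref}\phi_{\rhat}(y,y')$, so $g^\star$ is the $\rstar$-quantile and $\hat g$ the $\rhat$-quantile of $y$ under $\piref$. Let $w=\nicefrac{d\pistar}{d\piref}$ and split $w=\min(w,M)+(w-M)_+$. This gives
\begin{align}
\ee_{\pistar}[g^\star] \le \ee_{\piref}[\min(w,M)\,g^\star] + \cE_M(\pistar\|\piref).
\end{align}
The truncated density $\min(w,M)$ has total mass at most $1$ and pointwise bound $M$. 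Among all densities with these constraints, $\ee_{\piref}[h\,g^\star]$ is maximized by putting mass $M$ on the top-$\nicefrac1M$ set of $g^\star$. Thus the first term is at most the value of the ideal policy $\pi^\star_M=\piref(\cdot\mid \rstar\text{ in top }1/M)$.

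It therefore remains to show $\ee_{\pi^\star_M}[g^\star]-\ee_{\pi_M}[g^\star]\lesssim M\epw$. Since both densities are bounded by $M$ with respect to $\piref$, the difference is at most
\begin{align}
M\,\ee_{\piref}|g^\star-\hat g| + \bigl(\ee_{\pi^\star_M}[\hat g]-\ee_{\pi_M}[\hat g]\bigr) + M\,\ee_{\piref}|\hat g-g^\star|.
\end{align}
The middle term is $\le 0$ because $\pi_M$ is exactly the top-$\nicefrac1M$ set of $\hat g$, which is monotone in $\rhat$. For the other two terms, Jensen's inequality gives $\ee_{\piref}|g^\star-\hat g|\le \ee_{y,y'}|\phi_{\rstar}-\phi_{\rhat}|=\epw$. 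Altogether (I) is at most $\cE_M(\pistar\|\piref)+2M\epw$.

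The main obstacle I anticipate is handling ties and atoms. The quantile $\lambda$ with $\pp(\rhat\ge\lambda)=\nicefrac1M$ may not exist, and top-set optimality requires randomized tie-breaking. I would deal with this by working with fractional densities $h\in[0,M]$ of mass $1$ throughout, so that all the "top set" arguments become linear-programming statements valid in general. The finite-$N$ sampling error in (II) is the other delicate point, and I would treat it as the cited $\nicefrac1N$ fact.
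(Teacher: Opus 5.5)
\textbf{Term (I) is fine.} Your argument works: truncate $w$ at $M$, use that the top-$1/M$ set maximizes a linear functional over densities in $[0,M]$, and apply Jensen to get $\ee_{\piref}|g^\star-\hat g|\le\epw$. This gives $\cE_M(\pistar\|\piref)+2M\epw$, which does the job of the paper's (T1) together with its step $\R^{\rhat}(\pistar_M)\le\R^{\rhat}(\pi_M)$.

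\textbf{Term (II) has a genuine gap.} You bound it by $\TV(\pi_M,\piEMhat)$ and take that to be at most $1/N$, but this is false. Suppose $\rhat(y)$ has a continuous law under $\piref$, and let $k=\lceil N/M\rceil$ and $t=1-1/M$.
\begin{itemize}
\item The $\rhat$-rank $u$ of the output of $\piEMhat$ has density $\frac{N}{k}\,\pp(\mathrm{Bin}(N-1,u)\ge N-k)$.
\item This is a step of height $\approx M$ smeared over a window of width $\asymp 1/\sqrt{MN}$ around $t$.
\item Its total variation distance to $\mathrm{Unif}[t,1]$, the rank law of $\pi_M$, is therefore of order $\sqrt{M/N}$. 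For $M=2$ it is $\approx(2\pi N)^{-1/2}$.
\end{itemize}
Your proposed direct verification does not repair this. The number of draws landing in $S$ is $\mathrm{Bin}(N,1/M)$, which differs from $k$ with probability $1-O(\sqrt{M/N})$. Even when it exceeds $k$, the output is uniform over the top $k$ of the in-$S$ draws, which is not a $\pi_M$ draw. So any argument through total variation yields only $\sqrt{M/N}$; this is exactly why the paper settles for that rate with a general comparator $q$ in \Cref{thm:topk-winrate-upper-bound-q-body}. Nor can this error be absorbed into $M\epw$, since it is present even when $\rhat=\rstar$. The main-text remark about a $1/N$ total-variation bound is not what the appendix proof uses, and by the computation above it does not hold.

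\textbf{The missing idea.} The $1/N$ rate comes from linearity in the rank, not from closeness in law. Under $\rhat$ with comparator $\piref$, the win rate of any policy equals $\ee[\tilde u]$ for the randomized rank $\tilde u$ (\Cref{lem:cdf-identity-pf}). The $N$ randomized ranks are i.i.d.\ uniform, so $\ee[\tilde u_{(j)}]=j/(N+1)$ gives exactly
\[
\R^{\rhat}(\piEMhat)=1-\tfrac{k+1}{2(N+1)}\ge 1-\tfrac{1}{2M}-\tfrac{1}{N+1},
\qquad
\R^{\rhat}(\pi_M)=1-\tfrac{1}{2M}.
\]
Smearing the threshold barely moves the mean rank.

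\textbf{How to fix (II).} Carry out the finite-sample comparison under $\rhat$ rather than $\rstar$.
\begin{enumerate}
\item Show $d\piEMhat/d\piref\le N/k\le M$: since each draw is output with probability at most $1/k$, $\piEMhat(A)\le \frac{N}{k}\piref(A)$.
\item Apply \Cref{lem:win-transfer-em} to get $|\R^{\rstar}(\pi)-\R^{\rhat}(\pi)|\le M\epw$ for both $\pi=\pi_M$ and $\pi=\piEMhat$.
\item Conclude (II) $\le 2M\epw+\frac{1}{N+1}$, after which your overall argument goes through.
\end{enumerate}
This is exactly what the paper's (T2)--(T3) split accomplishes: it uses the order-statistic identity under $\rhat$ and pays $M\epw$ to transfer $\piEMhat$ back to $\rstar$.
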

Note that the upper bound in \eqref{eq:cov-reg-winrate-ub} is very similar to that of BoN in \Cref{thm:bon-win-rate-upper-bound}, with the key difference being that the scaling parameter $N$ is now decoupled from the regularization parameter $M$.  As a result, by tuning $M$ appropriately, we can achieve the performance skyline established in \Cref{thm:bon-win-rate-upper-bound} while maintaining monotonicity in $N$.  Other than monotonicity, the monotone algorithm performs similarly to BoN and we note that it still requires tuning of the regularization parameter $M$, so we expect BoN to remain the dominant alignment algorithm when monotonicity is not required.

We conclude this section by observing that the only prior provably monotone algorithm, the $\chi^2$-regularized Best-of-$N$ algorithm proposed by \citet{huang2025best}, can fail to achieve optimal performance when win-rate regret is the goal.  Specifically, we have the following result.
\begin{proposition}[Separation between $\pi^\chi_\beta$ vs $\piEM$]
\label{prop:chi2-vs-cov-arbitrary-c}
For every $c > 1$, there exist policies $\piref, \pistar$, reward functions $\rhat, \rstar$, and $\epspw \ll 1$ such that $\epspw(\rhat) \leq \epspw$ and it holds that
\begin{align}
    \inf_{\beta > 0} R^{\rstar}(\pistar) - R^{\rstar}(\pi^\chi_\beta) \geq c \cdot \left(\inf_{M > 0} R^{\rstar}(\pistar) - R^{\rstar}(\piEM)\right).
\end{align}
\end{proposition}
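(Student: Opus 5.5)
The plan is to build an explicit instance in which the learned reward $\rhat$ is correct in \emph{ordinal} terms on almost all of $\piref$, but its \emph{numerical} values make a small mis-ranked region dominate. Since $\epspw$ and $\piEM$ depend only on the ordering induced by $\rhat$, while $\pi^\chi_\beta$ depends on the actual values of $\rhat$, this should separate the two. I will use the characterization of the $\chi^2$-regularized solution from \citet{huang2025best}: $\nicefrac{d\pi^\chi_\beta}{d\piref}(y) \propto (\rhat(y) - \lambda_\beta)_+$, up to the normalization and the clipping conventions of that paper, so the density is affine in $\rhat$ above a threshold. For $\piEM$ I will use \Cref{lem:cov-reg-top-quantile}, by which $\piEM = \piref(\cdot \mid \rhat \geq \lambda)$ is the top-$\nicefrac1M$ conditional. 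I will work with the population policies $\piEM$ and $\pi^\chi_\beta$, as in the statement, so no sampling error enters.

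\textbf{Construction.} Take $\piref$ uniform on $[0,1]$, $\Rmax = 1$, and $\rstar(y) = y$. Fix parameters $M_0 \geq 2$, $\epsilon \ll \nicefrac{1}{M_0}$ and $\delta \in (0,1)$.
\begin{itemize}
\item Let $\pistar$ be uniform on the good top set $G = [1 - \nicefrac{1}{M_0} - \epsilon, 1-\epsilon)$, so that $\cE_{M}(\pistar\|\piref) = 0$ whenever $M \geq \nicefrac{1}{(1/M_0)} = M_0$.
\item On the bad set $B = [1-\epsilon, 1]$, set $\rstar = 0$ but $\rhat = 1$.
\item Elsewhere set $\rhat(y) = \delta y$, which is order-consistent with $\rstar$ but squeezed into $[0,\delta]$.
\end{itemize}
The only disagreements between $\rhat$ and $\rstar$ are pairs with one endpoint in $B$, so $\epspw(\rhat) \leq 2\epsilon$. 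This fixes the parameter $\epspw = 2\epsilon$ in the statement.

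\textbf{Upper bound for $\piEM$.} Choose $M$ with $\nicefrac1M = \nicefrac1{M_0} + \epsilon$. Then $\piEM$ is uniform on $G \cup B$: it coincides with $\pistar$ except for an $O(M_0\epsilon)$ fraction of mass on $B$, where $\rstar = 0$. Hence
\[
\inf_M R^{\rstar}(\pistar) - R^{\rstar}(\piEM) \lesssim M_0 \epsilon .
\]
This bound can also be read off from \Cref{thm:topk-winrate-upper-bound}. For a cleaner comparison I would also check that this quantity is $\Theta(M_0\epsilon)$ and in particular strictly positive, so that the ratio in the statement is meaningful.

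\textbf{Lower bound for $\pi^\chi_\beta$.} Uniformly over $\beta$, I will show
\[
R^{\rstar}(\pistar) - R^{\rstar}(\pi^\chi_\beta) \gtrsim \min\{1, \nicefrac{\epsilon}{\delta}\} - O\big(\nicefrac{1}{M_0}\big) .
\]
I would split by the threshold $\lambda_\beta$.
\begin{itemize}
\item If $\lambda_\beta \geq \delta$, all mass sits on $B$ and the regret is about $1$.
\item If $\lambda_\beta < \delta$, the density on $B$ is proportional to $1 - \lambda_\beta \geq 1-\delta$, while on the complement it is at most $\delta - \lambda_\beta \leq \delta$. The mass on $B$ is then at least
\[
\frac{\epsilon(1-\delta)}{\epsilon(1-\delta) + \delta},
\]
which is $\gtrsim \min\{1,\nicefrac\epsilon\delta\}$. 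Each unit of mass on $B$ costs win-rate roughly $1 - \nicefrac{1}{2M_0}$ relative to $\pistar$.
\item Large $\beta$ lets $\pi^\chi_\beta$ spread toward $\piref$, which has win-rate $\nicefrac12$. Small $\beta$ pushes it onto $B$. Neither extreme helps, so a uniform lower bound over $\beta$ should hold.
\end{itemize}
Setting $\delta = \nicefrac{\epsilon}{(c M_0 \epsilon)} = \nicefrac{1}{cM_0}$ up to constants, and taking $M_0$ large and $\epsilon$ small, the $\chi^2$ regret is $\gtrsim c M_0 \epsilon$. This beats $c$ times the $\cE_M$ regret after adjusting constants.

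The main obstacle is the last step. There one must handle the exact form of the $\chi^2$-regularized solution, including normalization, clipping at $\Rmax$ and possible mixing with $\piref$ in their definition, and obtain a lower bound uniform over all $\beta > 0$. In particular, the intermediate regime of $\lambda_\beta$ must not let $\pi^\chi_\beta$ put most of its mass on $G$ while avoiding $B$. I would first verify this with explicit integrals, since all densities are piecewise linear. If the linear-density computation is delicate, I would replace $[0,1]$ with a three-atom-level construction, using three intervals $B$, $G$ and the rest with constant $\rhat$ values $1$, $\delta$ and $0$, which makes $\pi^\chi_\beta$ explicit.
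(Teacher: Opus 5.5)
Your instance uses the same mechanism as the paper's proof. A spike of $\piref$-mass $\epsilon$ has maximal $\rhat$ but minimal $\rstar$, and the good region's $\rhat$-values are squeezed toward $0$. The $\chi^2$ density is affine in $\rhat$ above its threshold, so it overweights the spike, while the rank-based $\piEM$ takes only a fraction of order $\epsilon M$ of it. Your three-level fallback with $\rhat$-values $1,\delta,0$ is essentially the paper's three-atom instance: its $1-\delta$ is your $\delta$, and its top-set mass $\alpha=k\sqrt{\epsilon}$ plays the role of your $\nicefrac{1}{M_0}$.

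However, the uniform-in-$\beta$ lower bound, which you flag as the main obstacle, is not established, and as written it fails in the regime $\lambda_\beta<0$ (large $\beta$). Your bound on the $B$-mass uses that the off-$B$ density is at most $\delta-\lambda_\beta\le\delta$. That is false when $\lambda_\beta<0$: the off-$B$ density is $\delta y+|\lambda_\beta|$, and the $B$-mass falls toward $\epsilon$. The remark ``neither extreme helps'' says nothing about intermediate $\beta$. You need a separate bound, for example the following. Conditional on $y\notin B$, the density $\propto\delta y+|\lambda_\beta|$ is a mixture of the linear and uniform laws on $[0,1-\epsilon)$, so its mean is at most $\tfrac23(1-\epsilon)$. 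A point $y\notin B$ has $\rstar$-win-rate $y+\epsilon$, and points of $B$ have win-rate $\nicefrac{\epsilon}{2}$. Hence $R^{\rstar}(\pi^\chi_\beta)\le\tfrac23+\epsilon$, and the regret is at least $\tfrac1{12}-\epsilon$ when $M_0\ge2$. The paper handles its analogous Regime III by noting that the regret is affine in $1/\beta$ there, so the infimum is attained either at $\beta=\mu$ or in the limit $\piref$.

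The second hole is in the parameter choice. Your $\pistar$ (uniform on $G$) is not win-rate optimal, so the off-$B$ part of $\pi^\chi_\beta$ can beat $\pistar$ by up to $\nicefrac{1}{2M_0}$; this is your $-O(\nicefrac{1}{M_0})$ term. With $\delta=\nicefrac{1}{KcM_0}$ and $\epsilon\ll\delta$, your lower bound is only about $KcM_0\epsilon-\nicefrac{1}{2M_0}$. It therefore beats $c$ times your $\piEM$ bound $M_0\epsilon$ only when $\epsilon\gtrsim\nicefrac{1}{(K-1)cM_0^2}$. Taking $\epsilon$ small with $M_0$ fixed, as your last step suggests, makes the bound vacuous.

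A clean repair is as follows.
\begin{itemize}
\item Take $\delta\le\nicefrac{\epsilon}{10}$. Then for every $\lambda_\beta\ge0$ the $B$-mass is at least $0.9$, and the regret is at least $0.9(1-\nicefrac{\epsilon}{2})-\nicefrac{1}{2M_0}\ge\nicefrac12$.
\item Take $\epsilon=\nicefrac{1}{24cM_0}$. Then $c$ times the $\piEM$ regret is at most $cM_0\epsilon=\nicefrac{1}{24}$, which is below $\tfrac1{12}-\epsilon$, so the $\lambda_\beta<0$ bound above also suffices.
\item Then $\epspw(\rhat)\le2\epsilon\ll1$ once $M_0$ is large.
\end{itemize}
The paper sidesteps this issue entirely by taking $\pistar=\delta_C$, the globally win-rate-optimal policy on its instance. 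All regrets are then nonnegative, and the comparison reduces to ratios of exact closed forms.
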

This result, whose proof is deferred to \Cref{sec:separation}, demonstrates that the $\chi^2$-regularized Best-of-$N$ algorithm can perform much worse than our proposed algorithm.  In addition to the poor performance demonstrated in \Cref{prop:chi2-vs-cov-arbitrary-c}, we emphasize that one of the chief advantages of the algorithm proposed in this section over that of $\pi^\chi_\beta$ and alternatives such as soft-BoN \citep{khalaf2025inference,verdun2025soft}, running BoN with pessimistic reward estimates, and alternatives \citep{jinnai2025regularized} is in its simplicity: the optimal policy takes the form of a simple top-quantile selector, which is easy to implement and sample from in practice and requires no online estimation or additional computational overhead.

\section{Proof Techniques}\label{sec:proof_techniques}
Next, we provide proof sketches of our main results on the performance of BoN under win-rate regret (\Cref{thm:bon-win-rate-upper-bound,thm:skyline-lb}) and an upper bound for the $\cE_M$-regularized algorithm (\Cref{thm:topk-winrate-upper-bound}).
\vspace{-1em}
\subsection{Proof Sketch of \Cref{thm:bon-win-rate-upper-bound}}
\label{sec:pf-sketch-bon}

In order to compare the performance of $\pihatbon$ to that of an arbitrary comparator policy $\pistar$ under the true reward function $\rstar$, we need to overcome two main challenges: (1) if $N$ is too small and $\pistar$ is far from $\piref$, then we could pay in regret for not being able to find high-reward outputs comparable to those of $\pistar$; and (2) if $N$ is too large, then we may overoptimize $\rhat$ and select outputs on which $\rhat$ and $\rstar$ disagree.  This tension is captured by the two terms in the upper bound \eqref{eq:bon_ub}, which arise out of the following regret decomposition holding for any $M > 0$: the regret $\R^{\rstar}(\pistar)-\R^{\rstar}(\pihatbon)$ is equal to the sum of three terms,
\begin{align}\label{eq:bon-regret-decomp-body}
 \underbrace{\R^{\rstar}(\pistar)-\R^{\rhat}(\pistar_M)}_{\text{(i)}}  + \underbrace{\R^{\rhat}(\pistar_{M})-\R^{\rhat}(\pihatbon)}_{\text{(ii)}} + \underbrace{\R^{\rhat}(\pihatbon)-\R^{\rstar}(\pihatbon)}_{\text{(iii)}},
\end{align}
where $\pistar_M$ is an intermediate comparator policy defined to be the policy that minimizes total variation distance to $\pistar$ subject to the constraint that its density ratio relative to $\piref$ is upper bounded by $M$ almost surely.  While a similar decomposition was introduced in \citet{huang2025best}, the key challenge in our setting is that we are considering win-rate regret rather than expected-reward regret, which requires different techniques to control each of the terms in \eqref{eq:bon-regret-decomp-body}. 

\paragraph{Term (i).}
We further decompose 
\iftoggle{colt}{
$\R^{\rstar}(\pistar)-\R^{\rhat}(\pistar_M)
= \R^{\rstar}(\pistar)-\R^{\rstar}(\pistar_M)
+ \R^{\rstar}(\pistar_M)-\R^{\rhat}(\pistar_M)$
}{
\begin{align}
\R^{\rstar}(\pistar)-\R^{\rhat}(\pistar_M)
&= \R^{\rstar}(\pistar)-\R^{\rstar}(\pistar_M) + \R^{\rstar}(\pistar_M)-\R^{\rhat}(\pistar_M)
\end{align}
} and control each of the two resulting differences separately.  For the first difference, we observe that win-rate is bounded and thus $1$-Lipschitz in total variation distance, so we can control the loss from $\pistar$ to $\pistar_M$ in terms of $\cE_M(\pistar \| \piref)$ by \citet{block2023sample}.  For the second difference, we use the fact that the likelihood ratio of $\pistar_M$ to $\piref$ is upper bounded by $M$ almost surely to control the pairwise win-rate error under $\pistar_M$ between $\rhat$ and $\rstar$ by $M \cdot \epw(\rhat)$ using change-of-measure.

\paragraph{Term (ii).}
Here, we again adopt the formalism of approximate rejection sampling from \citet{block2023sample} to define a \emph{selection rule} to be a policy $\pistar_{M,\mathrm{rej}}$ that approximates $\pistar_M$ using only $N$ i.i.d.\ draws from $\piref$.  We again further decompose term (ii) as
\iftoggle{colt}{
$\R^{\rhat}(\pistar_{M})-\R^{\rhat}(\pihatbon)=\R^{\rhat}(\pistar_{M})-\R^{\rhat}(\pistar_{M,\mathrm{rej}}) + \R^{\rhat}(\pistar_{M,\mathrm{rej}})-\R^{\rhat}(\pihatbon)$
}{
\begin{align}
\R^{\rhat}(\pistar_{M})-\R^{\rhat}(\pihatbon)=\R^{\rhat}(\pistar_{M})-\R^{\rhat}(\pistar_{M,\mathrm{rej}}) + \R^{\rhat}(\pistar_{M,\mathrm{rej}})-\R^{\rhat}(\pihatbon)
\end{align}
}
 and control each of the two resulting terms separately.  For the first difference, we use the approximate rejection sampling analysis of \citet{block2023sample} to show that $\pistar_{M,\mathrm{rej}}$ approximates $\pistar_M$ in total variation distance up to an exponentially small term in $N/M$, which again transfers to win-rate by boundedness.  For the second term, we observe that conditional on any fixed batch of $N$ samples from $\piref$, BoN simply selects the output with the highest $\rhat$ value, and thus dominates any other selection rule on that same batch under $\rhat$, including $\pistar_{M,\mathrm{rej}}$; while this fact is intuitively true, we prove it rigorously in \Cref{app:bon_proofs}.

\paragraph{Term (iii).}  Finally, we control term (iii) using a similar change-of-measure argument as for term (i), observing that BoN has density ratio at most $N$ relative to $\piref$ almost surely and thus the pairwise win rate error under $\pihatbon$ between $\rhat$ and $\rstar$ is at most $N \cdot \epw(\rhat)$.

\paragraph{Conclusion.}  Combining the bounds for terms (i), (ii), and (iii) yields, for any $M > 0$,
\begin{align}
\R^{\rstar}(\pistar)-\R^{\rstar}(\pihatbon) \lesssim (M + N) \cdot \epw(\rhat) + \cE_M(\pistar \| \piref) + \exp(-N/M).
\end{align}
Setting $M = N / \log(1/\epspw(\rhat))$ then gives the desired upper bound.

\subsection{Proof Sketch of \Cref{thm:skyline-lb}}

Prior lower bounds for expected-reward regret \citep{huang2025best} do not directly apply to win-rate regret, as they rely on constructing reward functions that differ on a small set of outputs with low probability under $\piref$ but high expected reward, which does not affect win rate significantly.  Indeed, it is precisely because such constructions do not seriously affect win rate that BoN is able to achieve optimal performance for win-rate regret, in contrast to expected-reward regret.  For fixed $\pihat$, $\piref$, and $\rhat$, we construct a family of $\rstar_A$ parameterized by measurable subsets $A \subseteq \cY$ such that $\rstar_A$ differs from $\rhat$ only on $A$, i.e., $\rstar(y) = \rhat(y) + \bbI{y \in A}$. A computation shows that 
\iftoggle{colt}{
$\epspw(\rhat) \leq \piref(A) \cdot (1 - \piref(A)) \leq \piref(A),$
}{
\begin{align}
\epspw(\rhat) \leq \piref(A) \cdot (1 - \piref(A)) \leq \piref(A),
\end{align}}
so by choosing $A$ to have $\piref$-measure at most $\epsilon$, we ensure that the pairwise win rate error is small. A calculation tells us that with an appropriate choice of $\rhat$, we can lower bound 
\iftoggle{colt}{
$R^{\rstar}(\pistar) - R^{\rstar}(\pihat) \gtrsim \pistar(A) - \pihat(A)$
}{
\begin{align}
R^{\rstar}(\pistar) - R^{\rstar}(\pihat) \gtrsim \pistar(A) - \pihat(A)
\end{align}}
for any $\rstar_A$ in the family.  Thus, the regret is large if, subject to $\piref(A)$ being small, we can ensure that $\pistar(A)$ is large while $\pihat(A)$ is small.  We apply Markov's inequality to show that
\iftoggle{colt}{
$\pihat \left( \left\{ \nicefrac{d \pihat}{d \piref} \leq 2 \right\} \right) \geq \nicefrac 12$
}{
\begin{align}
\pihat \left( \frac{d \pihat}{d \piref} \leq 2  \right) \geq \frac 12
\end{align}
} and then use the non-atomicity of $\piref$ to demonstrate the existence of an $A$ such that $\piref(A) \leq \epsilon$ and $\pihat(A) \leq 2\epsilon$.  We then choose $\pistar$ such that $\nicefrac{d \pistar}{d \piref} = M$ on $A$ and some small constant elsewhere to ensure that $\pistar$ is a valid probability measure.  For an appropriate choice of $M$, we can use this instance to establish the lower bound.

\subsection{Proof Sketch of \Cref{thm:topk-winrate-upper-bound}}
\label{sec:pf-sketch-topk}

We begin in a similar way as the proof of \Cref{thm:bon-win-rate-upper-bound} by considering a regret decomposition, i.e., the regret $R^{\rstar}(\pistar)-R^{\rstar}(\piEMhat)$ is equal to
\begin{align}
\label{eq:topk-regret-decomp-sketch}
\R^{\rstar}(\pistar)-\R^{\rstar}(\piEMhat)
&=
\underbrace{\big(\R^{\rstar}(\pistar)-\R^{\rhat}(\pistar_M)\big)}_{\text{(i)}}
+
\underbrace{\big(\R^{\rhat}(\pistar_M)-\R^{\rhat}(\piEMhat)\big)}_{\text{(ii)}}
+
\underbrace{\big(\R^{\rhat}(\piEMhat)-\R^{\rstar}(\piEMhat)\big)}_{\text{(iii)}},
\end{align}
where $\pistar_M$ is the same as that defined in \Cref{sec:pf-sketch-bon}.  As such, term (i) is handled exactly as in the BoN proof, contributing $\cE_M(\pistar \| \piref) + M \cdot \epspw(\rhat)$.  Similarly, term (iii) is also handled in the same way as in the BoN proof, except we have that $\piEMhat$ has density ratio at most $M$ relative to $\piref$ almost surely (not increasing with $N$, in contradistinction to BoN).  It remains to control term (ii). We now decompose 
\iftoggle{colt}{
$\R^{\rhat}(\pistar_M)-\R^{\rhat}(\piEMhat) = \R^{\rhat}(\pistar_M)-\R^{\rhat}(\pi_M) + \R^{\rhat}(\pi_M)-\R^{\rhat}(\piEMhat)$
}{
\begin{align}
\R^{\rhat}(\pistar_M)-\R^{\rhat}(\piEMhat) = \R^{\rhat}(\pistar_M)-\R^{\rhat}(\pi_M) + \R^{\rhat}(\pi_M)-\R^{\rhat}(\piEMhat),
\end{align}
}
where $\pi_M$ samples uniformly from the top $\nicefrac 1M$ quantile of $\rhat(y)$ for $y \sim \piref$.  The first difference is non-positive by a similar argument as in the BoN analysis (cf. \Cref{app:topk-wr-pf}). To bound the second difference, we compute the win-rate of the empirical quantile policy $\piEMhat$ and the population quantile policy $\pi_M$ explicitly: after tie-breaking, the $\rhat$-ranks of the $N$ samples are i.i.d. $\mathrm{Unif}[0,1]$, so the win-rate of $\piEMhat$ admits an explicit order statistic formula, yielding $R^{\rhat}(\piEMhat) \geq 1 - \nicefrac{1}{2M} - \nicefrac{1}{N}$, while $R^{\rhat}(\pi_M) = 1 - \nicefrac{1}{2M}$, so the difference is at most $\nicefrac{1}{N}$. The result follows.

\section{Performance Under Win-rate Against an Arbitrary Policy}\label{sec:win_rate_arbit}
In this section, we extend our performance guarantees for BoN and $\cE_M$-regularized BoN to the setting where win-rate is evaluated against an \emph{arbitrary} comparison policy $q$ (rather than the baseline $q=\piref$ used in \Cref{eq:win-rate-definition}) as discussed in \Cref{sec:inference_time_alignment_framework}. This generality aligns with common evaluation pipelines—for example, in AlpacaEval-style benchmarks~\citep{dubois2023alpacafarm,dubois2024length} one often measures win-rate against a strong comparator like GPT-4. Formally, for any comparison measure $q$ on $\cY$ and any reward function $r$, we define the win-rate against $q$ by
\begin{align}
\R^{r}_{q}(\pi)
\defeq
\ee_{y\sim \pi,\;y'\sim q}\!\left[\phi_{r}(y,y')\right]
\quad \text{and} \quad
\phi_{r}(y,y')
\defeq
\indic\{r(y)>r(y')\}+\tfrac12\,\indic\{r(y)=r(y')\}.
\end{align}
This definition generalizes the win-rate in \Cref{eq:win-rate-definition} (which corresponds to $q=\piref$) by drawing $y'\sim q$ as the baseline comparator. Throughout this section we assume that $q$ is dominated by $\piref$ ($q \ll \piref$) with a pointwise density-ratio bound 
\begin{align} \label{eq:q-domination-body} w_q(y)\defeq \frac{dq}{d\piref}(y)\le L \quad \piref\text{-a.s.}, \end{align} 
for some constant $L\geq1$. This condition lets us relate win-rate against $q$ to quantities (such as pairwise error and excess mass) that are defined with respect to $\piref$ via the same change-of-measure steps as in \Cref{sec:pf-sketch-bon,sec:pf-sketch-topk}, up to a multiplicative factor $L$. One could instead work with a weaker divergence control, e.g.\ $\cE_L(q\|\piref)$, but we use \eqref{eq:q-domination-body} to keep the bookkeeping minimal. We continue to measure pairwise reward-model error with respect to $\piref$ via \begin{align} \epw(\rhat) \;=\; \ee_{y,y'\sim\piref}\!\left[\big|\phi_{\rhat}(y,y')-\phi_{r^\star}(y,y')\big|\right]. \end{align} The following result extends our BoN guarantee from the baseline comparison $q=\piref$ to an arbitrary comparison measure $q$ satisfying \eqref{eq:q-domination-body}.
\begin{theorem}
\label{thm:bon-win-rate-upper-bound-q-body}
Suppose that $\piref$ and $q$ satisfy \eqref{eq:q-domination-body} and let $\pistar \ll \piref$.  Then for any reward models $\rstar$ and  $\rhat$ and any $N\geq1$, it holds that
\begin{align}
\R_q^{\rstar}(\pistar)-\R_q^{\rstar}(\pihatbon)
\lesssim
L \cdot N \cdot \epspw(\rhat) \cdot \log\left( \nicefrac{1}{\epspw(\rhat)} \right) + \cE_{N / \log(1/\epspw(\rhat))}\left( \pistar \| \piref \right).
\end{align}
\end{theorem}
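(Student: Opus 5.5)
We follow the regret decomposition of \Cref{sec:pf-sketch-bon} with every win-rate replaced by its $q$-analogue. Fix $M>0$, let $\pistar_M$ be the intermediate comparator (closest to $\pistar$ in total variation subject to $\nicefrac{d\pistar_M}{d\piref}\le M$), and let $\pistar_{M,\mathrm{rej}}$ be the approximate-rejection selection rule built from the same $N$ draws used by BoN. We then write
\begin{align}
\R_q^{\rstar}(\pistar)-\R_q^{\rstar}(\pihatbon)
&= \big(\R_q^{\rstar}(\pistar)-\R_q^{\rstar}(\pistar_M)\big) + \big(\R_q^{\rstar}(\pistar_M)-\R_q^{\rhat}(\pistar_M)\big) + \big(\R_q^{\rhat}(\pistar_M)-\R_q^{\rhat}(\pistar_{M,\mathrm{rej}})\big)\\
&\quad + \big(\R_q^{\rhat}(\pistar_{M,\mathrm{rej}})-\R_q^{\rhat}(\pihatbon)\big) + \big(\R_q^{\rhat}(\pihatbon)-\R_q^{\rstar}(\pihatbon)\big).
\end{align}
Three of these terms go through unchanged, since they only use that $\R_q^r(\pi)$ is an average of $[0,1]$-valued quantities over $y\sim\pi$ and is therefore $1$-Lipschitz in $\TV$ in $\pi$. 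The first term is at most $\TV(\pistar,\pistar_M)\le \cE_M(\pistar\|\piref)$ by \citet{block2023sample}. The third term is at most $\TV(\pistar_M,\pistar_{M,\mathrm{rej}})\lesssim e^{-N/M}$ by the same approximate rejection sampling bound. The fourth term is $\le 0$, because conditional on the batch, BoN maximizes $\rhat(y)$ and $y\mapsto \ee_{y'\sim q}[\phi_{\rhat}(y,y')]$ is nondecreasing in $\rhat(y)$. This is exactly the domination argument of \Cref{app:bon_proofs} with $\piref$ replaced by $q$ in the inner expectation, and it does not use $q=\piref$ anywhere.

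The new work is in the two reward-mismatch terms, which are where $L$ enters. For any $\pi$ with $\nicefrac{d\pi}{d\piref}\le K$ almost surely, we bound
\begin{align}
|\R_q^{\rhat}(\pi)-\R_q^{\rstar}(\pi)| \le \ee_{y\sim\pi,\,y'\sim q}\big[|\phi_{\rhat}(y,y')-\phi_{\rstar}(y,y')|\big] \le K\cdot L\cdot \epw(\rhat).
\end{align}
The last step is a double change of measure: the joint density of $\pi\otimes q$ with respect to $\piref\otimes\piref$ is $\nicefrac{d\pi}{d\piref}(y)\, w_q(y')\le KL$, using \eqref{eq:q-domination-body}. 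We apply this with $K=M$ for $\pistar_M$. For $\pihatbon$ we take $K=N$, using the already-established fact (term (iii) of \Cref{sec:pf-sketch-bon}) that the law of the BoN output has density ratio at most $N$ with respect to $\piref$, with randomized tie-breaking.

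Summing the five terms gives, for every $M$,
\begin{align}
\R_q^{\rstar}(\pistar)-\R_q^{\rstar}(\pihatbon)\lesssim L(M+N)\,\epw(\rhat) + \cE_M(\pistar\|\piref) + e^{-N/M}.
\end{align}
We then choose $M=N/\log(1/\epw(\rhat))$, so that $e^{-N/M}=\epw(\rhat)\le L N\epw(\rhat)$ and the result follows. We also take $\epw<1/e$ so that the logarithm exceeds $1$ and $M\le N$; otherwise the bound is trivial.

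The main point to check carefully is the monotonicity step in the fourth term: ties in $\rhat$ interact with the $\tfrac12$ tie-credit in $\phi_{\rhat}$, and $q$ may put atoms on $\rhat$-level sets. We handle this by verifying directly that $g(t)=q(\rhat<t)+\tfrac12 q(\rhat=t)$ is nondecreasing in $t$, which it is, since the jump at each atom is split in half on either side. The change-of-measure steps are routine once the product-density bound is written down.
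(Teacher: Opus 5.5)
Your proposal is correct and follows essentially the same route as the paper's proof. It uses the same decomposition (T1)--(T3), with (T1) and (T2) each split in two. The steps also match: the $\TV$ and approximate-rejection-sampling bounds, the monotonicity of $g_{\rhat,q}$ for the BoN-domination step, the double change of measure giving $K\cdot L\cdot\epw(\rhat)$ (the paper packages this as its $q$-transfer lemma), and the choice $M=N/\log(1/\epw(\rhat))$.

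One correction to your aside: the bound is not trivial whenever $\epw(\rhat)\ge 1/e$. As $\epw(\rhat)\to 1$, both $L N\epw\log(1/\epw)$ and $\cE_{N/\log(1/\epw)}(\pistar\|\piref)$ tend to $0$, while the regret can stay of order one (e.g.\ when $\rhat$ reverses the ordering induced by $\rstar$). So the result genuinely needs $\epw(\rhat)$ bounded away from $1$, an assumption the paper also leaves implicit.
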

\noindent \Cref{thm:bon-win-rate-upper-bound-q-body} recovers \Cref{thm:bon-win-rate-upper-bound} in the case that $q=\piref$; more generally, we pay in regret for the mismatch between reference model $\piref$ and comparator $q$ through the uniform density ratio bound.  This factor arises because the regret decomposition and subsequent change-of-measure steps are identical to the $q=\piref$ case (cf. \Cref{sec:pf-sketch-bon}), except that expectations against $y'\sim q$ are controlled by expectations against $y'\sim\piref$ using \eqref{eq:q-domination-body}. The remaining terms are bounded by the same approximate rejection sampling arguments from \citet{block2023sample}, so BoN remains statistically and computationally optimal under win-rate regret, while still being susceptible to reward hacking through $\epw$. As discussed in \Cref{sec:monotone_algos}, $\cE_M$-regularized BoN overcomes this issue and decouples the scaling parameter $N$ from the regularization parameter $M$ and yields a monotone-in-$N$ guarantee for $q=\piref$; in the next theorem we extend an analogous guarantee for $\piEMhat$ to general $q$ under \eqref{eq:q-domination-body}.
\begin{theorem}
\label{thm:topk-winrate-upper-bound-q-body}
For any $M, N \geq 1$, reference model $\piref$, comparator $\pistar\ll\piref$, ground truth reward $\rstar$, and learned reward model $\rhat$, it holds that
\begin{align}\label{eq:EM-reg-winrate-ub-q}
    R^{\rstar}_q(\pistar) - R^{\rstar}_q(\piEMhat) \lesssim L\cdot\cE_M(\pistar\|\piref)+L\cdot M\cdot\epw(\rhat)+\sqrt{\frac{M}{N}}.
\end{align}
\end{theorem}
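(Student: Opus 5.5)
We mirror the proof of \Cref{thm:topk-winrate-upper-bound}, using the three-term regret decomposition \eqref{eq:topk-regret-decomp-sketch} with every win-rate $R^r$ replaced by $R^r_q$:
\begin{align}
R^{\rstar}_q(\pistar)-R^{\rstar}_q(\piEMhat) = \big(R^{\rstar}_q(\pistar)-R^{\rhat}_q(\pistar_M)\big) + \big(R^{\rhat}_q(\pistar_M)-R^{\rhat}_q(\piEMhat)\big) + \big(R^{\rhat}_q(\piEMhat)-R^{\rstar}_q(\piEMhat)\big).
\end{align}
Here $\pistar_M$ is the TV-closest policy to $\pistar$ with $\nicefrac{d\pistar_M}{d\piref}\le M$.

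\textbf{Term (i).} Split it as $R^{\rstar}_q(\pistar)-R^{\rstar}_q(\pistar_M)$ plus $R^{\rstar}_q(\pistar_M)-R^{\rhat}_q(\pistar_M)$. Since $\phi\in[0,1]$, the first piece is at most $\TV(\pistar,\pistar_M)\lesssim\cE_M(\pistar\|\piref)$ by \citet{block2023sample}. For the second piece, apply a double change of measure in $(y,y')$, using $\nicefrac{d\pistar_M}{d\piref}\le M$ and $w_q\le L$:
\begin{align}
\big|\ee_{\pistar_M\otimes q}[\phi_{\rstar}-\phi_{\rhat}]\big| \le M L\,\epw(\rhat).
\end{align}
\textbf{Term (iii)} is handled the same way. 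The returned $\piEMhat$ is a uniform draw from the top $\lceil N/M\rceil$ of $N$ samples, so its density ratio is at most $N/\lceil N/M\rceil\le M$. This gives a bound of $LM\epw(\rhat)$. The factor $L$ in front of $\cE_M$ in the statement is harmless, since $L\ge 1$.

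\textbf{Term (ii).} Split it into $R^{\rhat}_q(\pistar_M)-R^{\rhat}_q(\pi_M)$ and $R^{\rhat}_q(\pi_M)-R^{\rhat}_q(\piEMhat)$.

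For the first piece, note that $y\mapsto \ee_{y'\sim q}\phi_{\rhat}(y,y')$ is nondecreasing in $\rhat(y)$, whatever $q$ is. Among all $\pi$ with $\nicefrac{d\pi}{d\piref}\le M$, the linear functional $\pi\mapsto \ee_\pi[g(\rhat(y))]$ with $g$ nondecreasing is maximized by the top-$\nicefrac1M$-quantile policy $\pi_M$ (with randomized tie-breaking at the threshold). This is the same bathtub or Lemma~\ref{lem:cov-reg-top-quantile} argument as before, so the first piece is nonpositive.

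The second piece is where the change from $\nicefrac1N$ to $\sqrt{M/N}$ comes from. For $q=\piref$ there was an exact order-statistic formula, but for general $q$ the function $g$ is an arbitrary monotone function of the rank. I bound it by total variation instead: since $\phi\in[0,1]$, it is at most $\TV(\pi_M,\piEMhat)$, which suffices because $\sqrt{M/N}$ is the target.

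To bound this TV distance, pass to $\rhat$-ranks $u_i\iid\mathrm{Unif}[0,1]$ (after tie-breaking). Then $\pi_M$ is uniform on $[1-\nicefrac1M,1]$, and $\piEMhat$ picks uniformly among the top $k=\lceil N/M\rceil$ order statistics. Conditional on the $(k+1)$-th largest value $U_{(N-k)}=t$, the top $k$ values are i.i.d.\ uniform on $[t,1]$. So $\piEMhat$ is a mixture over $t$ of $\mathrm{Unif}[t,1]$, and
\begin{align}
\TV(\pi_M,\piEMhat)\le \ee\big[\TV(\mathrm{Unif}[t,1],\mathrm{Unif}[1-\nicefrac1M,1])\big]\lesssim M\,\ee|t-(1-\nicefrac1M)|.
\end{align}
Here $t$ is a Beta order statistic with mean $\approx 1-\nicefrac kN$ and standard deviation $\approx\sqrt{k}/N$. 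This gives $M\cdot(\sqrt{N/M}/N+\nicefrac1N)\lesssim\sqrt{M/N}+\nicefrac MN$. When $N\ge M$ we have $\nicefrac MN\le\sqrt{M/N}$; otherwise the bound is trivial since regret is at most $1$.

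I expect this step to be the main obstacle. It requires careful handling of the ceiling, the Beta concentration, and the fact that the TV between two uniforms is proportional to the relative mismatch of their lengths. It must also be checked that the $\rhat$-ranking reduction is valid with atoms in $\rhat$, using the randomized tie-breaking exactly as in \Cref{app:monotone_algos}. Summing all terms then gives \eqref{eq:EM-reg-winrate-ub-q}.
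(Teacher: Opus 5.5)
Your proposal is correct and follows the paper's proof essentially step for step. It uses the same three-term decomposition and the same bathtub/top-quantile optimality of $\pi_M$ for the first half of term (ii). The key step is also the paper's: conditional on the cutoff $\tau\sim\mathrm{Beta}(N-k,k+1)$ the output rank is $\mathrm{Unif}[\tau,1]$, then $\TV(\mathrm{Unif}[s,1],\mathrm{Unif}[1-\nicefrac1M,1])\le M|s-(1-\nicefrac1M)|$, then the Beta mean and variance. The only differences are minor: you split term (i) through $R^{\rstar}_q(\pistar_M)$ as in the BoN proof, which gives $\cE_M$ without the factor $L$, and you explicitly dispose of the $M/N$ term via the trivial regime $N<M$, which the paper glosses over.
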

Once again, we recover the case where $q = \piref$ from \Cref{thm:topk-winrate-upper-bound} by setting $L = 1$, except we now have introduced an additional term that decays as $\sqrt{M/N}$.  We emphasize that this is purely a \emph{computationaly} phenomenon as, taking $N \uparrow \infty$, we recover the same statistical rate as in Best-of-N above.
In the special case when $q=\piref$, the win-rate of a top-$1/M$ quantile policy under $\rhat$ admits an explicit order-statistics calculation, yielding a fast $1/N$ dependence. For a general $q$, the same top-quantile selection step must be analyzed through how $q$ places mass across $\rhat$-level sets near the cutoff; without additional structure, $q$ may concentrate its probability mass in a narrow band of scores around the quantile threshold, making the win-rate highly sensitive to small fluctuations in the empirical cutoff induced by a finite batch of $N$ samples. Thus, we control this step using concentration of empirical quantiles, which leads to the slower rate in $N$. Finally, we note that the density-ratio bound in~\eqref{eq:q-domination-body} is only one convenient way to relate win-rate against $q$ to quantities defined with respect to $\piref$.  We leave to future work the interesting question of the extent to which such a term is necessary in the general-$q$ setting, and whether one can recover a faster rate under additional regularity assumptions on $q$.

\section{Discussion and Related Work}\label{sec:discussion}

In this work, we studied the problem of inference-time alignment within the sample-and-evaluate framework, focusing on the widely used Best-of-$N$ (BoN) algorithm.  In contradistinction to prior work that focused on expected true reward as the objective of interest, we argued that win rate is often a more natural objective in practical settings where reward models are trained and evaluated using pairwise comparisons.  We demonstrated that BoN is both statistically and computationally optimal within the sample-and-evaluate framework for achieving good win rate, partially explaining its widespread practical success despite its simplicity.  Because BoN remains susceptible to reward-hacking even under the win-rate objective, we proposed a simple variant that provably eliminates reward-hacking while maintaining optimal statistical performance.  Our work suggests several interesting directions for future research.  First, we may ask if there is a unifying paradigm that captures both expected-reward and win-rate objectives as special cases, allowing us to better understand under what conditions (optimally tuned) BoN is optimal and when more sophisticated algorithms are necessary.  Second, while we have focused on the sample-and-evaluate framework as a natural abstraction of inference-time alignment, it would be interesting to understand if similar statistical and computational guarantees can be achieved in more general frameworks that allow for adaptive sampling or other forms of interaction with the reference model.  Finally, while our proposed variant of BoN eliminates reward-hacking in theory, it may still be too conservative in practice due to its reliance on information-theoretic guarantees; moving beyond such guarantees and leveraging the rich structure that learned representations provide in order to improve inference-time alignment remains an interesting question.  We now discuss several related works in more detail.

\paragraph{Inference-time Alignment.}  Inference time alignment is a popular framework for aligning large language models (LLMs) with human preferences without requiring expensive retraining \citep{stiennon2020learning,nakano2021webgpt,wang2023pandalm,ouyang2022training,huang2025self,gui2024bonbon}.  Both inference and training time alignment typically targets skills in reasoning, knowledge, or value alignment \citep{jaech2024openai,guo2025deepseek,ouyang2022training,brown2024large}, and has been successfully applied to a variety of tasks.  The former domains often involve binary rewards, where an answer is either true or not, while the latter often involves more subjective human preferences that generally are measured through \emph{win rate} \citep{alpaca_eval,dubois2023alpacafarm,dubois2024length,ouyang2022training}.  Our work focuses on the latter setting, where win rate is a more natural objective than expected reward.

\paragraph{Best-of-$N$.}  When the learner has access to a groundtruth verifier, \citet{brown2024large} demonstrated that simply running a Best-of-$N$ (BoN) procedure with samples from the base model and selecting the output with the highest true reward is effective at producing high-quality outputs in a range of tasks. In practice, however, the groundtruth reward is not accessible, and instead a learned reward model is used to select among samples from the base model \citep{stiennon2020learning,ouyang2022training,bai2022training}.  While BoN with a learned reward model has seen widespread practical use, it is vulnerable to \emph{reward-hacking}, which has been extensively explored \citep{gao2023scaling,stroebl2024inference}.  A number of ad hoc empirical mitigation attempts have been proposed, including regularization in representation space \citep{jinnai2025regularized}, a pessimistic BoN using empirical uncertainty quantification, and a KL-regularized variant of BoN \citep{verdun2025soft,khalaf2025inference}.  None of these approaches have theoretical guarantees that improve substantially over BoN.  On the theoretical side, the most relevant work is that of \citet{huang2025best}, which analyzed BoN under the expected-reward objective and demonstrated that it is statistically suboptimal in general due to its vulnerability to reward overoptimization.  They then proposed a more sophisticated algorithm that achieves optimal statistical performance by leveraging ideas from approximate rejection sampling \citep{block2023sample}.  In contradistinction to their work, we demonstrated that BoN is in fact statistically and computationally optimal under the win-rate objective, which we believe is often more natural in practice.  Thus, while conceptually similar, our results paint a more favorable picture of BoN than that of \citet{huang2025best} and our techniques (in particular our lower bounds) differ significantly.

\bibliographystyle{plainnat}
\bibliography{refs}

@inproceedings{huang2025best,
  title={Is Best-of-N the Best of Them? Coverage, Scaling, and Optimality in Inference-Time Alignment},
  author={Huang, Audrey and Block, Adam and Liu, Qinghua and Jiang, Nan and Krishnamurthy, Akshay and Foster, Dylan J},
  booktitle={Forty-second International Conference on Machine Learning},
  year={2025}
}

@inproceedings{block2023sample,
  title={The sample complexity of approximate rejection sampling with applications to smoothed online learning},
  author={Block, Adam and Polyanskiy, Yury},
  booktitle={The Thirty Sixth Annual Conference on Learning Theory},
  pages={228--273},
  year={2023},
  organization={PMLR}
}

@article{chen2025coverage,
  title={The Coverage Principle: How Pre-Training Enables Post-Training},
  author={Chen, Fan and Huang, Audrey and Golowich, Noah and Malladi, Sadhika and Block, Adam and Ash, Jordan T and Krishnamurthy, Akshay and Foster, Dylan J},
  journal={arXiv preprint arXiv:2510.15020},
  year={2025}
}

@inproceedings{huang2025self,
  title={Self-Improvement in Language Models: The Sharpening Mechanism},
  author={Huang, Audrey and Block, Adam and Foster, Dylan J and Rohatgi, Dhruv and Zhang, Cyril and Simchowitz, Max and Ash, Jordan T and Krishnamurthy, Akshay},
  booktitle={The Thirteenth International Conference on Learning Representations},
  year={2025}
}

@article{stiennon2020learning,
  title={Learning to summarize with human feedback},
  author={Stiennon, Nisan and Ouyang, Long and Wu, Jeffrey and Ziegler, Daniel and Lowe, Ryan and Voss, Chelsea and Radford, Alec and Amodei, Dario and Christiano, Paul F},
  journal={Advances in neural information processing systems},
  volume={33},
  pages={3008--3021},
  year={2020}
}

@article{nakano2021webgpt,
  title={Webgpt: Browser-assisted question-answering with human feedback},
  author={Nakano, Reiichiro and Hilton, Jacob and Balaji, Suchir and Wu, Jeff and Ouyang, Long and Kim, Christina and Hesse, Christopher and Jain, Shantanu and Kosaraju, Vineet and Saunders, William and others},
  journal={arXiv preprint arXiv:2112.09332},
  year={2021}
}

@article{wang2023pandalm,
  title={Pandalm: An automatic evaluation benchmark for llm instruction tuning optimization},
  author={Wang, Yidong and Yu, Zhuohao and Zeng, Zhengran and Yang, Linyi and Wang, Cunxiang and Chen, Hao and Jiang, Chaoya and Xie, Rui and Wang, Jindong and Xie, Xing and others},
  journal={arXiv preprint arXiv:2306.05087},
  year={2023}
}

@article{ouyang2022training,
  title={Training language models to follow instructions with human feedback},
  author={Ouyang, Long and Wu, Jeffrey and Jiang, Xu and Almeida, Diogo and Wainwright, Carroll and Mishkin, Pamela and Zhang, Chong and Agarwal, Sandhini and Slama, Katarina and Ray, Alex and others},
  journal={Advances in neural information processing systems},
  volume={35},
  pages={27730--27744},
  year={2022}
}

@article{brown2024large,
  title={Large language monkeys: Scaling inference compute with repeated sampling},
  author={Brown, Bradley and Juravsky, Jordan and Ehrlich, Ryan and Clark, Ronald and Le, Quoc V and R{\'e}, Christopher and Mirhoseini, Azalia},
  journal={arXiv preprint arXiv:2407.21787},
  year={2024}
}

@article{bai2022training,
  title={Training a helpful and harmless assistant with reinforcement learning from human feedback},
  author={Bai, Yuntao and Jones, Andy and Ndousse, Kamal and Askell, Amanda and Chen, Anna and DasSarma, Nova and Drain, Dawn and Fort, Stanislav and Ganguli, Deep and Henighan, Tom and others},
  journal={arXiv preprint arXiv:2204.05862},
  year={2022}
}

@inproceedings{gao2023scaling,
  title={Scaling laws for reward model overoptimization},
  author={Gao, Leo and Schulman, John and Hilton, Jacob},
  booktitle={International Conference on Machine Learning},
  pages={10835--10866},
  year={2023},
  organization={PMLR}
}

@article{stroebl2024inference,
  title={Inference scaling flaws: The limits of llm resampling with imperfect verifiers},
  author={Stroebl, Benedikt and Kapoor, Sayash and Narayanan, Arvind},
  journal={arXiv preprint arXiv:2411.17501},
  year={2024}
}

@misc{alpaca_eval,
  author = {Xuechen Li and Tianyi Zhang and Yann Dubois and Rohan Taori and Ishaan Gulrajani and Carlos Guestrin and Percy Liang and Tatsunori B. Hashimoto },
  title = {AlpacaEval: An Automatic Evaluator of Instruction-following Models},
  year = {2023},
  month = {5},
  publisher = {GitHub},
  journal = {GitHub repository},
  howpublished = {\url{https://github.com/tatsu-lab/alpaca_eval}}
}

@article{dubois2024length,
  title={Length-Controlled AlpacaEval: A Simple Way to Debias Automatic Evaluators},
  author={Dubois, Yann and Galambosi, Bal{\'a}zs and Liang, Percy and Hashimoto, Tatsunori B},
  journal={arXiv preprint arXiv:2404.04475},
  year={2024}
}

@misc{dubois2023alpacafarm,
  title={AlpacaFarm: A Simulation Framework for Methods that Learn from Human Feedback}, 
  author={Yann Dubois and Xuechen Li and Rohan Taori and Tianyi Zhang and Ishaan Gulrajani and Jimmy Ba and Carlos Guestrin and Percy Liang and Tatsunori B. Hashimoto},
  year={2023},
  eprint={2305.14387},
  archivePrefix={arXiv},
  primaryClass={cs.LG}
}

@inproceedings{jinnai2025regularized,
  title={Regularized best-of-n sampling with minimum bayes risk objective for language model alignment},
  author={Jinnai, Yuu and Morimura, Tetsuro and Ariu, Kaito and Abe, Kenshi},
  booktitle={Proceedings of the 2025 Conference of the Nations of the Americas Chapter of the Association for Computational Linguistics: Human Language Technologies (Volume 1: Long Papers)},
  pages={9321--9347},
  year={2025}
}

@article{gui2024bonbon,
  title={Bonbon alignment for large language models and the sweetness of best-of-n sampling},
  author={Gui, Lin and G{\^a}rbacea, Cristina and Veitch, Victor},
  journal={Advances in Neural Information Processing Systems},
  volume={37},
  pages={2851--2885},
  year={2024}
}

@article{verdun2025soft,
  title={Soft Best-of-n Sampling for Model Alignment},
  author={Verdun, Claudio Mayrink and Oesterling, Alex and Lakkaraju, Himabindu and Calmon, Flavio P},
  journal={arXiv preprint arXiv:2505.03156},
  year={2025}
}

@article{khalaf2025inference,
  title={Inference-Time Reward Hacking in Large Language Models},
  author={Khalaf, Hadi and Verdun, Claudio Mayrink and Oesterling, Alex and Lakkaraju, Himabindu and Calmon, Flavio du Pin},
  journal={arXiv preprint arXiv:2506.19248},
  year={2025}
}

@article{jaech2024openai,
  title={Openai o1 system card},
  author={Jaech, Aaron and Kalai, Adam and Lerer, Adam and Richardson, Adam and El-Kishky, Ahmed and Low, Aiden and Helyar, Alec and Madry, Aleksander and Beutel, Alex and Carney, Alex and others},
  journal={arXiv preprint arXiv:2412.16720},
  year={2024}
}

@article{guo2025deepseek,
  title={Deepseek-r1: Incentivizing reasoning capability in llms via reinforcement learning},
  author={Guo, Daya and Yang, Dejian and Zhang, Haowei and Song, Junxiao and Zhang, Ruoyu and Xu, Runxin and Zhu, Qihao and Ma, Shirong and Wang, Peiyi and Bi, Xiao and others},
  journal={arXiv preprint arXiv:2501.12948},
  year={2025}
}

@article{mirtaheri2025let,
  title={Let Me Think! A Long Chain-of-Thought Can Be Worth Exponentially Many Short Ones},
  author={Mirtaheri, Parsa and Edelman, Ezra and Jelassi, Samy and Malach, Eran and Boix-Adsera, Enric},
  journal={arXiv preprint arXiv:2505.21825},
  year={2025}
}

@article{wang2022self,
  title={Self-consistency improves chain of thought reasoning in language models},
  author={Wang, Xuezhi and Wei, Jason and Schuurmans, Dale and Le, Quoc and Chi, Ed and Narang, Sharan and Chowdhery, Aakanksha and Zhou, Denny},
  journal={arXiv preprint arXiv:2203.11171},
  year={2022}
}

@article{li2022competition,
  title={Competition-level code generation with alphacode},
  author={Li, Yujia and Choi, David and Chung, Junyoung and Kushman, Nate and Schrittwieser, Julian and Leblond, R{\'e}mi and Eccles, Tom and Keeling, James and Gimeno, Felix and Dal Lago, Agustin and others},
  journal={Science},
  volume={378},
  number={6624},
  pages={1092--1097},
  year={2022},
  publisher={American Association for the Advancement of Science}
}

@article{goodhart1984monetary,
  title={Monetary theory and practice: the UK experience},
  author={Goodhart, Charles},
  year={1984},
  publisher={Macmillan London}
}

@article{rafailov2023direct,
  title={Direct preference optimization: Your language model is secretly a reward model},
  author={Rafailov, Rafael and Sharma, Archit and Mitchell, Eric and Manning, Christopher D and Ermon, Stefano and Finn, Chelsea},
  journal={Advances in Neural Information Processing Systems},
  volume={36},
  pages={53728--53741},
  year={2023}
}

@article{bradley1952rank,
  title={Rank analysis of incomplete block designs: I. the method of paired comparisons},
  author={Bradley, Ralph Allan and Terry, Milton E},
  journal={Biometrika},
  volume={39},
  number={3/4},
  pages={324--345},
  year={1952},
  publisher={JSTOR}
}

@book{polyanskiy2010channel,
  title={Channel coding: Non-asymptotic fundamental limits},
  author={Polyanskiy, Yury},
  year={2010},
  publisher={Princeton University}
}

@inproceedings{kwon2023efficient,
  title={Efficient Memory Management for Large Language Model Serving with PagedAttention},
  author={Woosuk Kwon and Zhuohan Li and Siyuan Zhuang and Ying Sheng and Lianmin Zheng and Cody Hao Yu and Joseph E. Gonzalez and Hao Zhang and Ion Stoica},
  booktitle={Proceedings of the ACM SIGOPS 29th Symposium on Operating Systems Principles},
  year={2023}
}

@article{rosset2024direct,
  title={Direct nash optimization: Teaching language models to self-improve with general preferences},
  author={Rosset, Corby and Cheng, Ching-An and Mitra, Arindam and Santacroce, Michael and Awadallah, Ahmed and Xie, Tengyang},
  journal={arXiv preprint arXiv:2404.03715},
  year={2024}
}

@article{xie2024exploratory,
  title={Exploratory preference optimization: Harnessing implicit q*-approximation for sample-efficient rlhf},
  author={Xie, Tengyang and Foster, Dylan J and Krishnamurthy, Akshay and Rosset, Corby and Awadallah, Ahmed and Rakhlin, Alexander},
  journal={arXiv preprint arXiv:2405.21046},
  year={2024}
}

@article{lambert2024tulu,
  title={Tulu 3: Pushing frontiers in open language model post-training},
  author={Lambert, Nathan and Morrison, Jacob and Pyatkin, Valentina and Huang, Shengyi and Ivison, Hamish and Brahman, Faeze and Miranda, Lester James V and Liu, Alisa and Dziri, Nouha and Lyu, Shane and others},
  journal={arXiv preprint arXiv:2411.15124},
  year={2024}
}

@book{polyanskiy2025information,
  title={Information theory: From coding to learning},
  author={Polyanskiy, Yury and Wu, Yihong},
  year={2025},
  publisher={Cambridge university press}
}

@article{neumann1963various,
  title={Various techniques used in connection with random digits},
  author={Von Neumann, John},
  journal={Collected Works of von Neumann},
  volume={5},
  pages={768--770},
  year={1963},
  publisher={Pergamon Press}
}

@book{boyd2004convex,
  title={Convex optimization},
  author={Boyd, Stephen and Vandenberghe, Lieven},
  year={2004},
  publisher={Cambridge university press}
}

\tableofcontents

\appendix
\crefalias{section}{appendix} %

\section{Investigation of Different Notions of Error and Performance}\label{sec:appendix_diff_notions_of_coverage}
In this section, we demonstrate that the pairwise error and squared error as defined in \Cref{sec:error-notions} are generally incomparable. In particular, \Cref{lem:pw-small-mse-large} shows that small $\epw$ does not imply small $\ermsq$, while \Cref{lem:mse-does-not-control-pw} shows the converse. In addition, we provide a proof of \Cref{prop:forallA-EM-impossibility}, which shows the impossibility of bounding expected reward regret as a function of the pairwise error and excess mass divergence introduced in \Cref{sec:pw-error} uniformly across all problems. 

\subsection{Proofs for Reward Model Error Comparison}

\begin{lemma}
\label{lem:pw-small-mse-large}
Fix a base policy $\piref$. For all $c>0$ there are unbounded reward models
$\rhat$ and $\rstar$ with
\begin{align}
\epw(\rhat) = 0
\qquad\text{but}\qquad
\ermsq(\rhat, \rstar)\ge c.
\end{align}
\end{lemma}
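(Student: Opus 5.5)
The plan is to use the scale invariance of $\epw$: take $\rhat = \alpha \cdot \rstar$ for a large constant $\alpha > 0$, or more generally $\rhat = g \circ \rstar$ for a strictly increasing map $g$. Strictly increasing transformations preserve every pairwise comparison exactly, so $\phi_{\rhat}(y,y') = \phi_{\rstar}(y,y')$ for all $y, y'$. Integrating over $y, y' \sim \piref$ then gives $\epw(\rhat) = 0$ identically, whatever $\piref$ is. This is why the lemma allows unbounded reward models: we leave the $[0,\Rmax]$ range, or equivalently let $\Rmax$ scale with $c$.

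For the squared error, take $\rstar$ to be any reward that is not $\piref$-a.s.\ zero. The simplest choice is $\rstar \equiv 1$. With $\rhat = \alpha \rstar$,
\begin{align}
\ermsq(\rhat,\rstar) = (\alpha-1)^2 \, \ee_{y\sim\piref}\left[\rstar(y)^2\right].
\end{align}
For $\rstar \equiv 1$ this equals $(\alpha-1)^2$, so choosing $\alpha = 1+\sqrt{c}$ gives $\ermsq \ge c$. An even simpler construction is an additive shift: $\rstar \equiv 0$ and $\rhat \equiv \sqrt{c}$. Constants tie everywhere, so $\phi_{\rhat} = \phi_{\rstar} = \tfrac12$ and $\epw = 0$, while $\ermsq = c$. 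I would present the shift as the main construction and mention multiplicative scaling as the general mechanism.

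The argument has no real obstacle. The only care needed is to check that ties are treated identically under $\rhat$ and $\rstar$; this holds because $g$ is injective and increasing, so $r(y)=r(y')$ if and only if $g(r(y)) = g(r(y'))$. I would also note explicitly that the construction works for an arbitrary fixed $\piref$, since both quantities are computed pointwise.
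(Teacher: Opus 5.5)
Your proposal is correct and follows the paper's argument: both rest on the invariance of $\epw$ under strictly increasing transformations of the reward. The paper takes $\rhat = c\,\rstar$ for any $\rstar$ with nonzero second moment under $\piref$ and lets the scale grow. Your explicit choice $\alpha = 1+\sqrt{c}$, or the additive shift $t \mapsto t+\sqrt{c}$ (also a strictly increasing map), is a cleaner instantiation of the same idea.
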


\begin{proof}
Recall the pairwise outcome
\begin{align}
\phi_{r}(y,y') = \indic\{r(y)>r(y')\}+\frac12\,\indic\{r(y)=r(y')\}.
\end{align}
and the pairwise error
\begin{align}
\epw(\rhat)\defeq \ee_{y,y'\sim\piref}\left[|\phi_{\rhat}(y,y')- \phi_{\rstar}(y,y')|\right].
\end{align}
The key observation is that the pairwise misranking loss is invariant under strictly increasing
transformations: if $\rhat(y)=\psi(\rstar(y))$ for a strictly increasing $\psi$, then for i.i.d.
$y,y'\sim\piref$,
\begin{align}
\phi_{\rhat}(y,y')=\phi_{\rstar}(y,y'),
\end{align}
hence $\epw(\rhat)=0$. Instantiating this with the strictly increasing map $\psi(t)=ct$ (for any $c>0$), i.e.\ $\rhat(y)=c\,\rstar(y)$,
we have $\epw(\rhat)=0\leq\epsilon$ for every $\epsilon>0$. Meanwhile,
\begin{align}
\ermsq(\rhat, \rstar)
&=\ee_{y\sim\piref}\!\big[(\rhat(y)-\rstar(y))^2\big]
=\ee_{y\sim\piref}\!\big[\left((c-1)\rstar(y)\right)^2\big]
=(c-1)^2\,\ee_{y\sim\piref}\!\big[\rstar(y)^2\big].
\end{align}
Choosing any $\rstar$ with nonzero second moment under $\piref$ and letting $c$ to grow arbitrarily large yields the desired result.
\end{proof}

\begin{lemma}
\label{lem:mse-does-not-control-pw}
Let $\Rmax>0$ and pick any $p\in(0,1)$. For every $\varepsilon>0$,
there exist a response space $\cY$, a reference policy $\piref$ on $\cY$,
a true reward $r^\star(\cdot):\cY\to[0,\Rmax]$, and a reward model $\rhat$
such that
\begin{align}
\ermsq(\rhat, \rstar)
=\ee_{y\sim\piref}\!\big[(\rhat(y)-r^\star(y))^2\big]
=\varepsilon^2,
\qquad
\epw(\rhat)\ \ge\ \frac{(1-p)^2}{4},
\end{align}
and the reward has nontrivial scale:
\begin{align}
\Var_{y\sim\piref}\!\big[r^\star(y)\big]
\;\ge\;
p(1-p)\left(\Rmax-\varepsilon/4\right)^2.
\end{align}
In particular, for fixed $p,\Rmax$ and $\varepsilon\to 0$, the MSE vanishes while
$\epw(\rhat)$ stays $\Omega(1)$ and $\Var(r^\star)$ stays $\Omega(\Rmax^2)$.
\end{lemma}

\begin{proof}
Fix $\varepsilon>0$. Define the response space
\begin{align}
\cY \;\defeq\; \{H\}\ \cup\ [0,\varepsilon/2].
\end{align}
Define the reference policy $\piref$ as:
with probability $p$ draw $y=H$; with probability $1-p$ draw $y=u$ where
$u\sim{\rm Unif}[0,\varepsilon/2]$. Define the true reward by
\begin{align}
r^\star(x,H)=\Rmax,
\qquad
r^\star(x,u)=u\quad\text{for }u\in[0,\varepsilon/2].
\end{align}
Let $\gamma\sim{\rm Rad}(\pm1)$ be independent of $y\sim\piref$, and define
\begin{align}
\rhat(y)\;\defeq\;r^\star(y)+\varepsilon\,\gamma,
\end{align}
so that $\rhat$ is randomized conditional on $y$. For pairwise comparisons, use
independent $\gamma,\gamma'\iid{\rm Rad}(\pm1)$ for $(y,y')$.

\paragraph{Mean squared error.}
Conditioned on $y$, we have $\rhat(y)-r^\star(y)=\varepsilon\gamma$, so
\begin{align}
\ee\big[(\rhat(y)-r^\star(y))^2\mid y\big]=\varepsilon^2,
\end{align}
and hence $\ermsq(\rhat,\rstar)=\varepsilon^2$.

\paragraph{Pairwise error lower bound.}
Recall the pairwise outcome
\begin{align}
\phi_{r}(y,y') = \indic\{r(y)>r(y')\}+\frac12\,\indic\{r(y)=r(y')\},
\end{align}
and the pairwise error
\begin{align}
\epw(\rhat)
=
\ee_{y,y'\sim\piref}\Big[\big|\phi_{r^\star}(y,y')-\phi_{\rhat}(y,y')\big|\Big].
\end{align}
Let $y,y'\iid \piref$ and consider the event
\begin{align}
L\defeq\{y\neq H\}\cap\{y'\neq H\},
\end{align}
which has probability $\pp(L)=(1-p)^2$. On $L$ we can write $y=u$, $y'=u'$ with
$u,u'\iid{\rm Unif}[0,\varepsilon/2]$, and take independent $\gamma,\gamma'\iid{\rm Rad}(\pm1)$.
Define the true gap and predicted gap
\begin{align}
\Delta \defeq r^\star(y)-r^\star(y')=u-u',
\qquad
\widehat{\Delta} \defeq \rhat(y)-\rhat(y') = \Delta + \varepsilon(\gamma-\gamma').
\end{align}
On $L$, $\pp(\Delta=0)=0$ by continuity, and $\widehat{\Delta}$ is also continuous so $\pp(\widehat{\Delta}=0)=0$.
Hence, almost surely on $L$,
\begin{align}
\phi_{r^\star}(y,y')=\indic\{\Delta>0\},\qquad \phi_{\rhat}(y,y')=\indic\{\widehat{\Delta}>0\},
\end{align}
and therefore $\big|\phi_{r^\star}(y,y')-\phi_{\rhat}(y,y')\big|=\indic\{\indic\{\Delta>0\}\neq \indic\{\widehat{\Delta}>0\}\}$ on $L$. If $\gamma=\gamma'$, then $\widehat{\Delta}=\Delta$ so the absolute difference is $0$.
If $\gamma\neq \gamma'$ (which occurs w.p.\ $1/2$), then $\varepsilon(\gamma-\gamma')\in\{\pm 2\varepsilon\}$ dominates
$\Delta$ since $|\Delta|\le \varepsilon/2$. Hence on $L\cap\{\gamma\neq\gamma'\}$,
\begin{align}
\sign(\widehat{\Delta})=\sign(\gamma-\gamma').
\end{align}
Moreover, $\sign(\gamma-\gamma')$ is symmetric and independent of $\sign(\Delta)$, hence
\begin{align}
\ee\left[\left|\phi_{r^\star}(y,y')-\phi_{\rhat}(y,y')\right|\ \Big|\ L,\ \gamma\neq\gamma'\right]
=
\pp\left(\indic\{\Delta>0\}\neq \indic\{\widehat{\Delta}>0\}\ \Big|\ L,\ \gamma\neq\gamma'\right)
=\frac12.
\end{align}
Therefore,
\begin{align}
\epw(\rhat)
\ \ge\
\pp(L)\cdot \pp(\gamma\neq \gamma')\cdot \frac12
=
(1-p)^2\cdot \frac12\cdot \frac12
=
\frac{(1-p)^2}{4}.
\end{align}

\paragraph{Reward variance (scale).}
Let $B\defeq\indic\{y=H\}\sim{\rm Bernoulli}(p)$. By the law of total variance,
\begin{align}
\Var(r^\star(y))=\Var(\ee[r^\star(y)\mid B]) + \ee[\Var(r^\star(y)\mid B)].
\end{align}
We have $\ee[r^\star(y)\mid B=1]=\Rmax$ and
$\ee[r^\star(y)\mid B=0]=\ee[u]=\varepsilon/4$, hence
\begin{align}
\Var(\ee[r^\star(y)\mid B])=p(1-p)\left(\Rmax-\varepsilon/4\right)^2.
\end{align}
Also, $\Var(r^\star(y)\mid B=1)=0$ and $\Var(r^\star(y)\mid B=0)=\Var(u)=\varepsilon^2/48$,
so $\ee[\Var(r^\star(y)\mid B)]=(1-p)\varepsilon^2/48\ge 0$. Therefore
\begin{align}
\Var(r^\star(y))\ge p(1-p)\left(\Rmax-\varepsilon/4\right)^2.
\end{align}
\end{proof}

\subsection{Proof of \Cref{prop:forallA-EM-impossibility}}

To construct a lower bound on expected reward regret, we will make 
$\rhat$ assign the same maximal score to a high-scoring region $S$ of $\piref$-mass $1/M$, so that any alignment algorithm $\cA$ has no information to distinguish points inside $S$ and can only pick some distribution over $S$. We then define $\rstar$ to place all its value on the part of $S$ that $\cA$ selects least (while keeping 
$\epw$ and $\cE_M(\pistar\|\piref)$ small), forcing a large true-reward regret. For completeness, we first restate the theorem before providing a proof.

\begin{proposition}
\label{prop:forallA-EM-impossibility}
Fix a threshold $M\ge 2$. For any inference-time algorithm $\cA$ and for every $\epsilon\in(0,1/M)$ there exists an instance with unbounded rewards $(\piref,\rhat,\rstar)$ and a comparator policy $\pistar\ll\piref$ such that
\begin{align}
\epw \le\ \epsilon
\qquad\text{and}\qquad
\cE_M(\pistar\|\piref)=\epsilon M,
\end{align}
while, for every target gap $\gamma>0$,
\begin{align}
\ee_{\pistar}[\rstar]-\ee_{\pi_{\cA}}[\rstar]
\ \ge\
\gamma.
\label{eq:forallA-gap-gamma}
\end{align}
By taking $\epsilon\to 0$, one can make the regret arbitrarily large while simultaneously having
$\epw\to 0$ and $\cE_M(\pistar\|\piref)\to 0$.
\end{proposition}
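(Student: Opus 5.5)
We follow the strategy sketched just before the statement. First we fix a reference policy and a learned reward: take $\piref$ non-atomic, say $\piref = \mathrm{Unif}[0,1]$ on $\cY=[0,1]$. Let $S=[1-1/M,1]$ be a set of $\piref$-mass $1/M$, and set $\rhat = \indic\{y\in S\}$. The learned reward is constant on $S$ and constant off $S$, so any algorithm $\cA$ that sees only $\piref$-samples and their $\rhat$-values cannot distinguish points within $S$.

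\textbf{Step 1: locate a region $\cA$ rarely selects.} Let $\pi_\cA$ be the output distribution of $\cA$, averaged over its internal randomness; this is fixed once $\cA$ and $(\piref,\rhat)$ are fixed. Partition $S$ into $K=\lceil 1/(M\epsilon)\rceil$ disjoint pieces $A_1,\dots,A_K$, each of $\piref$-mass at most $\epsilon$. By pigeonhole there is an index $j$ with $\pi_\cA(A_j)\le 1/K\le M\epsilon$. Non-atomicity lets us shrink this piece to a set $A\subseteq A_j$ of $\piref$-mass exactly $\epsilon$, and $\pi_\cA(A)\le M\epsilon$ still holds.

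\textbf{Step 2: define the true reward and the comparator.} Set $\rstar = \rhat + H\cdot\indic\{y\in A\}$ with $H$ large. Since $A\subseteq S$, the orderings of $\rhat$ and $\rstar$ differ only on pairs with exactly one point in $A$ and the other in $S\setminus A$: there $\rhat$ ties while $\rstar$ is strict, costing $1/2$ per such pair. This gives $\epw(\rhat)\le 2\cdot\tfrac12\,\piref(A)\,\piref(S)\le \epsilon$.

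For the comparator, take $\pistar$ with density $M$ on $S\setminus A$ and density $2M$ on $A$. To make it a probability measure, set the density on $S\setminus A$ to $M - \epsilon M/(1/M-\epsilon)$ instead. It is simpler, though, to pick densities $c_1,c_2$ on $A$ and on $S\setminus A$ solving mass conservation with $(c_1-M)_+\epsilon=\epsilon M$, so $c_1=2M$ and $c_2=(1-2M\epsilon)/(1/M-\epsilon)\le M$; this works because $\epsilon<1/M$. Then $\cE_M(\pistar\|\piref)=\epsilon M$ exactly, and $\pistar(A)=2M\epsilon$.

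\textbf{Step 3: compute the regret.} We have
\[
\ee_{\pistar}[\rstar]-\ee_{\pi_\cA}[\rstar] \ge H\bigl(\pistar(A)-\pi_\cA(A)\bigr) - 1 \ge H\bigl(2M\epsilon - M\epsilon\bigr) - 1 = HM\epsilon - 1,
\]
which exceeds $\gamma$ once we choose $H=(\gamma+1)/(M\epsilon)$. This is where unbounded rewards are used.

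The main obstacle is Step 1. Because $\cA$ is adversarial, the bound $\pi_\cA(A)\le M\epsilon$ must hold for the law of $\cA$'s output \emph{under the instance with $\rstar_A$}. This is valid because $\cA$ accesses the environment only through $\piref$-samples and $\rhat$, neither of which depends on $A$. So $\pi_\cA$ is determined before $A$ is chosen, and the pigeonhole argument is legitimate. A secondary concern is that the pigeonhole bound $M\epsilon$ must stay strictly below $\pistar(A)=2M\epsilon$. If needed, we can boost the separation by raising $c_1$ above $2M$, at the cost of adjusting the $\cE_M$ value; the stated equality $\cE_M(\pistar\|\piref)=\epsilon M$ fixes $c_1=2M$, and the factor-$2$ gap already suffices.
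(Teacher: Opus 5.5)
There is a genuine gap in Step 2. Your comparator has density $c_1=2M$ on a set of $\piref$-mass $\epsilon$, so $\pistar(A)=2M\epsilon$, which exceeds $1$ as soon as $\epsilon>1/(2M)$. Equivalently, $c_2=(1-2M\epsilon)/(1/M-\epsilon)$ is negative there. The hypothesis $\epsilon<1/M$ only gives $c_2\le M$, not $c_2\ge 0$. So for $\epsilon\in(1/(2M),1/M)$ your $\pistar$ is not a probability measure, and the argument covers only part of the stated range; raising $c_1$, as you suggest, makes this worse. The obstruction is structural. With $D=\{d\pistar/d\piref>M\}$, the requirement $\cE_M(\pistar\|\piref)=\pistar(D)-M\piref(D)=\epsilon M$ together with $\pistar(D)\le 1$ forces $\piref(D)\le 1/M-\epsilon$. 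A density-$2M$ profile on an $\epsilon$-mass set violates this once $\epsilon>1/(2M)$.

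Step 1 also has a slip. Each $A_j$ has mass $1/(MK)\le\epsilon$, with equality only when $1/(M\epsilon)$ is an integer, so you cannot shrink $A_j$ to mass exactly $\epsilon$. Using $\lfloor 1/(M\epsilon)\rfloor$ pieces of mass at least $\epsilon$ instead only gives $\pi_\cA(A)\le 1/\lfloor 1/(M\epsilon)\rfloor$. This exceeds $M\epsilon$ unless $1/(M\epsilon)$ is an integer, so the bound you use in Step 3 is not justified as written. The clean tool is averaging: by non-atomicity of $\piref$, a sublevel (resp. superlevel) set of $d\pi_\cA/d\piref$ inside $S$ of $\piref$-mass $\epsilon$ has $\pi_\cA$-mass at most (resp. at least) $\epsilon M\,\pi_\cA(S)$.

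The paper avoids the feasibility problem by reversing roles. It takes $P\subseteq S$ with $\piref(P)=\epsilon$ and $\pi_\cA(P)\ge\epsilon M\,\pi_\cA(S)$, i.e., the $\epsilon$-set that $\cA$ favors most. It then rewards only $C=S\setminus P$ and sets $\pistar=\piref(\cdot\mid C)$. This comparator is always valid, and its density $1/(1/M-\epsilon)>M$ on $C$ gives $\cE_M(\pistar\|\piref)=1-M(1/M-\epsilon)=\epsilon M$ for every $\epsilon<1/M$.

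Your reward works verbatim in this construction. With $\rstar=\rhat+H\indic\{y\in C\}$, the only misordered pairs are $P$ versus $C$ (a tie under $\rhat$), so $\epw(\rhat)=\epsilon(1/M-\epsilon)\le\epsilon$. Moreover, $\ee_{\pistar}[\rstar]-\ee_{\pi_\cA}[\rstar]\ge H(1-\pi_\cA(C))\ge H\bigl(1-\pi_\cA(S)(1-\epsilon M)\bigr)\ge HM\epsilon$, and choosing $H=\gamma/(M\epsilon)$ finishes. For $\epsilon\le 1/(2M)$, your original argument, with the averaging fix, is a valid mirror image of this one: you reward the $\epsilon$-set $\cA$ favors least, whereas the paper rewards the complement of the $\epsilon$-set it favors most.
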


\begin{proof}
Define $\alpha\defeq 1/M$.
We construct $\piref$ to be non-atomic and supported on two disjoint measurable regions
$B$ and $S$ with $\piref(S)=\alpha$ and $\piref(B)=1-\alpha$. Precisely, let $B$ and $S$ be disjoint measurable sets, each equipped with a non-atomic probability measure. Define the reward model
\begin{align}
\rhat(y)=0 \ \text{for } y\in B,
\qquad
\rhat(y)=R_{\max}\ \text{for } y\in S.
\end{align}
Let $\pi_{\cA}$ denote the (possibly randomized) policy output by $\cA$ when run on input $(\piref,\rhat)$. Fix $\epsilon\in(0,\alpha)$.
Because $\rhat$ is constant on $S$, the algorithm has no score-based signal to distinguish points within $S$ and may concentrate
arbitrarily on some subset of $S$. We therefore choose a region $P\subseteq S$ adversarially as an $\epsilon$-mass subset
that captures as much of $\pi_{\cA}$'s mass as possible:
\begin{align}
P \in \arg\sup\Big\{\pi_{\cA}(A): A\subseteq S,\ \piref(A)=\epsilon\Big\}.
\end{align}
Note that because $\piref$ is non-atomic and $\piref(S)=\alpha$, there exist measurable subsets of $S$
of any prescribed $\piref$-mass in $[0,\alpha]$. 
An exact maximizer need not exist; any $P$ achieving the supremum up to an arbitrarily small slack suffices.
Also, define $C\defeq S\setminus P$ as well, so $\piref(C)=\alpha-\epsilon$. Since $P$ is (approximately) maximal among $\epsilon$-mass subsets of $S$, its $\pi_{\cA}$-mass is at least the average
$\pi_{\cA}$-mass over an $\epsilon/\alpha$ fraction of $S$:
\begin{align}
\pi_{\cA}(P)\ \ge\ \frac{\piref(P)}{\piref(S)}\,\pi_{\cA}(S)
\ =\ (\epsilon M)\,\pi_{\cA}(S).
\end{align}
Now we can let the true reward be maximized on the rest of $S$ by setting
\begin{align}
\rstar(y)\defeq c\,\rhat(y)\,\indic\{y\in C\}
=
\begin{cases}
cR_{\max}, & y\in C,\\
0, & y\in B\cup P,
\end{cases}
\end{align}
for some $c>0$ to be chosen later,
so that $\pistar\defeq \piref(\cdot|\,C)$ and then $\ee_{\pistar}[\rstar]=cR_{\max}$. Next, we compute $\cE_M(\pistar\|\piref)$.
The density ratio $w^\star=\frac{d\pistar}{d\piref}$ equals $\frac{1}{\piref(C)}=\frac{1}{\alpha-\epsilon}$ on $C$ and $0$ elsewhere.
Since $M=1/\alpha$, we have $w^\star>M$ on $C$, and therefore
\begin{align}
\cE_M(\pistar\|\piref)
&=\ee_{\piref}\big[(w^\star-M)_+\big]\\
&=\piref(C)\Big(\frac{1}{\piref(C)}-M\Big) \\
&=\epsilon M.
\end{align}
We must also ensure that the pairwise error is small. Recall the pairwise outcome
\begin{align}
\phi_{r}(y,y')=\indic\{r(y)>r(y')\}+\tfrac12\,\indic\{r(y)=r(y')\} 
\end{align}
and the pairwise error
\begin{align}
\epspw(\rhat)=\ee_{y,y'\sim\piref}\left[\left|\phi_{\rstar}(y,y')-\phi_{\rhat}(y,y')\right|\right].
\end{align}
In our construction, $\rhat$ is constant on $S$ and $\rstar$ is constant on each of $B,P,C$.
Thus $\phi_{\rstar}(y,y')=\phi_{\rhat}(y,y')$ except for when exactly one of $(y,y')$ falls in $P$ and the
other falls in $B\cup C$. Moreover, when $y \sim P$ and $y' \sim C$ (or vice versa), we have
$\phi_{\rhat}(y,y')=\tfrac12$ (a tie under $\rhat$) while $\phi_{\rstar}(y,y')\in\{0,1\}$ (a strict comparison under $\rstar$),
and when $y \sim P$ and $y' \sim B$ (or vice versa) we have $\phi_{\rstar}(y,y')=\tfrac12$ (a tie under $\rstar$) while $\phi_{\rhat}(y,y')\in\{0,1\}$
(a strict comparison under $\rhat$). In all these cases,
$|\phi_{\rstar}(y,y')-\phi_{\rhat}(y,y')|=\tfrac12$. Therefore,
\begin{align}
\epspw(\rhat)
&=\tfrac12\cdot \left(\pp\left(y\in P,\, y' \notin P\right) + \pp\left(y'\in P, \,y \notin P\right)\right)\\
&=\tfrac12\cdot 2\,\piref(P)\left(1-\piref(P)\right)\\
&=\epsilon(1-\epsilon)\le \epsilon.
\end{align}
Above, we chose $P\subseteq S$ with $\piref(P)=\epsilon$ so that
\begin{align}
\pi_{\cA}(P)\ \ge\ \frac{\piref(P)}{\piref(S)}\,\pi_{\cA}(S)
\ =\ (\epsilon M)\,\pi_{\cA}(S),
\end{align}
Then, since $\rstar$ is $cR_{\max}$ on $C=S\setminus P$ and $0$ elsewhere,
\begin{align}
\ee_{\pi_{\cA}}[\rstar]
&=cR_{\max}\,\pi_{\cA}(C)\\
&=cR_{\max}\big(\pi_{\cA}(S)-\pi_{\cA}(P)\big)\\
&\le cR_{\max}\,\pi_{\cA}(S)\,(1-\epsilon M)\\
&\le cR_{\max}(1-\epsilon M).
\end{align}
Therefore,
\begin{align}
\ee_{\pistar}[\rstar]-\ee_{\pi_{\cA}}[\rstar]
\ge cR_{\max}\epsilon M.
\end{align}
Finally, for any target $\gamma>0$, choosing $c\defeq \gamma/(R_{\max}\epsilon M)$ gives \eqref{eq:forallA-gap-gamma}.

\end{proof}

\section{Supporting Lemmas}
\label{sec:supporting-lemmas}
The lemmas in this section provide key auxiliary results used throughout the rest of the proofs. \Cref{lem:win-transfer-em} quantifies the win-rate penalty incurred when transferring between reward models, and provides good intuition for maneuvering the regret decomposition in the upper bounds. It is here where the alignment between the pairwise error and win-rate defintions is useful: we can translate control of pairwise outcome disagreements into control of win-rate differences. \Cref{lem:cdf-identity-pf} demonstrates the relationship between our win-rate metric and the expected quantile under $\rhat$ achieved by a policy under a fixed reward model. 

\begin{lemma}
\label{lem:win-transfer-em}
Fix a reference policy $\piref$, a true reward $\rstar$, and a learned reward model $\rhat$.
For any policy $\pi\ll\piref$ with density ratio $w_\pi \defeq \nicefrac{d\pi}{d\piref}$ and any $M\ge 1$,
\begin{align}
\left|\R^{\rstar}(\pi)-\R^{\rhat}(\pi)\right|
\;\le\;
M\cdot \epw(\rhat)\;+\;\cE_M(\pi\|\piref).
\end{align}
\end{lemma}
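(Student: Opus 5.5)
We write both win-rates as expectations of the pairwise outcome and compare pointwise. By definition,
\begin{align}
\R^{\rstar}(\pi)-\R^{\rhat}(\pi) = \ee_{y\sim\pi,\,y'\sim\piref}\left[\phi_{\rstar}(y,y')-\phi_{\rhat}(y,y')\right],
\end{align}
so by the triangle inequality it suffices to bound $\ee_{y\sim\pi,\,y'\sim\piref}[D(y,y')]$, where $D(y,y')\defeq|\phi_{\rstar}(y,y')-\phi_{\rhat}(y,y')|\in[0,1]$. Next we change measure in the first coordinate only: writing $\ee_{y\sim\pi}[\cdot]=\ee_{y\sim\piref}[w_\pi(y)\,\cdot\,]$ gives
\begin{align}
\ee_{y\sim\pi,\,y'\sim\piref}[D(y,y')] = \ee_{y,y'\sim\piref}\left[w_\pi(y)\,D(y,y')\right].
\end{align}

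The key step is to truncate the density ratio at level $M$. We split pointwise as $w_\pi(y)=\min\{w_\pi(y),M\}+(w_\pi(y)-M)_+$. For the truncated part, $\min\{w_\pi,M\}\le M$ and $D\ge 0$, so its contribution is at most $M\cdot\ee_{y,y'\sim\piref}[D(y,y')]=M\cdot\epw(\rhat)$. This is precisely where the definition of $\epw$, with both arguments drawn from $\piref$, matches. For the excess part, we use $D\le 1$ and integrate out $y'$, which leaves at most $\ee_{y\sim\piref}[(w_\pi(y)-M)_+]=\cE_M(\pi\|\piref)$. Adding the two contributions gives the claimed bound.

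There is no substantial obstacle here. The only points needing care are that $D$ is bounded by $1$, since $\phi$ takes values in $\{0,\tfrac12,1\}$, and that the change of measure is valid because $\pi\ll\piref$. The assumption $M\ge1$ is not actually needed for the argument, so the bound holds for any $M\ge 0$.
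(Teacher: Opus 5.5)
Your proposal is correct and follows essentially the same argument as the paper: the triangle inequality, a change of measure in the first coordinate via $w_\pi$, and the pointwise split $w_\pi=(w_\pi\wedge M)+(w_\pi-M)_+$ together with $0\le D\le 1$. Your observation that the hypothesis $M\ge 1$ is never used is also correct.
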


\begin{proof}
For any score function $r$, we defined the pairwise outcome
\begin{align}
\phi_{r}(y,y') = \indic\{r(y)>r(y')\}+\frac12\,\indic\{r(y)=r(y')\}\in\left\{0,\frac12,1\right\}.
\end{align}
Then by definition, the win-rate functional can be written as
\begin{align}
\R^{r}(\pi) = \ee_{y\sim\pi,\;y'\sim\piref}\left[\phi_{r}(y,y')\right].
\end{align}
In addition, recall the pairwise error
\begin{align}
\epw(\rhat) = \ee_{y,y'\sim\piref}\left[|\phi_{\rstar}(y,y')-\phi_{\rhat}(y,y')|\right].
\end{align}
In order to bound the win-rate difference, we first observe that
\begin{align}
\left|\R^{\rstar}(\pi)-\R^{\rhat}(\pi)\right|
&= \left|\ee_{y\sim\pi,\;y'\sim\piref}\left[\phi_{\rstar}(y,y')-\phi_{\rhat}(y,y')\right]\right| \\
&\le \ee_{y\sim\pi,\;y'\sim\piref}\left[\left|\phi_{\rstar}(y,y')-\phi_{\rhat}(y,y')\right|\right].
\end{align}
For shorthand, let $\ell(y,y')=\left|\phi_{\rstar}(y,y')-\phi_{\rhat}(y,y')\right|$. We can change measure using $w_\pi$:
\begin{align}
\ee_{y\sim\pi,\;y'\sim\piref}\left[\ell(y,y')\right]
= \ee_{y\sim\piref,\;y'\sim\piref}\left[w_\pi(y)\,\ell(y,y')\right].
\end{align}
Then, we can decompose $w_\pi(y)=(w_\pi(y)\wedge M) + (w_\pi(y)-M)_+$ and use $\ell\in[0,1]$:
\begin{align}
\ee_{y,y'\sim\piref}\left[w_\pi(y)\ell(y,y')\right]
&= \ee_{y,y'\sim\piref}\left[(w_\pi(y)\wedge M)\,\ell(y,y')\right]
  + \ee_{y,y'\sim\piref}\left[(w_\pi(y)-M)_+\,\ell(y,y')\right] \\
&\le M\cdot\ee_{y,y'\sim\piref}\left[\ell(y,y')\right]
  + \ee_{y\sim\piref}\left[(w_\pi(y)-M)_+\right] \\
&= M\cdot \epw(\rhat) + \cE_M(\pi\|\piref).
\end{align}
\end{proof}
\begin{lemma}
\label{lem:cdf-identity-pf}
Fix a reward model $\rhat:\cY\to\rr$. Define the mid-CDF transform
\begin{align}
\widetilde F_{\rhat}(t)\;\defeq\;\frac12\left(F_{\rhat}(t^-)+F_{\rhat}(t)\right)\in[0,1].
\end{align}
where
\begin{align}
F_{\rhat}(t)\;\defeq\;\pp_{y'\sim\piref}\left(\rhat(y')\le t\right),
\qquad
F_{\rhat}(t^-)\;\defeq\;\pp_{y'\sim\piref}\left(\rhat(y')< t\right).
\end{align}
Then for any policy $\pi$,
\begin{align}
\label{eq:midcdf-identity}
\R^{\rhat}(\pi)
\;=\;
\ee_{y\sim\pi}\Big[\widetilde F_{\rhat}\left(\rhat(y)\right)\Big],
\end{align}
Moreover, let $V\sim{\rm Unif}[0,1]$ be independent of everything, and define the
randomized uniformized score
\begin{align}
\label{eq:randomized-pit}
\tilde u(y)
\;\defeq\;
F_{\rhat}(\rhat(y)^-)\;+\;V\cdot\left(F_{\rhat}(\rhat(y))-F_{\rhat}(\rhat(y)^-)\right)\in[0,1].
\end{align}
Then:
\begin{enumerate}
\item For any $\pi$, $\R^{\rhat}(\pi)=\ee_{y\sim\pi,V}\big[\tilde u(y)\big]$.
\item If $y'\sim\piref$, then $\tilde u(y')\sim{\rm Unif}[0,1]$.
\end{enumerate}
\end{lemma}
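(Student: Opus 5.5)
The plan is to verify \eqref{eq:midcdf-identity} first by conditioning on $y$, and then deduce the two enumerated claims from the randomized probability integral transform.

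\textbf{Step 1: the mid-CDF identity.} Fix $y$ and set $t=\rhat(y)$. By the definition of $\phi_{\rhat}$ and independence of $y'\sim\piref$ from $y$,
\begin{align}
\ee_{y'\sim\piref}\left[\phi_{\rhat}(y,y')\right]=\pp(\rhat(y')<t)+\tfrac12\,\pp(\rhat(y')=t)=F_{\rhat}(t^-)+\tfrac12\left(F_{\rhat}(t)-F_{\rhat}(t^-)\right)=\widetilde F_{\rhat}(t).
\end{align}
Taking expectation over $y\sim\pi$ (Fubini, since everything is bounded) gives \eqref{eq:midcdf-identity}.

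\textbf{Step 2: claim 1.} This is immediate once Step 1 is in hand. Conditioning on $y$ and using $\ee[V]=\tfrac12$ with $V$ independent of $y$,
\begin{align}
\ee_V[\tilde u(y)]=F_{\rhat}(\rhat(y)^-)+\tfrac12\left(F_{\rhat}(\rhat(y))-F_{\rhat}(\rhat(y)^-)\right)=\widetilde F_{\rhat}(\rhat(y)).
\end{align}
Averaging over $y\sim\pi$ and combining with Step 1 yields $\R^{\rhat}(\pi)=\ee_{y\sim\pi,V}[\tilde u(y)]$.

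\textbf{Step 3: claim 2.} This is the randomized probability integral transform, and I expect it to be the only mildly delicate step because of atoms. Let $X=\rhat(y')$ with CDF $F=F_{\rhat}$, and fix $s\in(0,1)$. The goal is to show $\pp(\tilde u\le s)=s$. Let $t_s=\inf\{t:F(t)\ge s\}$ be the generalized quantile, so that $F(t_s^-)\le s\le F(t_s)$ by right-continuity. Split the event on $X$:
\begin{itemize}
\item if $X<t_s$, then $F(X)\le F(t_s^-)\le s$, hence $\tilde u\le F(X)\le s$ surely;
\item if $X>t_s$, then $F(X^-)\ge F(t_s)\ge s$, hence $\tilde u\ge s$, and $\tilde u=s$ only on a null event.
\end{itemize}
The latter holds because either $F(X^-)>s$ or there is no mass in $(t_s,X)$, in which case the value $s$ is hit with probability zero.

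It remains to handle $X=t_s$, which occurs with probability $p=F(t_s)-F(t_s^-)$. On this event $\tilde u=F(t_s^-)+Vp$, so the conditional probability that $\tilde u\le s$ is $(s-F(t_s^-))/p$ when $p>0$. Adding the pieces gives
\begin{align}
\pp(\tilde u\le s)=F(t_s^-)+p\cdot\frac{s-F(t_s^-)}{p}=s.
\end{align}
When $p=0$, the first two cases already give $F(t_s^-)=F(t_s)=s$. Hence $\tilde u(y')\sim\mathrm{Unif}[0,1]$.

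An equivalent route for claim 2 is to write $\tilde u$ as $F$ evaluated at the lexicographic pair $(X,V)$ under the product order. I would use the direct case split above because it handles ties explicitly.
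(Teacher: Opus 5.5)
Your proposal is correct and follows the paper's proof essentially step for step: you condition on $y$ to obtain the mid-CDF identity, use $\ee[V]=\tfrac12$ for claim 1, and prove claim 2 by the same three-way split of $X$ around the generalized quantile $t_s$. Your version is marginally more careful than the paper's in two spots. In the $X>t_s$ case, the paper asserts $\pp(\tilde u\le u\mid X)=0$ pointwise, which fails on the $\piref$-null set where $F$ is flat at level $u$; you correctly say only that $\{\tilde u=s\}$ is a null event. You also treat the no-atom case $p=0$ explicitly.
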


\begin{proof} By the law of total expectation,
\begin{align}
\R^{\rhat}(\pi)
=
\ee_{y\sim\pi}\!\Big[
\pp_{y'\sim\piref}\left(\rhat(y)>\rhat(y')\mid y\right)
+\frac12\,\pp_{y'\sim\piref}\left(\rhat(y)=\rhat(y')\mid y\right)
\Big].
\end{align}
Fix $y$. Then
\begin{align}
\pp_{y'\sim\piref}\left(\rhat(y)>\rhat(y')\mid y\right)
=
\pp_{y'\sim\piref}\left(\rhat(y')<\rhat(y)\right)
=
F_{\rhat}(\rhat(y)^-),
\end{align}
and
\begin{align}
\pp_{y'\sim\piref}\left(\rhat(y)=\rhat(y')\mid y\right)
&=
\pp_{y'\sim\piref}\left(\rhat(y')\le \rhat(y)\right)-\pp_{y'\sim\piref}\left(\rhat(y')<\rhat(y)\right)\\
&=
F_{\rhat}(\rhat(y))-F_{\rhat}(\rhat(y)^-).   
\end{align}
Plugging these two identities into the previous display yields \eqref{eq:midcdf-identity}. 

\smallskip
\noindent
Next, let $V\sim{\rm Unif}[0,1]$ be independent. By \eqref{eq:randomized-pit} and $\ee[V]=\tfrac12$,
\begin{align}
\ee\big[\tilde u(y)\mid y\big]
&=
F_{\rhat}(\rhat(y)^-)\;+\;\ee[V]\cdot\left(F_{\rhat}(\rhat(y))-F_{\rhat}(\rhat(y)^-)\right) \\
&=
F_{\rhat}(\rhat(y)^-)+\frac12\cdot\left(F_{\rhat}(\rhat(y))-F_{\rhat}(\rhat(y)^-)\right).
\end{align}
Taking $\ee_{y\sim\pi}$ on both sides and comparing to \eqref{eq:midcdf-identity} gives
$\R^{\rhat}(\pi)=\ee_{y\sim\pi,\;V}[\tilde u(y)]$.

\smallskip
\noindent
Finally, we prove $\tilde u(y')\sim{\rm Unif}[0,1]$ when $y'\sim\piref$.
Let $X\defeq \rhat(y')$ and write
\begin{align}
A\defeq F_{\rhat}(X^-),\qquad B\defeq F_{\rhat}(X),
\end{align}
so that conditional on $X$, $\tilde u(y')=A+V(B-A)$ is uniform on the interval $[A,B]$.
Fix any $u\in[0,1]$ and define the generalized inverse
\begin{align}
t_u \;\defeq\; \inf\{t\in\rr:\;F_{\rhat}(t)\ge u\}.
\end{align}
By definition of $t_u$, we have $F_{\rhat}(t_u^-)\le u\le F_{\rhat}(t_u)$ and
\begin{align}
\pp(X<t_u)=F_{\rhat}(t_u^-),\qquad \pp(X=t_u)=F_{\rhat}(t_u)-F_{\rhat}(t_u^-).
\end{align}
Now decompose according to $X$:
\begin{itemize}
\item If $X<t_u$, then $B=F_{\rhat}(X)\le F_{\rhat}(t_u^-) \le u$, hence $\pp(\tilde u\le u\mid X)=1$.
\item If $X>t_u$, then $A=F_{\rhat}(X^-)\ge F_{\rhat}(t_u)\ge u$, hence $\pp(\tilde u\le u\mid X)=0$.
\item If $X=t_u$, then by construction, $\tilde u$ is uniform on $[F_{\rhat}(t_u^-),F_{\rhat}(t_u)]$, so
\begin{align}
\pp(\tilde u\le u\mid X=t_u)=\frac{u-F_{\rhat}(t_u^-)}{F_{\rhat}(t_u)-F_{\rhat}(t_u^-)}.
\end{align}
\end{itemize}
Therefore,
\begin{align}
\pp(\tilde u\le u)
&=\pp(X<t_u)\cdot 1 \;+\;\pp(X=t_u)\cdot \frac{u-F_{\rhat}(t_u^-)}{F_{\rhat}(t_u)-F_{\rhat}(t_u^-)} \\
&=F_{\rhat}(t_u^-)\;+\;\left(F_{\rhat}(t_u)-F_{\rhat}(t_u^-)\right)\cdot
\frac{u-F_{\rhat}(t_u^-)}{F_{\rhat}(t_u)-F_{\rhat}(t_u^-)}
\;=\;u.
\end{align}
Since this holds for all $u\in[0,1]$, we conclude $\tilde u(y')\sim{\rm Unif}[0,1]$.
\end{proof}

\section{Proofs for Best-of-$N$}\label{app:bon_proofs}
This section gives proofs for the guarantees of Best-of-$N$ in \Cref{sec:BoN}. The results in this section build on the guarantees for approximate rejection sampling as mentioned in \Cref{sec:pf-sketch-bon}, and are organized as follows:
\begin{enumerate}
    \item \Cref{sec:bon-upper-bound-pf} provides a general upper bound on the regret of BoN in terms of the $\cE_M$-divergence introduced in \Cref{sec:pw-error}, and in \Cref{sec:lower-bound-pf} we give a general lower bound on regret. 
    \item In \Cref{sec:computational-lower-bound} we provide a lower bound to demonstrate the computational efficiency of BoN. 
\end{enumerate}

\noindent
The proof of \Cref{thm:bon-win-rate-upper-bound} below makes use of the approximate rejection sampling analysis of \citet{block2023sample}. We briefly remind the reader that, given a base distribution $\piref$ and a target policy $\pi\ll\piref$ with density ratio
$w\defeq d\pi/d\piref$, classical rejection sampling draws $y\sim\piref$ and accepts it with probability
$\min\{w(y)/M,1\}$; if $w\le M$ $\piref$-a.s., then the accepted sample is exactly distributed as $\pi$
(\citet{neumann1963various}).
When $w$ is not uniformly bounded, the same accept/reject rule truncates the target: mass in regions where
$w(y)>M$ cannot be fully realized and is effectively “clipped’’ at level $M$.
Running this procedure for $N$ proposals (returning a default proposal if no acceptance occurs) and $M$ sufficiently large induces a policy
$\pi_R$ that approximates $\pi$ in total variation, with error controlled by the clipped mass
$\cE_M(\pi\|\piref)$ plus an exponentially small failure-to-accept term
(\citealp{block2023sample}; see \Cref{tv-to-em-rs}).

\subsection{Proof of \Cref{thm:bon-win-rate-upper-bound}}
\label{sec:bon-upper-bound-pf}
\begin{proof} 
As detailed in \Cref{sec:pf-sketch-bon}, in order to compare $\pihatbon$ to an arbitrary comparator $\pistar$ under the true reward $\rstar$, we need to  balance two competing effects: if $N$ is too small, BoN may fail to find responses comparable to those favored by $\pistar$; if $N$ is too large, BoN may overfit to $\rhat$ and pick outputs where $\rhat$ and $\rstar$ disagree. 
These two effects correspond exactly to the two terms in the upper bound \eqref{eq:bon_ub}, and they appear via the following regret decomposition, which holds for any $M > 0$: the regret $\R^{\rstar}(\pistar)-\R^{\rstar}(\pihatbon)$ is equal to the sum of three terms,
\begin{align}\label{eq:bon-regret-decomp}
 \underbrace{\R^{\rstar}(\pistar)-\R^{\rhat}(\pistar_M)}_{\mathrm{(T1)}}  + \underbrace{\R^{\rhat}(\pistar_{M})-\R^{\rhat}(\pihatbon)}_{\mathrm{(T2)}} + \underbrace{\R^{\rhat}(\pihatbon)-\R^{\rstar}(\pihatbon)}_{\mathrm{(T3)}},
\end{align}
where $\pistar_M$ is an intermediate comparator policy that is a closest $M$-capped approximation to $\pistar$ in total variation:
\begin{align}
\label{eq:tv-proj-def}
\pistar_M
\;\in\;
\argmin_{\pi\ll\piref:\;\frac{d\pi}{d\piref}\le M\ \piref\text{-a.s.}}
\TV(\pistar,\pi).
\end{align}
To bound $\mathrm{(T1)}$, we can further decompose 
\iftoggle{colt}{
$\R^{\rstar}(\pistar)-\R^{\rhat}(\pistar_M)
= \R^{\rstar}(\pistar)-\R^{\rstar}(\pistar_M)
+ \R^{\rstar}(\pistar_M)-\R^{\rhat}(\pistar_M)$
}{
\begin{align}
\R^{\rstar}(\pistar)-\R^{\rhat}(\pistar_M)
&= \R^{\rstar}(\pistar)-\R^{\rstar}(\pistar_M) + \R^{\rstar}(\pistar_M)-\R^{\rhat}(\pistar_M)
\end{align}
} and control each of the two resulting terms separately. For the first difference, we observe that win-rate is bounded and thus $1$-Lipschitz in total variation distance (it can be written as an expectation
of the mid-CDF $\widetilde F_{\rhat}(\rhat(y))\in[0,1]$ as in \Cref{lem:cdf-identity-pf}), so we can control the loss from $\pistar$ to $\pistar_M$ in terms of $\cE_M(\pistar \| \piref)$ by \citet{block2023sample}:
\Cref{lem:em-lower-bound-tv} shows that no policy with bounded density ratio can achieve total variation distance to $\pistar$ less than $\cE_M(\pistar\|\piref)$ itself, and \Cref{lem:em-upper-bound-tv} demonstrates that the total variation distance between $\pistar$ and $\pistar_M$ as defined above is less than $\cE_M(\pistar\|\piref)$, so $\TV(\pistar,\pistar_M)=\cE_M(\pistar\|\piref)$, and hence
\begin{align}
\R^{\rstar}(\pistar)-\R^{\rstar}(\pistar_M)
\;\le\;
\TV(\pistar,\pistar_M)=\cE_M(\pistar\|\piref).
\end{align}
For the second difference, we use the fact that the likelihood ratio of $\pistar_M$ to $\piref$ is upper bounded by $M$ almost surely to control the pairwise win rate error under $\pistar_M$ between $\rhat$ and $\rstar$. Specifically, we have 
$w_{\pistar_M}(y)\le M$ $\piref$-a.s., hence $\cE_M(\pistar_M\|\piref)=0$ and applying \Cref{lem:win-transfer-em} with $\pi=\pistar_M$ and truncation $M$ yields
\begin{align}
\R^{\rstar}(\pistar_M)-\R^{\rhat}(\pistar_M)\le M\epw + \cE_M(\pistar_M\|\piref) = M\epw .
\end{align}
Putting these together, we have
\begin{align}
\mathrm{(T1)}=\R^{\rstar}(\pistar) - \R^{\rhat}(\pistar_M) \leq \cE_M(\pistar\|\piref)+M\epw.
\end{align}
To bound $\mathrm{(T2)}$, we introduce another intermediate comparator in order to utilize the optimality of $\pihatbon$ under $\rhat$ on a fixed batch of $N$ i.i.d. samples from $\piref$. Adopting the formalism of approximate rejection sampling from \citet{block2023sample}, we define a \emph{selection rule} to be a policy $\pistar_{M,\mathrm{rej}}$ that approximates $\pistar_M$ using only $N$ i.i.d.\ draws from $\piref$; it returns the first accepted $y_i$, and if no acceptance occurs it returns $y_1$. We again further decompose $\mathrm{(T2)}$ as 
\iftoggle{colt}{
$\R^{\rhat}(\pistar_{M})-\R^{\rhat}(\pihatbon)=\R^{\rhat}(\pistar_{M})-\R^{\rhat}(\pistar_{M,\mathrm{rej}}) + \R^{\rhat}(\pistar_{M,\mathrm{rej}})-\R^{\rhat}(\pihatbon)$
}{
\begin{align}
\R^{\rhat}(\pistar_{M})-\R^{\rhat}(\pihatbon)=\R^{\rhat}(\pistar_{M})-\R^{\rhat}(\pistar_{M,\mathrm{rej}}) + \R^{\rhat}(\pistar_{M,\mathrm{rej}})-\R^{\rhat}(\pihatbon)
\end{align}
}
and control each of the two resulting terms separately. For the first difference, we use the approximate rejection sampling analysis of \citet{block2023sample} to show that $\pistar_{M,\mathrm{rej}}$ approximates $\pistar_M$ in total variation distance up to an exponentially small term in $N/M$, which again transfers to win-rate by boundedness. Thus
\begin{align}
    \R^{\rhat}(\pistar_{M})-\R^{\rhat}(\pistar_{M,\mathrm{rej}})
    &\leq \TV\left(\pistar_M,\pistar_{M,\mathrm{rej}}\right) \\
    &\leq \cE_M(\pistar_M\|\piref) + \frac{1}{2}\exp\left(-\frac{N}{M}\left(1-\cE_M(\pistar_M\|\piref)\right)\right)\\
    &=\frac12\exp(-N/M),
\end{align}
where the second inequality follows from
\Cref{tv-to-em-rs}, and  the last equality holds because $\pistar_M$ is $M$-capped, so $\cE_M(\pistar_M\|\piref)=0$. To bound the second difference, we observe that conditional on any fixed batch of $N$ samples from $\piref$, BoN simply selects the output with the highest $\rhat$ value, and thus dominates any other selection rule on that same batch under $\rhat$, including $\pistar_{M,\mathrm{rej}}$. To make this concrete, we first apply \Cref{lem:cdf-identity-pf} to rewrite win-rate under $\rhat$ as an expectation of
$\widetilde F_{\rhat}(\rhat(\cdot))$, so that
\begin{align}
\R^{\rhat}(\pistar_{M,\mathrm{rej}})-\R^{\rhat}(\pihatbon)
&=
\ee_{y\sim\pistar_{M,\mathrm{rej}}}\!\left[\widetilde F_{\rhat}\!\left(\rhat(y)\right)\right]
-
\ee_{y\sim\pihatbon}\!\left[\widetilde F_{\rhat}\!\left(\rhat(y)\right)\right].
\end{align}
We can now appeal to the optimality of $\pihatbon$ under $\rhat$. Below, we use $\ee_{\YhatN\sim\piref}[\cdot]$ to refer to expectations over $N$ samples $\YhatN$ drawn from $\piref$, and, given $\YhatN$, we use $\ee_{\pistar_{M,\mathrm{rej}}\mid\YhatN}[\cdot]$ to refer to the expectation over responses induced by running approximate rejection sampling conditioned on the realization of the set $\YhatN$ drawn by the algorithm; we define $\ee_{\pihatbon\mid\YhatN}[\cdot]$ analogously. We can then use the linearity of expectation, so that
\begin{align}
\ee_{\YhatN\sim\piref} \left[\ee_{y\sim\pistar_{M,\mathrm{rej}}\mid\YhatN}\!\left[ \widetilde F_{\rhat}\left(\rhat(y)\right) \right]- \ee_{y\sim\pihatbon\mid\YhatN}\!\left[ \widetilde F_{\rhat}\left(\rhat(y)\right) \right]\right]
\end{align}
couples the set $\YhatN$ drawn by the two algorithms, and compares the performance of $\pihatbon$ and $\pistar_{M,\mathrm{rej}}$ for a fixed set of $N$ responses. Then, because the BoN policy always chooses a response with the largest value under $\rhat$ for any fixed $\YhatN$, and $\widetilde F_{\rhat}$ is nondecreasing, we have $\widetilde F_{\rhat}(\rhat(y))\le \max_{i\in[N]}\widetilde F_{\rhat}(\rhat(y_i))$ for all $y \in \YhatN$. Therefore,
\begin{align}
\R^{\rhat}(\pistar_{M,\mathrm{rej}})-\R^{\rhat}(\pihatbon) \leq 0.
\end{align}
Combining these steps gives
\begin{align}
\mathrm{(T2)}=\R^{\rhat}(\pistar_{M})-\R^{\rhat}(\pihatbon)\leq\frac12\exp(-N/M). 
\end{align}
Finally, we control $\mathrm{(T3)}$ using a similar change-of-measure argument as for $\mathrm{(T1)}$. Before applying \Cref{lem:win-transfer-em} again, we first show that BoN has density ratio at most $N$ relative to $\piref$, and thus the pairwise win-rate error under $\pihatbon$ between $\rhat$ and $\rstar$ is at most $N \cdot\epw(\rhat)$. Let $\YhatN \defeq (y_1,\dots,y_N)\in\cY^N$ denote the sequence of $N$ i.i.d.\ draws from $\piref$,
and write
\begin{align}
\piref(\YhatN)
\;\defeq\;
\prod_{i=1}^N \piref(y_i)
\end{align}
Since conditioned on $\YhatN$, the BoN algorithm outputs a single element of $\YhatN$ deterministically
(via the fixed tie-breaking rule), we can write $\pihatbon$ in closed form as
\begin{align}
\label{eq:bon-closed-form-pf}
\pihatbon(y)
\;=\;
\sum_{\YhatN\in \cY^N}
\piref(\YhatN)\cdot
\indic\{y\in \YhatN\}\cdot
\indic\Big\{\rhat(y)\ge \rhat(y')\ \ \forall\,y'\in \YhatN\Big\}.
\end{align}
Next, express $\piref(y)$ by marginalizing over $\YhatN$ and then sampling an index uniformly from $\{1,\dots,N\}$:
\begin{align}
\label{eq:ref-closed-form-pf}
\piref(y)
\;=\;
\sum_{\YhatN\in \cY^N}
\piref(\YhatN)\cdot
\sum_{y'\in \YhatN}\frac{\indic\{y'=y\}}{N}.
\end{align}
Combining \eqref{eq:bon-closed-form-pf} and \eqref{eq:ref-closed-form-pf}, for any $y$ with $\piref(y)>0$ we have
\begin{align}
\frac{\pihatbon(y)}{\piref(y)}
&=
N\cdot
\frac{
\sum_{\YhatN\in \cY^N}
\piref(\YhatN)\cdot
\indic\{y\in \YhatN\}\cdot
\indic\Big\{\rhat(y)\ge \rhat(y')\ \forall\,y'\in \YhatN\Big\}
}{
\sum_{\YhatN\in \cY^N}
\piref(\YhatN)\cdot
\sum_{y'\in \YhatN}\indic\{y'=y\}
}\\
&\le
N\cdot
\frac{
\sum_{\YhatN\in \cY^N}
\piref(\YhatN)\cdot
\indic\{y\in \YhatN\}
}{
\sum_{\YhatN\in \cY^N}
\piref(\YhatN)\cdot
\indic\{y\in \YhatN\}
}\\
&=N.
\end{align}
Thus $\nicefrac{\pihatbon}{\piref}\le N$ pointwise, and by definition,
\begin{align}
\cE_N(\pihatbon\|\piref)=\ee_{y\sim\piref}\!\left[\Big(\frac{\pihatbon(y)}{\piref(y)}-N\Big)_+\right]=0.
\end{align}
Now we can apply \Cref{lem:win-transfer-em} to $\pi=\pihatbon$ with truncation $M=N$ such that
\begin{align}
\mathrm{(T3)}\le N\epw + \cE_N(\pihatbon\|\piref) = N\epw .
\end{align}
Combining all the preceding bounds, we obtain
\begin{align}
    \R^{\rstar}(\pistar)-\R^{\rstar}(\pihatbon) \lesssim \cE_M(\pistar\|\piref) + (M+N)\epw+ \exp(-N/M).
\end{align}
Setting $M = N / \log\left( \nicefrac{1}{\epw(\rhat)} \right)$ and observing that for such $M$, it holds that $\exp(-N/M) = \epw(\rhat)$, we obtain the desired result.
\end{proof}

\begin{lemma}[Lemma 32 in \citet{block2023sample}]
\label{lem:em-lower-bound-tv}
Fix $N\ge 1$ and assume $\pistar\ll\piref$ with density ratio
$w^\star \defeq \nicefrac{d\pistar}{d\piref}$.
Then for any $\pi\ll\piref$ whose density ratio $w\defeq \nicefrac{d\pi}{d\piref}$ satisfies
$w \le M$ $\piref$-a.s., we have
\begin{align}
\TV(\pi,\pistar)
\;\ge\;
\cE_M(\pistar\|\piref).
\end{align}
\end{lemma}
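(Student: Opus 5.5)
Write $w^\star = \nicefrac{d\pistar}{d\piref}$ and $w = \nicefrac{d\pi}{d\piref}$. The natural approach is the variational (Scheffé) characterization of total variation, evaluated on the region where the comparator's density ratio exceeds the cap. Let $A \defeq \{y : w^\star(y) > M\}$. Since both $\pi$ and $\pistar$ are absolutely continuous with respect to $\piref$, we have
\begin{align}
\TV(\pi,\pistar) \;=\; \sup_{B} \left|\pistar(B) - \pi(B)\right| \;\ge\; \pistar(A) - \pi(A).
\end{align}
The plan is to show that this single test set already witnesses the full excess mass $\cE_M(\pistar\|\piref)$.

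First I would rewrite both masses as integrals against $\piref$, namely $\pistar(A) = \ee_{\piref}[w^\star \indic_A]$ and $\pi(A) = \ee_{\piref}[w \indic_A]$. Next I would use the cap $w \le M$ $\piref$-a.s.\ to get $\pi(A) \le M\,\piref(A) = \ee_{\piref}[M\indic_A]$. Subtracting gives
\begin{align}
\pistar(A) - \pi(A) \;\ge\; \ee_{\piref}\left[(w^\star - M)\indic_A\right] \;=\; \ee_{\piref}\left[(w^\star - M)_+\right] \;=\; \cE_M(\pistar\|\piref).
\end{align}
The middle equality holds because $(w^\star - M)_+$ vanishes off $A$. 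This completes the argument; the hypothesis $N\ge 1$ plays no role.

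There is no real obstacle here. The only points needing care are measurability of $A$ (immediate, since $w^\star$ is measurable) and the convention that $\TV$ is the supremum over events (equivalently $\tfrac12\|\cdot\|_1$). With that convention the bound holds with constant one, as stated.
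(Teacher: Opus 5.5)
Your proof is correct. The paper gives no proof of this lemma and simply imports it from \citet{block2023sample}, so there is no in-paper argument to compare against. Your test-set argument on $A=\{w^\star>M\}$ is the standard one, equivalently $\TV(\pi,\pistar)=\ee_{\piref}[(w^\star-w)_+]\ge\ee_{\piref}[(w^\star-M)_+]$ pointwise. Your $\TV$ convention matches the one the paper uses in \Cref{lem:em-upper-bound-tv}. That lemma's construction sets the density ratio to exactly $M$ on $A$, so the two lemmas together show your bound is attained. You are also right that the hypothesis $N\ge 1$ plays no role.
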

\begin{lemma}
\label{lem:em-upper-bound-tv}
Fix $N\ge 1$ and assume $\pistar\ll\piref$ with density ratio
$w^\star \defeq \nicefrac{d\pistar}{d\piref}$.
Then there exists a policy $\pistar_M\ll\piref$ such that $\frac{d\pistar_M}{d\piref}(y)\le M\ \ \piref$-a.s., and
\begin{align}
\TV(\pistar_M,\pistar)=\cE_M(\pistar\|\piref).
\end{align}
\end{lemma}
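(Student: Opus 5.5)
The plan is to construct $\pistar_M$ explicitly by clipping the density ratio $w^\star$ at level $M$ and then redistributing the clipped mass onto the region where there is slack below $M$. Throughout I assume $M \geq 1$. This is implicit in the statement: if $M<1$, no probability measure can have density ratio bounded by $M$, since $\ee_{\piref}[w] = 1$.

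Set $\delta \defeq \cE_M(\pistar\|\piref) = \ee_{\piref}[(w^\star - M)_+]$. Since $\ee_{\piref}[w^\star] = 1$, the truncated density $w^\star \wedge M$ has total mass
\begin{align}
\ee_{\piref}[w^\star \wedge M] = 1 - \delta.
\end{align}
The available slack below the cap is
\begin{align}
\ee_{\piref}[M - w^\star\wedge M] = M - 1 + \delta \geq \delta,
\end{align}
where the inequality uses $M \geq 1$. I then define the candidate density ratio
\begin{align}
w_M \defeq w^\star \wedge M + \theta\,(M - w^\star \wedge M), \qquad \theta \defeq \frac{\delta}{M-1+\delta} \in [0,1],
\end{align}
with the convention $\theta = 0$ in the degenerate case $M = 1$, $\delta = 0$. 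By construction $w_M$ is a convex combination of $w^\star\wedge M$ and $M$, so $0 \leq w_M \leq M$ pointwise. Its $\piref$-expectation is $(1-\delta) + \theta(M-1+\delta) = 1$, so $w_M$ defines a valid probability measure $\pistar_M \ll \piref$.

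The remaining step is to compute the total variation distance, and the key observation is that the two sides of the difference live on disjoint sets. On $\{w^\star > M\}$ we have $w^\star \wedge M = M$, so $w_M = M$ and $w^\star - w_M = (w^\star - M)_+ \geq 0$. On $\{w^\star \leq M\}$ we have $w_M - w^\star = \theta(M - w^\star) \geq 0$. Hence
\begin{align}
2\,\TV(\pistar,\pistar_M) = \ee_{\piref}|w^\star - w_M| = \ee_{\piref}[(w^\star-M)_+] + \theta\, \ee_{\piref}[M - w^\star\wedge M] = \delta + \delta,
\end{align}
which gives $\TV(\pistar_M,\pistar) = \cE_M(\pistar\|\piref)$.

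No step is a serious obstacle. The only points requiring care are the requirement $M \geq 1$, needed for enough slack to absorb the clipped mass, and the degenerate case in the definition of $\theta$. Combined with \Cref{lem:em-lower-bound-tv}, this also shows that $\pistar_M$ attains the minimum in \eqref{eq:tv-proj-def}.
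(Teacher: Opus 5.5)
Your proof is correct and is essentially the paper's construction: your slack term $\theta\,(M - w^\star\wedge M)$ is exactly the paper's $g = \cE_M(\pistar\|\piref)\,(M-w^\star)_+/S$ with $S = M-1+\cE_M(\pistar\|\piref)$, and your pointwise computation of $|w^\star - w_M|$ on the disjoint sets $\{w^\star>M\}$ and $\{w^\star\le M\}$ matches the paper's $\TV$ calculation. Your explicit assumption $M\ge 1$ and your convention $\theta=0$ in the degenerate case are what the paper uses implicitly when it asserts $S\ge \cE_M(\pistar\|\piref)$ and sets $\pistar_M=\pistar$ when $\cE_M(\pistar\|\piref)=0$.
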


\begin{proof}
Define the clipped density
\begin{align}
w_M(y)\;\defeq\;\min\{w^\star(y),\,M\}.
\end{align}
Using $w^\star = w_M + (w^\star-M)_+$ pointwise and $\ee_{y\sim\piref}[w^\star(y)]=1$, we get
\begin{align}
\ee_{y\sim\piref}[w_M(y)] \;=\; 1-\cE_M(\pistar\|\piref).
\end{align}
Thus $w_M$ has total mass $1-\cE_M(\pistar\|\piref)$ but is not yet a valid probability density when $\cE_M(\pistar\|\piref)>0$, so we must add back the excess mass while ensuring that the density ratio is still bounded. Define the slack under $M$:
\begin{align}
S \;\defeq\; \ee_{y\sim\piref}\big[(M-w^\star(y))_+\big].
\end{align}
Since $(M-w^\star)_+=M-\min\{w^\star,M\}=M-w_M$ pointwise, we have
\begin{align}
S
=
M-\ee_{y\sim\piref}[w_M(y)]
=
M-(1-\cE_M(\pistar\|\piref))
\ge \cE_M(\pistar\|\piref).
\end{align}
If $\cE_M(\pistar\|\piref)=0$, we set $g\equiv 0$ and $\pistar_M=\pistar$ and we are done. Otherwise we define
\begin{align}
g(y)\;\defeq\; \cE_M(\pistar\|\piref)\cdot \frac{(M-w^\star(y))_+}{S}.
\end{align}
Then $0\le g(y)\le (M-w^\star(y))_+$ $\piref$-a.s.\ and $\ee_{y\sim\piref}[g(y)]=\cE_M(\pistar\|\piref)$. Now we define $\pistar_M\ll\piref$ via
\begin{align}
\frac{d\pistar_M}{d\piref}(y)\;\defeq\; w_M(y)+g(y).
\end{align}
We can verify that normalization holds:
\begin{align}
\ee_{y\sim\piref}\!\left[\frac{d\pistar_M}{d\piref}(y)\right]
=
\ee_{y\sim\piref}[w_M(y)] + \ee_{y\sim\piref}[g(y)]
=
(1-\cE_M(\pistar\|\piref))+\cE_M(\pistar\|\piref)
=
1.
\end{align}
In addition, the pointwise density ratio bound still holds:
if $w^\star(y)>M$ then $w_M(y)=M$ and $(M-w^\star(y))_+=0$, so $g(y)=0$ and $w_M(y)+g(y)=M$;
if $w^\star(y)\le M$ then $w_M(y)=w^\star(y)$ and $g(y)\le M-w^\star(y)$, so $w_M(y)+g(y)\le M$.
Hence $\cE_M(\pistar_M\|\piref)=0$. Finally, we compute the total variation distance using densities:
\begin{align}
\TV(\pistar,\pistar_M)
=
\frac12\,\ee_{y\sim\piref}\!\left[\Big|w^\star(y)-\big(w_M(y)+g(y)\big)\Big|\right].
\end{align}
Pointwise,
\begin{align}
\Big|w^\star(y)-\big(w_M(y)+g(y)\big)\Big|
=
(w^\star(y)-M)_+ + g(y),
\end{align}
because on $\{w^\star>M\}$ we have $w_M=M$ and $g=0$, while on $\{w^\star\le M\}$ we have $w_M=w^\star$.
Taking expectations gives
\begin{align}
\ee_{y\sim\piref}\!\left[\Big|w^\star(y)-\big(w_M(y)+g(y)\big)\Big|\right]
&=
\ee_{y\sim\piref}[(w^\star(y)-M)_+] + \ee_{y\sim\piref}[g(y)]\\
&=
2\cE_M(\pistar\|\piref),
\end{align}
hence $\TV(\pistar,\pistar_M)=\cE_M(\pistar\|\piref)$, which implies the stated upper bound.
\end{proof}

The following lemma (which is originally adapted from Theorem 3 in \citet{block2023sample}) contains sample complexity upper bounds for the approximate rejection sampling procedure detailed in \citet{block2023sample}. We refer to \citet{huang2025best} for a detailed discussion of this result.

\begin{lemma}[Lemma D.4 in \citet{huang2025best}]
\label{tv-to-em-rs}
For any valid policy $\pi \in \Delta(\cY)$, and $N, M > 0$, let $\pi_R \in \Delta(\cY)$ be the law of responses induced by running approximate rejection sampling. Then
\begin{align}
    \TV(\pi, \pi_R)\leq \cE_M(\pi\|\piref) + \frac{1}{2}\exp\left(-\frac{N\cdot(1-\cE_M(\pi\|\piref)}{M}\right).
\end{align}
\end{lemma}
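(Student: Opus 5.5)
The plan is to compute the law $\pi_R$ of the approximate rejection sampler in closed form and then bound the total variation distance pointwise in terms of density ratios relative to $\piref$.

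\textbf{Step 1: the law of $\pi_R$.} Write $w \defeq \nicefrac{d\pi}{d\piref}$, $m \defeq \min\{w, M\}$ and $\cE \defeq \cE_M(\pi\|\piref)$. Each proposal $y_i \sim \piref$ is accepted independently with probability $m(y_i)/M$. So the per-trial acceptance probability is
\begin{align}
p = \ee_{\piref}[m]/M = (1-\cE)/M,
\end{align}
using $w = m + (w-M)_+$ and $\ee_{\piref}[w]=1$.

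Conditionally on at least one acceptance, the first accepted sample has law $\nu$ with $\nicefrac{d\nu}{d\piref} = m/(1-\cE)$. This holds because the trials are i.i.d. and the first success of an i.i.d. sequence has the law of a single trial conditioned on success.

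With probability $q \defeq (1-p)^N$ nothing is accepted and the fallback $y_1$ is returned. Conditionally on total rejection, $y_1$ has law $\rho$ with $\nicefrac{d\rho}{d\piref} = (1 - m/M)/(1-p)$. Hence
\begin{align}
\frac{d\pi_R}{d\piref} = (1-q)\,\frac{m}{1-\cE} + q\,\frac{1-m/M}{1-p}.
\end{align}

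\textbf{Step 2: a pointwise bound on the deficit.} I will use $\TV(\pi,\pi_R) = \ee_{\piref}\bigl[(w - \nicefrac{d\pi_R}{d\piref})_+\bigr]$. Drop the nonnegative fallback term and use $(1-q)/(1-\cE) \geq 1-q$. This gives, pointwise,
\begin{align}
\Bigl(w - \frac{d\pi_R}{d\piref}\Bigr)_+ \leq (w-M)_+ + q\, m .
\end{align}
Integrating against $\piref$ yields
\begin{align}
\TV(\pi,\pi_R) \leq \cE + q(1-\cE) \leq \cE + q.
\end{align}
The elementary bound $(1-p)^N \leq e^{-Np}$ then gives $q \leq \exp\bigl(-N(1-\cE)/M\bigr)$, which is the claimed bound up to the constant in front of the exponential.

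\textbf{Step 3: the constant $\tfrac12$ (main obstacle).} Recovering exactly the factor $\tfrac12$ is the step I expect to be delicate, since it requires not discarding the fallback mass. I would write $\TV = \tfrac12\,\ee_{\piref}\bigl|w - \nicefrac{d\pi_R}{d\piref}\bigr|$ and split the space into $\{w > M\}$ and $\{w \le M\}$. On $\{w>M\}$ the deficit is at most $(w-M)_+$ plus a $q$-weighted term. On $\{w \le M\}$ the overshoot of $\pi_R$ comes only from the renormalization factor $\cE/(1-\cE)$ and from the fallback mass. The fallback mass and the $q$-weighted deficit should pair up and contribute jointly at most $q$ to the $L^1$ norm, hence $q/2$ after halving. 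If this bookkeeping does not close cleanly, the result with constant $1$ from Step 2 suffices for every downstream use, since the lemma only enters the proof of \Cref{thm:bon-win-rate-upper-bound} up to constants through $\exp(-N/M)$.
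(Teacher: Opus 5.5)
Your Steps 1 and 2 are correct. The density $\nicefrac{d\pi_R}{d\piref}$ you derive is right for the sampler the paper describes (return the first accepted $y_i$, else $y_1$). Dropping the fallback term then gives $\TV(\pi,\pi_R)\le \cE+(1-\cE)(1-p)^N\le \cE+\exp(-N(1-\cE)/M)$, with your shorthand $\cE=\cE_M(\pi\|\piref)$.

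The problem is Step 3, and it is not a bookkeeping issue: the bound with the factor $\tfrac12$ is false for this sampler, so no argument can produce it.
\begin{itemize}
\item Take $\piref$ uniform on $\{a,b\}$, $\pi=\delta_a$, $M=2$, $N=1$. Then $w=2\cdot\indic\{y=a\}$ and $\cE_2(\pi\|\piref)=0$. The draw is accepted iff it equals $a$, but $y_1$ is returned either way. Hence $\pi_R=\piref$ and $\TV(\pi,\pi_R)=\tfrac12>\tfrac12 e^{-1/2}$.
\item More generally, take $\pi=\piref(\cdot\mid A)$ with $\piref(A)=1/M$. Then $\cE_M(\pi\|\piref)=0$ and $\pi_R=(1-q)\pi+q\,\piref(\cdot\mid A^c)$. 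So $\TV(\pi,\pi_R)=q=(1-1/M)^N$, which for $N=M$ large is close to $e^{-1}$, twice the claimed bound.
\end{itemize}

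The same example shows where your pairing heuristic breaks. The deficit on $A$ and the fallback mass on $A^c$ do not cancel; each contributes $q$ to $\ee_{\piref}|w-\nicefrac{d\pi_R}{d\piref}|$. So the $L^1$ norm is $2q$ and the total variation is $q$, not $q/2$. No alternative fallback that returns one of the drawn samples helps, since on the failure event every sample lies in $A^c$.

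Your intermediate bound $\cE+(1-\cE)q$ is attained in this example, so the Step 2 statement is the correct form of the lemma. The paper does not prove the statement itself; it cites Lemma D.4 of Huang et al. It uses the lemma only inside $\lesssim$ bounds: term (T2) in the proof of \Cref{thm:bon-win-rate-upper-bound} and its general-$q$ analogue. There it is applied to $\pistar_M$ with $\cE_M(\pistar_M\|\piref)=0$, which is exactly the regime of the counterexample. Replacing $\tfrac12\exp(-N/M)$ by $\exp(-N/M)$ there changes nothing.

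You should drop Step 3 and state the result with constant $1$, or with the sharper prefactor $1-\cE_M(\pi\|\piref)$.
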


\subsection{Proof of \Cref{thm:skyline-lb}}
\label{sec:lower-bound-pf}
We first restate the theorem for the sake of completeness, after which we provide the proof. 
\begin{theorem}
\label{thm:skyline-lb-pf}
For any non-atomic $\piref$, any $\pihat_\cA$ in the sample-and-evaluate framework, and any $\epsilon < \nicefrac 12$,
there exist reward functions $\rhat$, $\rstar$ satisfying $\epspw(\rhat) \leq \epsilon$ and a comparator policy
$\pistar\ll\piref$ for which
\begin{align}
\label{eq:inf-result}
    \R^{\rstar}(\pistar) - \R^{\rstar}(\pihat_\cA)
    \geq k \cdot \inf_{M\ge 1} \left\{ M \cdot \epsilon + \cE_M(\pistar \| \piref) \right\},
\end{align}
for some universal constant $k>0$.
\end{theorem}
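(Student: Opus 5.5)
The construction follows the sketch in \Cref{sec:proof_techniques}: we plant a small set $A$ on which $\rstar$ and $\rhat$ disagree, make the comparator concentrate there, and use the fact that the algorithm cannot tell $A$ apart from the rest of $\cY$. Since $\pihat_\cA$ is fixed before the instance is chosen, we fix $\rhat$ first. We take $\rhat \equiv 0$ (constant), so that $\pihat_\cA$, run on $(\piref,\rhat)$, induces a fixed output law $\pihat$. Because $\pihat$ is built from finitely many draws from $\piref$, we have $\pihat \ll \piref$; we write $w = d\pihat/d\piref$.

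\textbf{Step 1: choosing $A$.} By Markov's inequality, $\piref(w > 2) \le 1/2$. We use the non-atomicity of $\piref$ to choose $A \subseteq \{w \le 2\}$ with $\piref(A) = \epsilon$, which is possible since $\epsilon < 1/2$. Then $\pihat(A) \le 2\epsilon$.

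\textbf{Step 2: the true reward and its error.} We set $\rstar = \rhat + \indic\{\cdot \in A\} = \indic_A$. The two comparison outcomes differ only when exactly one of $y, y'$ lies in $A$, and in that case the difference is $1/2$. Hence
\[
\epw(\rhat) = \piref(A)\bigl(1-\piref(A)\bigr) \le \epsilon .
\]

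\textbf{Step 3: rewriting the regret.} For $\rstar = \indic_A$, the win rate of any policy $\pi$ is
\[
R^{\rstar}(\pi) = \pi(A)\Bigl(1 - \tfrac{\epsilon}{2}\Bigr) + \bigl(1 - \pi(A)\bigr)\tfrac{1-\epsilon}{2}
= \frac{1-\epsilon}{2} + \frac{\pi(A)}{2}.
\]
Therefore
\[
R^{\rstar}(\pistar) - R^{\rstar}(\pihat) = \tfrac12\bigl(\pistar(A) - \pihat(A)\bigr) \ge \tfrac12\bigl(\pistar(A) - 2\epsilon\bigr).
\]

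\textbf{Step 4: choosing $\pistar$.} We set $\pistar = \piref(\cdot \mid A)$, so $\pistar(A) = 1$ and the regret is at least $(1-2\epsilon)/2$. This is a constant only when $\epsilon$ is bounded away from $1/2$. For $\epsilon \le 1/4$ the regret is at least $1/4$.

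\textbf{Step 5: matching the right-hand side of \eqref{eq:inf-result}.} The density ratio of $\pistar$ is $1/\epsilon$ on $A$ and $0$ elsewhere, so for $M \ge 1$,
\[
\cE_M(\pistar \,\|\, \piref) = (1 - M\epsilon)_+ .
\]
Hence $M\epsilon + \cE_M(\pistar\|\piref) \ge \min\{1, M\epsilon\} + (1-M\epsilon)_+ \ge 1$, while the infimum is at most $1$ (take $M = 1/\epsilon$). So the infimum equals $1$ up to constants, and the regret bound $\ge 1/4$ gives \eqref{eq:inf-result} with $k = 1/4$ for $\epsilon \le 1/4$.

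\textbf{Step 6: the range $\epsilon \in (1/4, 1/2)$.} Here $(1-2\epsilon)/2$ can be tiny, so we instead run the construction with $\epsilon' = 1/4$. This is legitimate because $\epw \le \epsilon' \le \epsilon$, and Step 5 with $\epsilon'$ gives regret $\ge 1/4$. The infimum for this instance is still at most $1$, so the bound holds with a universal $k$.

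The main obstacle is that Step 4 plays against a fixed $\pihat$, which requires the algorithm to be fixed before $A$ is chosen. This is justified because $\rhat$ does not depend on $A$, so $\pihat$ is determined by $(\piref,\rhat)$ alone. The remaining work is to check carefully that $\pihat \ll \piref$ for a sample-and-evaluate algorithm, possibly randomized, whose output is always one of the sampled $y_i$. Its marginal density is at most $N$ times that of $\piref$, by the same argument as the BoN density bound in the proof of \Cref{thm:bon-win-rate-upper-bound}. If the construction must also accommodate a prescribed general shape for $\pistar$ (density $M$ on $A$, small elsewhere), the same computation goes through, with $\pistar(A) = M\epsilon$ replacing $1$.
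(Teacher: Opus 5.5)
Your proof is correct and uses the paper's construction, specialized to one comparator. The shared ingredients are $\rhat\equiv 0$, $\rstar=\indic_A$, $A\subseteq\{d\pihat/d\piref\le 2\}$ via Markov and non-atomicity, and regret $=\tfrac12(\pistar(A)-\pihat(A))$. Your $\pistar=\piref(\cdot\mid A)$ is the paper's instance with $M^\star=1/\epsilon$ (so $c=0$), and your Step 6 is exactly its Case 2 with $M^\star=4$.

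There is one computational slip in Step 6. The $M\epsilon$ term still carries the original $\epsilon>\tfrac14$, so $\inf_{M\ge1}\{M\epsilon+(1-M/4)_+\}=\tfrac34+\epsilon>1$, not at most $1$. The conclusion survives with $k=\tfrac15$; the paper simply bounds the infimum by $\epsilon+\cE_1(\pistar\|\piref)\le\tfrac32$ and takes $k=\tfrac16$. Your explicit remark that $\pihat$ depends only on $(\piref,\rhat)$, and hence not on $A$, is a point the paper leaves implicit and is worth keeping.

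Fixing $\pistar=\piref(\cdot\mid A)$ has a cost: your instances always have a right-hand side of order one, so they only certify constant regret. That is the easy part of the statement, since the right-hand side never exceeds $\epsilon+\cE_1(\pistar\|\piref)\le\tfrac32$. The paper keeps $M^\star\ge 4$ free with $M^\star\epsilon\le 1$, so the right-hand side equals $M^\star\epsilon$ and can be arbitrarily small. This is what makes the lower bound track the $M\epsilon+\cE_M$ tradeoff of \Cref{thm:bon-win-rate-upper-bound} at every scale.

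Your closing sentence points in this direction, and it closes quickly. Put density $M^\star$ on $A$ and $c=(1-M^\star\epsilon)/(1-\epsilon)\le 1$ elsewhere. The regret is then at least $\tfrac{M^\star-2}{2}\epsilon\ge\tfrac14 M^\star\epsilon$. Taking $M=M^\star$ in the infimum bounds it by $M^\star\epsilon$, because $\cE_{M^\star}(\pistar\|\piref)=0$. So you do not even need the paper's exact evaluation of the infimum.
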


\begin{proof}
We build a hard instance in three steps. First, we identify a large-probability region $S$ on which
$\pihat$ cannot overweight $\piref$ by more than a constant factor; inside $S$ we carve out a small set 
$A$ whose $\piref$-mass is on the order of the pairwise-error budget $\epsilon$ where $\pihat(A)\lesssim \epsilon$. Next, we choose a simple reward pair $(\rhat,\rstar)$ so that the algorithm cannot reliably
distinguish $A$ from $A^c$ while still meeting the pairwise-error budget. Then, we let $\pistar$ upweight $\piref$ by a factor of $M^\star$ on $A$, which forces
$\pistar(A)\approx M^\star\epsilon$. Since the regret is proportional to $\pistar(A)-\pihat(A)$ in our instance, we will relate this regret to the skyline objective.

\medskip
\noindent
First, we construct a small set $A$ on which $\pihat$ has small mass. Fix any sample-and-evaluate algorithm $\pihat$. Let
$\hat w(y)\defeq \nicefrac{d\pihat}{d\piref}(y)$ so that $\ee_{\piref}[\hat w]=1$. By Markov's inequality,
\begin{align}
\pp_{y\sim\piref}\left(\hat w(y)>2\right)\le \frac{1}{2},
\end{align}
so $S\defeq\{\hat w\le 2\}$ satisfies $\piref(S)\ge \frac12$.
Set
\begin{align}
\epsilon' \defeq \min\{\epsilon,\,1/M^\star\}\in(0,\tfrac12].
\end{align}
Since $\piref$ is non-atomic and $\epsilon'\le \piref(S)$, choose a measurable $A\subseteq S$ with $\piref(A)=\epsilon'$.
Then
\begin{align}
\label{eq:pihat-A}
\pihat(A)=\int_A \hat w\,d\piref \le 2\,\piref(A)=2\epsilon'.
\end{align}
We set $\epsilon'$ in this way for two reasons: first, the comparator we construct below will place density $M^\star$ on $A$, so feasibility requires $\pistar(A)=M^\star\,\piref(A)\le 1$, i.e.\ $\piref(A)\le 1/M^\star$. In addition, we will enforce the pairwise-error budget by choosing $\piref(A)$ no larger than $\epsilon$ under the pairwise error, as we will see next. Define
\begin{align}
\rstar(y)\defeq \indic\{y\in A\},\qquad \rhat(y)\equiv 0.
\end{align}
Intuitively, $\rstar$ makes $A$ the uniquely desirable region, while a constant $\rhat$ provides no
signal about membership in $A$, so any reliable preference for $A$ must come from chance rather than information. For any reward function $r$, recall the pairwise outcome
\begin{align}
\phi_r(y,y') = \indic\{r(y)>r(y')\}+\tfrac12\,\indic\{r(y)=r(y')\},
\end{align}
and the pairwise error
\begin{align}
\epspw(\rhat)= \ee_{y,y'\sim\piref}\left[\,\left|\phi_{\rstar}(y,y')-\phi_{\rhat}(y,y')\right|\,\right].
\end{align}
Since $\rhat$ is constant, $\phi_{\rhat}(y,y')\equiv \tfrac12$. Under $\rstar=\indic_A$, we have
$\phi_{\rstar}(y,y')\in\{0,\tfrac12,1\}$ and
$\left|\phi_{\rstar}(y,y')-\tfrac12\right|=\tfrac12$ iff exactly one of $(y,y')$ lies in $A$.
Thus
\begin{align}
\epspw(\rhat)
=\frac12\cdot \pp_{y,y'\sim\piref}\left(\indic_A(y)\neq \indic_A(y')\right)
=\frac12\cdot 2\epsilon'(1-\epsilon')
\le \epsilon'
\le \epsilon.
\end{align}
Next, we construct $\pistar$ and evaluate the skyline objective. Define $\pistar\ll\piref$ via
\begin{align}
w^\star(y)\defeq \frac{d\pistar}{d\piref}(y)
=\begin{cases}
M^\star, & y\in A,\\
c, & y\notin A,
\end{cases}
\qquad
c\defeq \frac{1-M^\star\epsilon'}{1-\epsilon'} \ge 0
\quad(\text{since }\epsilon'\le 1/M^\star).
\end{align}
Note that since $M^\star\ge 1$ and $\epsilon'\in(0,1)$, we have $1-M^\star\epsilon'\le 1-\epsilon'$, hence $c\le 1$. Then $\ee_{\piref}[w^\star]=M^\star\epsilon'+c(1-\epsilon')=1$, so $\pistar$ is a valid policy and
\begin{align}
\label{eq:pistar-A}
\pistar(A)=\int_A w^\star\,d\piref = M^\star\epsilon'.
\end{align}
Moreover, for any $M\in[c,M^\star]$,
\begin{align}
\cE_M(\pistar\|\piref)=\ee_{\piref}\left[(w^\star-M)_+\right]=(M^\star-M)\piref(A)=(M^\star-M)\epsilon'.
\end{align}
Hence for such $M$,
\begin{align}
\label{eq:skyline-affine}
M\epsilon+\cE_M(\pistar\|\piref)
= M\epsilon + (M^\star-M)\epsilon'
= M^\star\epsilon' + M(\epsilon-\epsilon').
\end{align}
Next, we can explicitly compute the win-rate and regret. Fix any policy $\pi$ and write $\piref(A)=\epsilon'$.
Using the definition of win-rate under $\rstar=\indic_A$:
\begin{align}
\R^{\rstar}(\pi)
&=
\pp(\rstar(y)>\rstar(y'))+\tfrac12\,\pp(\rstar(y)=\rstar(y'))\\
&= \pi(A)(1-\piref(A)) + \frac{1}{2}\left(\pi(A)\,\piref(A)+(1-\pi(A))(1-\piref(A))\right)\\
&=\frac12(1-\piref(A))+\frac12\,\pi(A).
\end{align}
Subtracting for $\pi=\pistar$ and $\pi=\pihat$ cancels the tie-mass term, yielding
\begin{align}
\label{eq:R-diff}
\R^{\rstar}(\pistar)-\R^{\rstar}(\pihat)
=\frac12\left(\pistar(A)-\pihat(A)\right).
\end{align}
Combining \eqref{eq:R-diff} with \eqref{eq:pistar-A} and \eqref{eq:pihat-A} gives
\begin{align}
\label{eq:win-rate}
\R^{\rstar}(\pistar)-\R^{\rstar}(\pihat)
\ge \frac12\left(M^\star\epsilon'-2\epsilon'\right)
=\frac{M^\star-2}{2}\,\epsilon'.
\end{align}
Finally, we relate the regret to the skyline objective. We consider two regimes.

\smallskip
\noindent
\emph{Case 1: $M^\star\epsilon\le 1$.}
Then $\epsilon'=\epsilon$, and \eqref{eq:skyline-affine} gives, for all $M\in[c,M^\star]$,
\begin{align}
M\epsilon+\cE_M(\pistar\|\piref)=M^\star\epsilon.
\end{align}
In particular,
\begin{align}
\inf_{M\ge 1}\{M\epsilon+\cE_M(\pistar\|\piref)\}=M^\star\epsilon.
\end{align}
Indeed, for $M>M^\star$, $\cE_M(\pistar\|\piref)=0$ so $M\epsilon\ge M^\star\epsilon$; thus the infimum over $M\ge 1$ equals $M^\star\epsilon$. Combining with \eqref{eq:win-rate}, for $M^\star\ge 4$,
\begin{align}
\R^{\rstar}(\pistar)-\R^{\rstar}(\pihat)\ge \frac{M^\star-2}{2}\,\epsilon
\ge \frac14\,M^\star\epsilon
=\frac14\inf_{M\ge 1}\{M\epsilon+\cE_M(\pistar\|\piref)\}.
\end{align}

\smallskip
\noindent
\emph{Case 2: $M^\star\epsilon>1$.}
Then $\epsilon'=1/M^\star$, so \eqref{eq:win-rate} yields
\begin{align}
\R^{\rstar}(\pistar)-\R^{\rstar}(\pihat)\ge \frac{M^\star-2}{2M^\star}\ge \frac14,
\end{align}
with $M^\star\ge 4$. On the other hand, for any $M\ge 1$, we have $\cE_M(\pistar\|\piref)=\ee_{\piref}[(w^\star-M)_+]\le \ee_{\piref}[w^\star]=1$, so using $\epsilon\le \tfrac12$,
\begin{align}
\inf_{M\ge 1}\{M\epsilon+\cE_M(\pistar\|\piref)\}\le \epsilon+1\le \frac32.
\end{align}
Thus $\frac14\ge \frac16 \inf_{M\ge 1}\{M\epsilon+\cE_M(\pistar\|\piref)\}$. Taking $k=\tfrac16$ uniformly completes the proof.
\end{proof}

\subsection{Proof of \Cref{prop:query-lb-hit-topset}}
\label{sec:computational-lower-bound}
\begin{proof}
Fix $M\ge2$. We construct a hard instance in which the optimal policy must place all its weight on a subset of $\piref$, and show that any algorithm in the sample-and-evaluate framework requires on the order of $M$ samples to sample from this subset and avoid suffering constant regret. Let $\cY=[0,1]$ and let $\piref$ be $\mathrm{Unif}[0,1]$. Define
\[
I := \left[1-\frac{1}{M},\,1\right],
\qquad \piref(I)=\frac{1}{M}.
\]
Set $\hat r(y)=r^\star(y):=\indic\{y\in I\}$, so $\epspw(\hat r,r^\star)=0$.
Let $\pi^\star:=\piref(\cdot\mid I)$ so that $\frac{d\pi^\star}{d\piref}(y)=M\indic\{y\in I\}$ and hence
$\cE_M(\pi^\star\|\piref)=0$. Moreover, $\rstar(y)=1$ a.s. under $\pistar$, and $\rstar(y')=1$ w.p. $1/M$ under $\piref$, so
\begin{align}
\label{eq:pi-star-winrate-clean}
\R^{r^\star}(\pi^\star)
&=\pp_{y\sim\pistar, y'\sim\piref}\!\left( \rstar(y) > \rstar(y') \right) + \frac12\cdot\pp_{y\sim\pistar, y'\sim\piref}\!\left( \rstar(y) = \rstar(y') \right)\\
&=\left(1-\frac{1}{M}\right) + \frac12\cdot\frac1M\\
&=1-\frac{1}{2M}.
\end{align}
Let $y_1,\dots,y_N\iid\piref$ be the samples and let $\hat y$ be $\cA$'s output. By the sample-and-evaluate framework, $\hat y\in\{y_1,\dots,y_N\}$ almost surely. In addition, define the event
$E:=\{\exists i\in[N]: y_i\in I\}$. On $E^c$ we have $y_i\notin I$ for all $i$, hence $r^\star(\hat y)=0$
deterministically and therefore
\[
\R^{r^\star}(\pi_\cA\mid E^c)
=\frac12\cdot\pp_{y'\sim\piref}\left( \rstar(y') = 0\right)
=\frac12\left(1-\frac1M\right)=\frac12-\frac{1}{2M}.
\]
Using \eqref{eq:pi-star-winrate-clean}, on $E^c$ we obtain
\[
\R^{r^\star}(\pi^\star)-\R^{r^\star}(\pi_\cA\mid E^c)=\frac12.
\]
Since regret is nonnegative, taking expectations yields
\[
\ee\left[\R^{r^\star}(\pi^\star)-\R^{r^\star}(\pi_\cA)\right]
\ge \pp(E^c)\cdot\frac12
=\frac12\left(1-\frac1M\right)^N.
\]
If the expected regret is at most $\delta$, then the above implies
\begin{align}
\left(1-\frac{1}{M}\right)^N \le 2\delta.
\end{align}
When $\delta<\frac{1}{2}$, the RHS is $<1$, so taking logs on both sides gives
\begin{align}
N\log\left(1-\frac{1}{M}\right)\le \log(2\delta)
\quad\Longrightarrow\quad
N \ge \frac{\log(\tfrac{1}{2\delta})}{\log(\tfrac{M}{M-1})}.
\end{align}
For $M\ge 2$, we can apply $x\le -\log(1-x)\le \frac{x}{1-x}$ with $x=1/M$, yielding
\begin{align}
    N \gtrsim M\log\left(\tfrac{1}{\delta}\right).
\end{align}
\end{proof}

\section{Proofs for Monotone Algorithms}\label{app:monotone_algos}
This section gives proofs for the guarantees in \Cref{sec:monotone_algos}, and are organized as follows:
\begin{enumerate}
    \item \Cref{app:kkt-derivation} provides a derivation of the optimal policy to the $\cE_M$-regularized variation problem, which is introduced in \Cref{sec:monotone_algos}.
    \item \Cref{app:topk-wr-pf} provides a general upper bound on the regret of monotone algorithm in terms of the $\cE_M$-divergence introduced in \Cref{sec:pw-error}, matching the skyline lower bound on regret. 
    \item In \Cref{sec:separation} we provide a separation result between $\cE_M$-regularized BoN and the only other provably monotone algorithm: the solution to the $\chi^2$-regularized variational problem as analyzed in \citet{huang2025best}. Importantly, this highlights the weaknesses of previous approaches under a win-rate regret goal. 
\end{enumerate}

\subsection{Derivation of $\cE_M$-Regularized BoN Policy}
\label{app:kkt-derivation}
Throughout this section, we fix a measurable score function $s:\cY\to[0,1]$ on a finite $\cY$; similar arguments apply to any bounded score range and infinite response space. Consider the $\cE_M$-regularized variational problem parameterized by $\beta > 0$:
\begin{align}
\max_{\pi\ll\piref}\Big\{\ee_{\pi}[s(y)]-\beta\,\cE_M(\pi\|\piref)\Big\}
\end{align}
which is equivalently the functional optimization over $w$:
\begin{align}
\label{eq:EM-penalized}
\max_{w\ge0,\ \ee_{y\sim\piref}[w(y)]=1}\
\Big\{\ee_{\piref}\big[w(y)s(y)\big]-\beta\,\ee_{\piref}\big[(w(y)-M)_+\big]\Big\}.
\end{align}

\begin{lemma}
\label{lem:EM-reduces-to-capped-finite}
Fix $M\ge2$ and $\beta\ge 1$. Then there exists an optimizer $w^\star$ to \eqref{eq:EM-penalized} satisfying
$0\le w^\star(y)\le M$ $\piref$-a.s., and consequently $\cE_M(\pi^\star\|\piref)=0$ at the optimum (where $\pi^\star \defeq w^\star\,\piref$).
In particular, \eqref{eq:EM-penalized} is equivalent to
\begin{align}
\label{eq:capped-only}
\max_{w:\ 0\le w\le M,\ \ee_{y\sim\piref}[w(y)]=1}\
\ee_{\piref}[w(y)s(y)],
\end{align}
or, returning to $\pi$,
\begin{align}
\max_{\pi\ll\piref:\ d\pi/d\piref\le M}\ \ee_\pi[s].
\end{align}
\end{lemma}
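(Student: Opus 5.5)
The plan is an exchange argument: starting from any feasible $w$, I will build a capped $\tilde w$ with $0\le\tilde w\le M$, $\ee_{\piref}[\tilde w]=1$, and an objective at least as large. Since $\cY$ is finite and $s\in[0,1]$, the feasible set of \eqref{eq:EM-penalized} is a compact simplex-like polytope and the objective is continuous (indeed concave), so a maximizer $w$ exists. Applying the exchange to it then produces a capped maximizer $w^\star$.

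The construction mirrors \Cref{lem:em-upper-bound-tv}. Given feasible $w$ with $e\defeq\cE_M(w\piref\|\piref)>0$, clip to $w_M=\min\{w,M\}$, whose $\piref$-mass is $1-e$. Then redistribute the removed mass $e$ onto the slack region via $g=e\,(M-w)_+/S$, where $S=\ee_{\piref}[(M-w)_+]=M-1+e\ge e$ (using $M\ge2$). Set $\tilde w=w_M+g$. Exactly as in that lemma, $\tilde w\le M$ pointwise, $\tilde w\ge0$, and $\ee_{\piref}[\tilde w]=1$, so $\tilde w$ is feasible with zero penalty.

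Next I compare objectives. The penalized objective of $w$ equals
\[
\ee_{\piref}[w_M s]+\ee_{\piref}[(w-M)_+ s]-\beta e .
\]
That of $\tilde w$ equals $\ee_{\piref}[w_M s]+\ee_{\piref}[g s]$. Because $0\le s\le1$, we have $\ee_{\piref}[(w-M)_+ s]\le e$, and $\ee_{\piref}[g s]\ge0$. The difference (new minus old) is therefore at least $(\beta-1)e\ge0$, which uses $\beta\ge1$. Hence the capped $\tilde w$ is also optimal, so $w^\star\defeq\tilde w$ works and $\cE_M(\pi^\star\|\piref)=0$.

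The equivalence with \eqref{eq:capped-only} follows directly. On capped $w$ the penalty vanishes, so the two objectives coincide there. Every feasible $w$ is dominated by a capped one, so the two maxima agree and capped maximizers of one are maximizers of the other. The main point to check carefully is the slack inequality $S\ge e$, so that $g\le(M-w)_+$, together with the use of $s\le1$ in bounding the lost reward by the saved penalty. The other steps are bookkeeping. For infinite $\cY$ I would instead take a maximizing sequence and apply the same exchange, but the statement restricts to finite $\cY$.
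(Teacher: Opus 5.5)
Your proof is correct, but it takes a different route from the paper. The paper argues by Lagrangian duality. Slater's condition gives strong duality, and only the constraint $\ee_{\piref}[w]=1$ is dualized. The separable one-dimensional problems $\max_{u\ge 0}\{u(s(y)-\alpha)-\beta(u-M)_+\}$ then show that a finite dual value forces $s(y)-\alpha^\star-\beta\le 0$, so every coordinate admits a maximizer in $[0,M]$. You instead give a direct primal exchange, reusing the clip-and-redistribute construction of \Cref{lem:em-upper-bound-tv}, and show that capping raises the penalized objective by at least $(\beta-1)\,\cE_M(w\piref\|\piref)$.

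Your route has several advantages.
\begin{itemize}
\item It is elementary and quantitative: for $\beta>1$ it shows that \emph{every} optimizer is capped, not just some.
\item It needs only $M\ge 1$ rather than $M\ge 2$. More generally it needs only $\beta\ge\sup s-\inf s$, which matches the paper's remark that $\beta\ge\Rmax$ suffices.
\item You can skip existence for the uncapped problem, whose feasible set is compact only when $\piref$ has full support. Domination already equates the two suprema, and the capped feasible set is compact, so its maximizer is optimal for the penalized problem.
\end{itemize}

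Your argument also supplies a step the duality argument passes over. Knowing that each coordinate of the Lagrangian has \emph{some} maximizer in $[0,M]$ does not by itself yield a capped, normalized optimizer. Where $s(y)-\alpha^\star=\beta$, which can occur when $\beta=1$, the coordinate maximizers form the whole ray $[M,\infty)$. Truncating a primal optimizer there breaks $\ee_{\piref}[w]=1$, and repairing this takes exactly a redistribution like yours. In exchange, the duality route produces the multiplier $\alpha^\star$ and the Lagrangian/KKT framework that the paper reuses to derive the top-quantile form in \Cref{prop:KKT-top-quantile}.

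For infinite $\cY$, a maximizing sequence plus the exchange only equates suprema. Existence then needs an extra compactness argument, e.g.\ weak-$*$ compactness of $\{0\le w\le M,\ \ee_{\piref}[w]=1\}$.
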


\begin{proof}
Define the primal objective
\begin{align}
J(w)\;\defeq\;\ee_{\piref}\big[w(y)s(y)\big]-\beta\,\ee_{\piref}\big[(w(y)-M)_+\big],
\qquad
\text{s.t. } w\ge 0,\ \ee_{\piref}[w]=1.
\end{align}
Since $\cY$ is finite, this is a finite-dimensional convex program:
the penalty term is concave, so $J$ is concave in $w$, and the constraints are affine.
Slater's condition holds because $w\equiv 1$ is strictly feasible ($w(y)>0$ for all $y$ and $\ee_{\piref}[w]=1$),
so strong duality applies. Next, we introduce a scalar Lagrange multiplier $\alpha\in\rr$ for the constraint $\ee_{\piref}[w]=1$.
The Lagrangian is
\begin{align}
\cL(w,\alpha)
\;&=\;
\ee_{\piref}\!\big[w(y)s(y)-\beta (w(y)-M)_+ \big]
-\alpha\big(\ee_{\piref}[w]-1\big)\\
\;&=\;
\alpha+\ee_{\piref}\!\big[w(y)(s(y)-\alpha)-\beta (w(y)-M)_+ \big].
\end{align}
For fixed $\alpha$, the maximization over $w$ separates over $y$:
for each $y$ we maximize, over $u\ge 0$,
\begin{align}
\phi_y(\alpha)\;\defeq\;\max_{u\ge 0}\ \Big\{u\,(s(y)-\alpha)-\beta (u-M)_+\Big\}.
\end{align}
This is a 1D piecewise-linear problem. Writing $s=s(y)$,
\begin{align}
u(s-\alpha)-\beta(u-M)_+
=
\begin{cases}
u(s-\alpha), & 0\le u\le M,\\
u(s-\alpha-\beta)+\beta M, & u\ge M.
\end{cases}
\end{align}
Hence:
\begin{itemize}
\item If $s-\alpha-\beta>0$ for some $y$, then $\phi_y(\alpha)=+\infty$ (the $u\ge M$ branch has positive slope),
so the dual function is infinite. Therefore any dual-optimal $\alpha^\star$ must satisfy
\begin{align}
s(y)-\alpha^\star-\beta\le 0 \quad\text{for all }y\in\cY.
\end{align}
\item Under this condition, the $u\ge M$ branch has nonpositive slope, so its maximizers can be chosen at $u=M$.
Thus, for every $y$, there exists a maximizer of $\phi_y(\alpha^\star)$ lying in $[0,M]$.
In other words: for any $\alpha$ with finite dual value, the inner maximization can be restricted to $0\le u\le M$.
\end{itemize}
By strong duality, there exists a primal optimizer $w^\star$ that attains the (finite) dual optimum at some
$\alpha^\star$, and by the pointwise conclusion above we may choose $w^\star(y)\in[0,M]$ for every $y$.
For such a $w^\star$, we have $(w^\star(y)-M)_+=0$ for all $y$, hence $\cE_M(\pi^\star\|\piref)=0$ and
\begin{align}
J(w^\star)=\ee_{\piref}[w^\star s].
\end{align}
Therefore, the optimal value of \eqref{eq:EM-penalized} equals the optimal value of \eqref{eq:capped-only}, i.e., we may restrict attention (without loss) to policies whose density ratios satisfy $0 \leq w \leq M$ $\piref$-a.s.; moreover, an optimizer exists within this restricted class.
\end{proof}
With this simplification in hand, we may work directly with the equivalent density-ratio program \eqref{eq:capped-only}.
Since the problem is convex and strong duality holds, we can characterize the optimizer by the KKT conditions,
which are necessary and sufficient for optimality \citep{boyd2004convex}. We now apply these conditions to derive the optimal policy. In the following proposition, we assume for the sake of simplicity that there exists a threshold $\alpha^\star$ so that $\pp_{y\sim\piref}(s(y)\ge \alpha^\star)=1/M$ exactly; otherwise one can randomize inclusion on the boundary set $\{y:s(y)=\alpha^\star\}$ to make the selected $\piref$-mass exactly $1/M$, without affecting optimality. 

\begin{proposition}
\label{prop:KKT-top-quantile}
Consider \eqref{eq:capped-only} and assume the existence of an optimizer via \Cref{lem:EM-reduces-to-capped-finite}.
Fix $M\ge 2$. Then there exists a threshold $\alpha^*\in\rr$ such that an optimal density ratio satisfies
\begin{align}
w^\star(y)=M\,\indic\{s(y)\ge \alpha^*\}
\qquad(\piref\text{-a.s.}),
\end{align}
where $\alpha^*$ can be chosen so that $\pp_{y\sim\piref}(s(y)\ge \alpha^\star)=1/M$.
Equivalently, the optimal policy is
\begin{align}
\piEM(\cdot)=\piref(\cdot\mid s(\cdot)\ge \alpha^*),
\end{align}
i.e., the top-$1/M$ score-threshold policy.
\end{proposition}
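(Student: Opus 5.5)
The argument is a rearrangement (bathtub) argument, made rigorous through the KKT conditions of the linear program \eqref{eq:capped-only}. The program maximizes the linear functional $\ee_{\piref}[w s]$ over the convex set $\{0\le w\le M,\ \ee_{\piref}[w]=1\}$. Since $\cY$ is finite, this is a finite-dimensional LP, and by \Cref{lem:EM-reduces-to-capped-finite} an optimizer exists. Feasibility requires $M\ge 1$, which holds because $M\ge 2$.

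\textbf{Step 1: KKT conditions.} Introduce a multiplier $\alpha$ for the normalization constraint and multipliers $\mu(y),\nu(y)\ge 0$ for $w(y)\ge 0$ and $w(y)\le M$. Stationarity reads
\begin{align}
s(y)-\alpha+\mu(y)-\nu(y)=0.
\end{align}
Complementary slackness then gives the following for a KKT point $(w^\star,\alpha^*)$:
\begin{itemize}
\item if $s(y)>\alpha^*$, then $\mu(y)=0$ and $\nu(y)>0$, hence $w^\star(y)=M$;
\item if $s(y)<\alpha^*$, then $\nu(y)=0$ and $\mu(y)>0$, hence $w^\star(y)=0$;
\item on the level set $\{s=\alpha^*\}$, $w^\star$ may take any value in $[0,M]$.
\end{itemize}
Because the problem is convex with affine constraints and Slater holds (take $w\equiv 1<M$), these conditions are necessary and sufficient.

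\textbf{Step 2: choose the threshold.} Normalization forces
\begin{align}
M\,\piref(s>\alpha^*)+\ee_{\piref}\big[w^\star\indic\{s=\alpha^*\}\big]=1.
\end{align}
This pins $\alpha^*$ to the $(1-1/M)$-quantile of $s$ under $\piref$. Under the simplifying assumption stated before the proposition, $\piref(s\ge\alpha^*)=1/M$ exactly. Setting $w^\star=M$ on the boundary set then satisfies normalization, so $w^\star=M\indic\{s\ge\alpha^*\}$ is feasible. With $\mu=(\alpha^*-s)_+$ and $\nu=(s-\alpha^*)_+$, all KKT conditions hold, so $w^\star$ is optimal.

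\textbf{Step 3: direct verification.} Sufficiency can also be checked directly, avoiding any duality subtleties. For any feasible $w$,
\begin{align}
\ee_{\piref}[(w^\star-w)s]=\ee_{\piref}[(w^\star-w)(s-\alpha^*)]\ge 0,
\end{align}
since $\ee_{\piref}[w^\star-w]=0$. Pointwise, when $s\ge\alpha^*$ we have $w^\star=M\ge w$, and when $s<\alpha^*$ we have $w^\star=0\le w$, so each integrand is nonnegative. Finally, $w^\star\piref$ with $w^\star=M\indic\{s\ge\alpha^*\}$ and $\piref(s\ge\alpha^*)=1/M$ is exactly $\piref(\cdot\mid s\ge\alpha^*)$.

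The only real obstacle is the atom case, where no threshold has mass exactly $1/M$. There one sets $w^\star=M$ on $\{s>\alpha^*\}$ and a constant fraction on the boundary set chosen to meet normalization, which is the randomized tie-breaking. The same pointwise inequality in Step 3 still holds, because on $\{s=\alpha^*\}$ the factor $s-\alpha^*$ vanishes.
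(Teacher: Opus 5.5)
Your proof is correct, but its logic runs in the opposite direction from the paper's.

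\textbf{The paper's route.} It starts from an optimizer supplied by \Cref{lem:EM-reduces-to-capped-finite} and uses KKT \emph{necessity}, via the strong duality established there. A case analysis on whether $w^\star(y)$ is $0$, $M$, or strictly between shows that any optimizer equals $M$ above and $0$ below the dual multiplier $\alpha^\star$. It then sets the boundary values to $M$ and reads off $\pp_{\piref}(s\ge\alpha^\star)=1/M$ from normalization.

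\textbf{Your route.} Your Steps 2--3 instead fix a threshold with exact mass $1/M$ and build $w^\star=M\indic\{s\ge\alpha^*\}$. You then \emph{certify} it, either with the explicit multipliers $\mu=(\alpha^*-s)_+$, $\nu=(s-\alpha^*)_+$, or with the exchange inequality $\ee_{\piref}[(w^\star-w)(s-\alpha^*)]\ge 0$.

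\textbf{What each buys.} The exchange inequality is the more economical argument. It needs neither the existence lemma, nor Slater, nor finiteness of $\cY$. Your explicit multipliers also show that the exact-mass threshold is always a valid dual variable. The paper tacitly uses this when it passes from ``any value in $[0,M]$ on $\{s=\alpha^\star\}$'' to $w^\star=M\indic\{s\ge\alpha^\star\}$. An arbitrary dual multiplier need not have this property: if $s\in\{0,1\}$ with $\piref(s=1)=1/M$, then $\alpha=0$ is also dual-optimal, yet the optimizer vanishes on $\{s=0\}$.

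In return, the necessity route shows that \emph{every} optimizer has threshold form. You can recover this from the equality case of your inequality. Equality forces $w=M$ a.s.\ on $\{s>\alpha^*\}$ and $w=0$ a.s.\ on $\{s<\alpha^*\}$. Normalization together with $w\le M$ then forces $w=M$ a.s.\ on the boundary.

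\textbf{Minor points.} Your Step 1 is not needed. The sentence in Step 2 that identifies its multiplier with the exact-mass threshold should be read as a \emph{choice} of $\alpha^*$, not a deduction from Step 1. Also, in the case $s(y)>\alpha^*$ the correct order is: $\nu(y)\ge s(y)-\alpha^*>0$ first, hence $w^\star(y)=M$, and only then $\mu(y)=0$.
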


\begin{proof}
We solve \eqref{eq:capped-only} via the Lagrangian/KKT conditions. Introduce a scalar multiplier $\alpha\in\rr$
for the equality constraint $\ee_{y\sim\piref}[w(y)]=1$, and pointwise multipliers $\mu(y)\ge 0$ and $\nu(y)\ge 0$
for the inequality constraints $w(y)\ge 0$ and $w(y)\le M$, respectively. The Lagrangian is
\begin{align}
\cL(w;\alpha,\mu,\nu)
&\defeq
\ee_{y\sim\piref}[w(y)\,s(y)]
-\alpha\big(\ee_{y\sim\piref}[w(y)]-1\big)\\
&\quad +\ee_{y\sim\piref}[\mu(y)\,w(y)]
+\ee_{y\sim\piref}[\nu(y)\,(M-w(y))].
\end{align}
The KKT conditions at an optimum $(w^\star,\alpha^\star,\mu^\star,\nu^\star)$ are:

\emph{(i) Primal feasibility:} $0\le w^\star(y) \le M$ and $\ee_{y\sim\piref}[w^\star(y)]=1$.

\emph{(ii) Dual feasibility:} $\mu^\star(y)\ge 0$ and $\nu^\star(y)\ge 0$ $\piref$-a.s.

\emph{(iii) Stationarity (pointwise):}
for $\piref$-a.s.\ $y$,
\begin{align}
\label{eq:stationarity-pf}
s(y)-\alpha^\star+\mu^\star(y)-\nu^\star(y)=0.
\end{align}

\emph{(iv) Complementary slackness:}
\begin{align}
\mu^\star(y)\,w^\star(y)=0,
\qquad
\nu^\star(y)\,(M-w^\star(y))=0
\qquad(\piref\text{-a.s.}).
\end{align}

\noindent We now deduce the threshold structure.
Fix $y$ and consider three cases.

\smallskip
\noindent\textbf{Case 1:} $0<w^\star(y)<M$.
Then complementary slackness forces $\mu^\star(y)=0$ and $\nu^\star(y)=0$,
and \eqref{eq:stationarity-pf} gives $s(y)=\alpha^\star$.

\smallskip
\noindent\textbf{Case 2:} $w^\star(y)=0$.
Then $\mu^\star(y)\ge 0$ may be nonzero, while $\nu^\star(y)=0$ (since $M-w^\star(y)=M>0$ implies $\nu^\star(y)=0$ by slackness).
Thus \eqref{eq:stationarity-pf} becomes $s(y)-\alpha^\star+\mu^\star(y)=0$, so $s(y)\le \alpha^\star$.

\smallskip
\noindent\textbf{Case 3:} $w^\star(y)=M$.
Then $\nu^\star(y)\ge 0$ may be nonzero, while $\mu^\star(y)=0$ (since $w^\star(y)>0$ implies $\mu^\star(y)=0$).
Thus \eqref{eq:stationarity-pf} becomes $s(y)-\alpha^\star-\nu^\star(y)=0$, so $s(y)\ge \alpha^\star$.

\smallskip
\noindent Putting the three cases together yields the pointwise characterization:
\begin{align}
w^\star(y)=
\begin{cases}
M, & s(y)>\alpha^\star,\\
0, & s(y)<\alpha^\star,\\
\text{any value in }[0,M], & s(y)=\alpha^\star.
\end{cases}
\end{align}
Then the density ratio of the optimizer satisfies
\begin{align}
w^\star(y)=M\,\indic\{s(y)\ge \alpha^\star\}\qquad(\piref\text{-a.s.}).
\end{align}
Finally, enforce the normalization $\ee_{y\sim\piref}[w^\star(y)]=1$:
\begin{align}
1=\ee_{y \sim \piref}[w^\star(y)]
=M\,\pp_{y\sim\piref}\big(s(y)\ge \alpha^\star\big),
\end{align}
so $\pp_{y \sim \piref}(s(y)\ge \alpha^\star)=1/M$. Then the induced policy is precisely
$\piEM=\piref(\cdot\mid s\ge \alpha^*)$.
\end{proof}

\subsection{Proof of \Cref{thm:topk-winrate-upper-bound}}
\label{app:topk-wr-pf}

\begin{proof}
Fix $M\ge 1$ and let $k\defeq\lceil N/M\rceil$.
Recall that $\piEMhat$ is the (unconditional) law of the random output $y_{\mathrm{out}}$ produced by: sample $y_1,\dots,y_N\iid\piref$ and i.i.d.\ $V_1,\dots,V_N\sim{\rm Unif}[0,1]$ independent;
let $I$ index the top-$k$ samples under the lexicographic order $(\rhat(y_i),V_i)$;
output $y_J$ with $J$ uniform on $I$ conditional on $(\YhatN,V_{1:N})$. To compare the performance of $\piEMhat$ to that of an arbitrary comparator $\pistar$, we proceed in a similar fashion to the proof of \Cref{thm:bon-win-rate-upper-bound} in \Cref{sec:bon-upper-bound-pf} by considering the following regret decomposition:
\begin{align}
\R^{\rstar}(\pistar)-\R^{\rstar}(\piEMhat)
&=
\underbrace{\left(\R^{\rstar}(\pistar)-\R^{\rhat}(\pistar_M)\right)}_{\mathrm{(T1)}}
+
\underbrace{\left(\R^{\rhat}(\pistar_M)-\R^{\rhat}(\piEMhat)\right)}_{\mathrm{(T2)}}
+
\underbrace{\left(\R^{\rhat}(\piEMhat)-\R^{\rstar}(\piEMhat)\right)}_{\mathrm{(T3)}}.
\end{align}
As in the proof of \Cref{thm:bon-win-rate-upper-bound}, $\pistar_M$ is an intermediate comparator policy defined to be the policy that minimizes total variation distance to $\pistar$ subject to the constraint that its density ratio relative to $\piref$ is upper bounded by $M$ almost surely. To bound $\mathrm{(T1)}$, we can further decompose into
\iftoggle{colt}{
$\R^{\rstar}(\pistar)-\R^{\rhat}(\pistar_M)=\R^{\rstar}(\pistar)-\R^{\rhat}(\pistar) + \R^{\rhat}(\pistar)-\R^{\rhat}(\pistar_M)$
}{
\begin{align}
\R^{\rstar}(\pistar)-\R^{\rhat}(\pistar_M)=\R^{\rstar}(\pistar)-\R^{\rhat}(\pistar) + \R^{\rhat}(\pistar)-\R^{\rhat}(\pistar_M)
\end{align}
}
and control each of the two terms separately. For the first difference, we can simply invoke \Cref{lem:win-transfer-em}, which states that for any policy $\pi$,
\begin{align}
\big|\R^{\rstar}(\pi)-\R^{\rhat}(\pi)\big|
\;\le\;
M\,\epw+\cE_M(\pi\|\piref).
\end{align}
Applying this with $\pi=\pistar$ yields
\begin{align}
\R^{\rstar}(\pistar)-\R^{\rhat}(\pistar)\le M\,\epw+\cE_M(\pistar\|\piref).
\end{align}
For the second difference, as in the proof of \Cref{thm:bon-win-rate-upper-bound}, we observe that win-rate is bounded and thus $1$-Lipschitz in total variation distance (it can be written as an expectation
of $\widetilde F_{\rhat}(\rhat(y))\in[0,1]$ by \Cref{lem:cdf-identity-pf}), so we can control the loss from $\pistar$ to $\pistar_M$ in terms of $\cE_M(\pistar \| \piref)$ by \citet{block2023sample}: 
\Cref{lem:em-lower-bound-tv} shows that no policy with bounded density ratio can achieve total variation distance to $\pistar$ less than $\cE_M(\pistar\|\piref)$ itself, and \Cref{lem:em-upper-bound-tv} demonstrates that the total variation distance between $\pistar$ and $\pistar_M$ as defined above is less than $\cE_M(\pistar\|\piref)$, so $\TV(\pistar,\pistar_M)=\cE_M(\pistar\|\piref)$, and hence
\begin{align}
\R^{\rhat}(\pistar)-\R^{\rhat}(\pistar_M)\le \TV(\pistar,\pistar_M)=\cE_M(\pistar\|\piref).
\end{align}
Putting these together, we have 
\begin{align}
\mathrm{(T1)} = \R^{\rstar}(\pistar)-\R^{\rhat}(\pistar_M) \lesssim \cE_M(\pistar\|\piref)+M\epw.
\end{align}
Next, to bound $\mathrm{(T2)}$, we again decompose further as 
\iftoggle{colt}{
$\R^{\rhat}(\pistar_M)-\R^{\rhat}(\piEMhat) = \R^{\rhat}(\pistar_M)-\R^{\rhat}(\pi_M) + \R^{\rhat}(\pi_M)-\R^{\rhat}(\piEMhat)$.
}{
\begin{align}
\R^{\rhat}(\pistar_M)-\R^{\rhat}(\piEMhat) = \R^{\rhat}(\pistar_M)-\R^{\rhat}(\pi_M) + \R^{\rhat}(\pi_M)-\R^{\rhat}(\piEMhat).
\end{align}
} Here, we introduce another intermediate comparator $\pi_M$, the solution to the $\cE_M$-regularized variational problem \eqref{eq:cov-reg-var-prob} and the ``idealized'' version of $\piEMhat$ which selects uniformly from a top-$1/M$ tail under $\rhat$. To make this concrete, we derive the solution to \eqref{eq:cov-reg-var-prob} in \Cref{prop:KKT-top-quantile} with $s=\rhat$, writing the resulting threshold as $\lambda_M$ so that $\pp_{y \sim \piref}(s(y)\ge \alpha^\star)=1/M$ exactly. Thus, the optimizer has bounded density ratio of the form
\begin{align}
w_M^\star(y)\;=\;M\,\indic\{\rhat(y)\ge \lambda_M\},
\end{align}
$\piref$-a.s.\ and $\lambda_M$ is the $(1-\tfrac1M)$-quantile of $\rhat$ under $\piref$, so the optimal policy is the top-$1/M$ threshold policy
\begin{align}
\piEM(\cdot)\;=\;\piref\left(\cdot \mid \rhat(\cdot)\ge \lambda_M\right).
\end{align}
Now, for the first difference, recall that by \Cref{lem:cdf-identity-pf} we can rewrite $\R^{\rhat}(\pi)=\ee_{y\sim\pi}[\widetilde F_{\rhat}(\rhat(y))]$. Since $\widetilde F_{\rhat}(\cdot)$ is non-decreasing, $\pi_M$ (output by \Cref{prop:KKT-top-quantile} with $s=\rhat$) maximizes $\R^{\rhat}(\pi)$ over the class of policies with density ratio bounded by $M$. Recalling that $\pistar_M$ is also subject to this constraint, it follows that
$\R^{\rhat}(\pistar_M)\le \R^{\rhat}(\pi_M)$, and hence
\begin{align}
\R^{\rhat}(\pistar_M)- \R^{\rhat}(\pi_M)\leq0.
\end{align} 
We now bound the second difference by directly comparing the win-rate of the ideal top-$1/M$ threshold policy $\pi_M$
to that of our implementation $\piEMhat$.
A convenient way to analyze $\R^{\rhat}(\piEMhat)$ when $\rhat$ may have ties is to “forget the scale’’ of $\rhat$
and work with a \emph{randomized quantile} that induces a strict total order under $\piref$. Letting $F_{\rhat}$ and $F_{\rhat}(\cdot^-)$ be as in \Cref{lem:cdf-identity-pf}, and given an auxiliary seed $V\sim{\rm Unif}[0,1]$ independent of $y$, we define
\begin{align}
\label{eq:rand-quantile}
\tilde u(y;V)
\;\defeq\;
F_{\rhat}(\rhat(y)^-)\;+\;V\cdot\left(F_{\rhat}(\rhat(y)) - F_{\rhat}(\rhat(y)^-)\right)\in[0,1].
\end{align}
Indeed, $\tilde u(y;V)$ is exactly the randomized CDF value: it is strictly increasing in $\rhat(y)$,
and on a tie $\rhat(y)=t$ it reduces to an independent linear interpolation over the CDF jump at $t$,
so ordering by $(\rhat,V)$ is the same as ordering by $\tilde u$. Then for $y'\sim\piref$ and $V'\sim{\rm Unif}[0,1]$ independent, \Cref{lem:cdf-identity-pf} shows that $\tilde u(y';V')\sim{\rm Unif}[0,1]$,
and moreover the win-rate under $\rhat$ can be realized as a strict comparison of these randomized quantiles:
for any (possibly randomized) output $y$,
\begin{align}
\pp\left(\rhat(y)>\rhat(y')\right)\;+\;\frac12\,\pp\left(\rhat(y)=\rhat(y')\right)
\;=\;
\pp\left(\tilde u(y;V)>\tilde u(y';V')\right).
\end{align}
(Indeed, if $\rhat(y)\neq \rhat(y')$ the inequality agrees with $\tilde u$ by monotonicity, while if $\rhat(y)=\rhat(y')$
then $\tilde u(y;V)>\tilde u(y';V')$ iff $V>V'$, which occurs with probability $1/2$.) Recalling that $\piEMhat$ uses uniform tie-breaking as described in the beginning of the proof, under \eqref{eq:rand-quantile}, ranking by $(\rhat(y_i),V_i)$ is equivalent to ranking by $\tilde u_i\defeq \tilde u(y_i;V_i)$,
and $\tilde u_1,\dots,\tilde u_N \iid {\rm Unif}[0,1]$. Thus, we compute $\R^{\rhat}(\piEMhat)$ via uniform order statistics in \Cref{lem:topk-exact-winrate-hatr}, which implies the lower bound
\begin{align}
\label{eq:piEMhat-winrate-lb}
\R^{\rhat}(\piEMhat)\ge 1-\frac1{2M}-\frac{1}{N+1}.
\end{align}
Next, we calculate the win-rate of the ideal top-$1/M$ quantile policy $\pi_M$. This should be intuitive, as under $\pi_M$, the draw $y\sim\pi_M$ lies in the $\rhat$-upper tail of $\piref$ of mass $1/M$,
so against an independent $y'\sim\piref$ it wins whenever $\rhat(y)>\rhat(y')$ (which happens with probability $1-1/M$),
and ties occur exactly when $y'$ also falls in the same tail (probability $1/M$), contributing a factor $\tfrac12$ under our win-rate convention. \Cref{lem:exact-conditional-winrate} formalizes this intuition and gives 
\begin{align}
\label{eq:piM-winrate}
\R^{\rhat}(\pi_M)=1-\frac{1}{2M},
\end{align}
assuming feasibility of $\pp_{\piref}(\rhat\ge \lambda_M)=1/M$ as we did in \Cref{prop:KKT-top-quantile}. Combining \eqref{eq:piEMhat-winrate-lb} and \eqref{eq:piM-winrate} yields
\begin{align}
\R^{\rhat}(\pi_M)-\R^{\rhat}(\piEMhat)
\lesssim \frac{1}{N}.
\end{align}
Thus, we have
\begin{align}
\mathrm{(T2)} = \R^{\rhat}(\pistar_M)-\R^{\rhat}(\piEMhat) \lesssim \frac{1}{N}.
\end{align}
Finally, to bound $\mathrm{(T3)}$, we will first show that
$\cE_M(\piEMhat\|\piref)=0$ before applying \Cref{lem:win-transfer-em} to $\piEMhat$. Let $A\in\mathcal F$ be measurable. Conditional on $(\YhatN,V_{1:N})$,
the algorithm outputs $y_{\rm out}=y_J$ with $J\sim \mathrm{Unif}(I(\YhatN,V_{1:N}))$, hence
\begin{align}
\piEMhat(A)
&=
\ee_{\YhatN\sim\piref,\;V_{1:N}\sim{\rm Unif}[0,1]^N}
\!\left[
\pp\left(y_{\rm out}\in A \,\middle|\, \YhatN,V_{1:N}\right)
\right] \\
&=
\ee_{\YhatN,V_{1:N}}
\!\left[
\frac{1}{k}\sum_{i\in I(\YhatN,V_{1:N})}\indic\{y_i\in A\}
\right] \\
&\le
\ee_{\YhatN,V_{1:N}}
\!\left[
\frac{1}{k}\sum_{i=1}^{N}\indic\{y_i\in A\}
\right]\\
&=
\frac{N}{k}\,\piref(A).
\end{align}
Therefore $\piEMhat\ll\piref$ and its density ratio satisfies
\begin{align}
w_{\piEMhat}(y)\defeq\frac{d\piEMhat}{d\piref}(y)\le \frac{N}{k}
\quad\piref\text{-a.s.}
\end{align}
Since $k=\lceil N/M\rceil$, we have $k\ge N/M$ and hence $\tfrac{N}{k}\le M$. Thus $w_{\piEMhat}\le M$ $\piref$-a.s.,
so
\begin{align}
\cE_M(\piEMhat\|\piref)=\ee_{y\sim\piref}\big[(w_{\piEMhat}(y)-M)_+\big]=0.
\end{align}
Applying \Cref{lem:win-transfer-em} to $\pi=\piEMhat$ yields
\begin{align}
\mathrm{(T3)} \le M\,\epw + \cE_M(\piEMhat\|\piref)=M\,\epw.
\end{align}

\medskip
\noindent
Combining all the preceding bounds yields
\begin{align}
\R^{\rstar}(\pistar)-\R^{\rstar}(\piEMhat)
\;\lesssim\;
\cE_M(\pistar\|\piref)
\;+\;
M\,\epw
\;+\;
\frac{1}{N}.
\end{align}

\end{proof}

\begin{lemma}
\label{lem:exact-conditional-winrate}
Let $\lambda\in\rr$ and $A_\lambda\defeq\{y:\rhat(y)\ge \lambda\}$ with
$p_\lambda\defeq\pp_{y\sim\piref}(A_\lambda)$.
Let $\tilde\pi_\lambda\defeq\piref(\cdot\mid A_\lambda)$.
Then
\begin{align}
\R^{\rhat}(\tilde\pi_\lambda)=1-\frac{p_\lambda}{2}.
\end{align}
\end{lemma}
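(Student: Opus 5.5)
The plan is to compute the win-rate directly by conditioning on whether the comparison sample $y'\sim\piref$ falls inside the super-level set $A_\lambda$ or not, and to use a symmetry argument for the case where both samples lie in $A_\lambda$. Throughout I assume $p_\lambda>0$, since otherwise $\tilde\pi_\lambda$ is undefined. Write $y\sim\tilde\pi_\lambda$ and $y'\sim\piref$ independent, so that $\R^{\rhat}(\tilde\pi_\lambda)=\ee[\phi_{\rhat}(y,y')]$ with $\phi_{\rhat}$ as in \eqref{eq:pairwise-win-rate-error}.

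\textbf{Case $y'\notin A_\lambda$.} This event has probability $1-p_\lambda$ and is independent of $y$. On it we have $\rhat(y')<\lambda\le\rhat(y)$, because $y\in A_\lambda$ almost surely. Hence $\phi_{\rhat}(y,y')=1$ deterministically, and this event contributes exactly $1-p_\lambda$.

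\textbf{Case $y'\in A_\lambda$.} This event has probability $p_\lambda$. Conditional on it, $y'$ has law $\piref(\cdot\mid A_\lambda)=\tilde\pi_\lambda$. So $(y,y')$ is an i.i.d.\ pair from $\tilde\pi_\lambda$, and the pair is exchangeable. Exchangeability gives $\pp(\rhat(y)>\rhat(y'))=\pp(\rhat(y')>\rhat(y))$. Combined with the identity $\pp(>)+\pp(<)+\pp(=)=1$, this yields
\begin{align}
\ee\left[\phi_{\rhat}(y,y')\mid y'\in A_\lambda\right]=\pp(\rhat(y)>\rhat(y'))+\tfrac12\pp(\rhat(y)=\rhat(y'))=\tfrac12.
\end{align}
Equivalently, one can note that $\phi_{\rhat}(y,y')+\phi_{\rhat}(y',y)=1$ pointwise and take expectations. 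Ties inside $A_\lambda$ cause no difficulty, since they receive weight $\tfrac12$ in both orientations.

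Summing the two contributions by the law of total expectation gives $(1-p_\lambda)\cdot 1+p_\lambda\cdot\tfrac12=1-\tfrac{p_\lambda}{2}$, which is the claim. The only step needing care is the conditional-law identification in the second case, namely that $y'$ conditioned on $A_\lambda$ is distributed as $\tilde\pi_\lambda$ and independent of $y$. This is immediate from the definition of conditional probability, so I expect no real obstacle.
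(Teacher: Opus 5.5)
Your proof is correct and follows essentially the same route as the paper: both split on whether $y'\in A_\lambda$, take the contribution $1-p_\lambda$ from $\{y'\notin A_\lambda\}$, and use exchangeability of the pair within $A_\lambda$ to get $\tfrac12$ there. Your pointwise identity $\phi_{\rhat}(y,y')+\phi_{\rhat}(y',y)=1$ is a slightly cleaner packaging of the paper's $2q+t=1$ step, and your explicit $p_\lambda>0$ caveat is a fair addition.
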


\begin{proof}
Fix $\lambda$ and let $A_\lambda=\{y:\rhat(y)\ge \lambda\}$, $p_\lambda=\piref(A_\lambda)$, and
$\tilde\pi_\lambda=\piref(\cdot\mid A_\lambda)$. Then, for $\tilde\pi_\lambda$, $\rhat(y)\ge \lambda$ always. To compute the win-rate of $\tilde\pi_\lambda$, we can split on the event that $\rhat(y')\ge \lambda$:
\begin{align}
\R^{\rhat}(\tilde\pi_\lambda)
&=
\pp_{y\sim\tilde\pi_\lambda,y'\sim\piref}\left(\rhat(y)>\rhat(y')\right)+\frac12\,\pp_{y\sim\tilde\pi_\lambda,y'\sim\piref}\left(\rhat(y)=\rhat(y')\right)\\
&=
\pp\left(\rhat(y')<\lambda\right)
+\pp\left(\rhat(y)>\rhat(y'),\ \rhat(y')\ge \lambda\right)
+\frac12\,\pp\left(\rhat(y)=\rhat(y'),\ \rhat(y')\ge \lambda\right)\\
&=
(1-p_\lambda)
+\pp\left(\rhat(y)>\rhat(y'),\ \rhat(y')\ge \lambda\right)
+\frac12\,\pp\left(\rhat(y)=\rhat(y'),\ \rhat(y')\ge \lambda\right).
\end{align}
Next, we condition on the event $\{\rhat(y)\ge \lambda,\ \rhat(y')\ge \lambda\}$, which is implied by the events in the last two terms in the equation above. Let
\begin{align}
q \;\defeq\; \pp\left(\rhat(y)>\rhat(y') \,\big|\, \rhat(y)\ge \lambda,\ \rhat(y')\ge \lambda\right),
\qquad
t \;\defeq\; \pp\left(\rhat(y)=\rhat(y') \,\big|\, \rhat(y)\ge \lambda,\ \rhat(y')\ge \lambda\right).
\end{align}
By exchangeability, the events $\{\rhat(y)>\rhat(y')\}$ and $\{\rhat(y')>\rhat(y)\}$ have the same conditional
probability, and together with ties they partition the conditional sample space, hence $2q+t=1$, implying
$q+\frac12 t=\frac12$. Therefore, by Bayes' Theorem,
\begin{align}
\pp\left(\rhat(y)>\rhat(y'),\ \rhat(y')\ge \lambda\right)
+\frac12\,\pp\left(\rhat(y)=\rhat(y'),\ \rhat(y')\ge \lambda\right)
&=
\pp\left(\rhat(y)\ge \lambda,\ \rhat(y')\ge \lambda\right)\cdot\left(q+\frac12 t\right)\\
&=\frac12\,\pp\left(\rhat(y)\ge \lambda,\ \rhat(y')\ge \lambda\right).
\end{align}
Finally, note that
\begin{align}
\pp\left(\rhat(y)\ge \lambda,\ \rhat(y')\ge \lambda\right)
&=
\pp_{y\sim\tilde\pi_\lambda,y'\sim\piref}\left(y\in A_\lambda,\ y'\in A_\lambda\right)\\
&=
\frac{1}{p_\lambda}\pp_{y\sim\piref,y'\sim\piref}\left(y\in A_\lambda,\ y'\in A_\lambda\right)\\
&= p_\lambda.
\end{align}
Plugging in gives
\begin{align}
\R^{\rhat}(\tilde\pi_\lambda) = 1-\frac{p_\lambda}{2}.
\end{align}
\end{proof}

\begin{lemma}
\label{lem:topk-exact-winrate-hatr}
Fix $M\ge 1$ and integer $N\ge 1$, and let $k\defeq\lceil N/M\rceil$. Then the win-rate of $\piEMhat$ against $\piref$ under $\rhat$ is
\begin{align}
\R^{\rhat}(\piEMhat)
\;=\;
1-\frac{k+1}{2(N+1)}.
\end{align}
\end{lemma}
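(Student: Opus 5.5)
The plan is to reduce the computation to order statistics of i.i.d.\ uniforms, using the randomized quantile transform $\tilde u$ from \Cref{lem:cdf-identity-pf}. Write $\tilde u_i \defeq \tilde u(y_i;V_i)$ for the $N$ samples and their tie-breaking seeds. By \Cref{lem:cdf-identity-pf}, $\tilde u_1,\dots,\tilde u_N \iid \mathrm{Unif}[0,1]$. Moreover, $\tilde u$ is strictly increasing in the lexicographic order $(\rhat(y),V)$, so the index set $I$ of the top-$k$ samples under $(\rhat(y_i),V_i)$ is exactly the set of indices of the $k$ largest $\tilde u_i$.

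\textbf{Step 1 (win-rate of the output as a score).} Since $J$ is uniform on $I$ given the batch, I will write
\begin{align}
\R^{\rhat}(\piEMhat) = \ee\Big[\frac1k\sum_{i\in I} \widetilde F_{\rhat}(\rhat(y_i))\Big].
\end{align}
This uses the first identity of \Cref{lem:cdf-identity-pf}, which expresses win-rate through the mid-CDF $\widetilde F_{\rhat}$ and, being an expectation over $y$ only, handles ties with the correct half-weight. I then want to replace $\widetilde F_{\rhat}(\rhat(y_i))$ by $\tilde u_i$ inside this sum.

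When $\rhat$ has no atoms under $\piref$, we have $F_{\rhat}(t^-)=F_{\rhat}(t)$, so $\tilde u_i = F_{\rhat}(\rhat(y_i)) = \widetilde F_{\rhat}(\rhat(y_i))$ almost surely, and the replacement is exact. With atoms, any tie class fully contained in $I$ (or fully excluded from it) still averages correctly: conditional on the $\rhat$-values, the seeds in that class are i.i.d.\ uniform, so $\ee[\tilde u_i]$ over the class equals $\widetilde F_{\rhat}$. The only potential discrepancy comes from the single boundary tie class that is split by the cutoff.

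\textbf{Step 2 (order statistics).} Given the replacement, $\R^{\rhat}(\piEMhat)=\frac1k\sum_{j=1}^k \ee[U_{(N+1-j)}]$, where $U_{(m)}$ denotes the $m$-th order statistic of $N$ i.i.d.\ uniforms. Using $\ee[U_{(m)}]=m/(N+1)$, this equals
\begin{align}
\frac{1}{k(N+1)}\sum_{j=1}^k (N+1-j) = 1-\frac{k+1}{2(N+1)},
\end{align}
which is the claimed formula.

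\textbf{Main obstacle.} The hard step is the split boundary tie class in Step 1. Inside that class, the selected seeds are the largest ones, so the average $\tilde u$ over the selected members exceeds the mid-CDF value $\widetilde F_{\rhat}$. Stated as an exact equality, the lemma therefore needs the boundary class to carry no mass. I would prove the lemma under the same convention the paper adopts in \Cref{prop:KKT-top-quantile} and in the proof of \Cref{thm:topk-winrate-upper-bound}: the comparison of $y$ against a fresh $y'$ is realized through the randomized quantiles, as in the displayed identity $\pp(\tilde u(y;V)>\tilde u(y';V'))$ used there, which is exact whenever $\rhat$ is atomless at the cutoff.

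Without that assumption, Steps 1 and 2 still give the inequality $\R^{\rhat}(\piEMhat)\ge 1-\frac{k+1}{2(N+1)}-(\text{boundary-atom correction})$. As a fallback, I would try to control this correction by the $\piref$-mass of the atom at the empirical cutoff. That fallback form is what would be fed into \eqref{eq:piEMhat-winrate-lb} in the proof of \Cref{thm:topk-winrate-upper-bound}.
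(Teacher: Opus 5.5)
Your route is the paper's: a randomized probability integral transform gives i.i.d. uniform ranks, the output is uniform over the top $k$ of them, and $\ee[U_{(m)}]=m/(N+1)$. The difference is that you go through the mid-CDF identity first, and that is what exposes the obstacle you isolate. The obstacle is real, and it is exactly where the paper's proof breaks. The paper asserts $\pp(\rhat(y_{\rm out})>\rhat(y'))+\tfrac12\pp(\rhat(y_{\rm out})=\rhat(y'))=\pp(\tilde u_{\rm out}>\tilde u')$. Its justification, given in the proof of \Cref{thm:topk-winrate-upper-bound}, is ``on a tie, $\tilde u>\tilde u'$ iff $V>V'$, which has probability $\tfrac12$''. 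That needs $V_J$ to be a fresh uniform seed, but on a split boundary class $V_J$ was selected for being large.

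As stated, the lemma is false. For constant $\rhat$ every policy has $\R^{\rhat}=\tfrac12$, while $1-\frac{k+1}{2(N+1)}>\tfrac12$ whenever $k<N$ (it is $\tfrac23$ for $N=M=2$). What your Step 1 gives in general is $\R^{\rhat}(\piEMhat)=1-\frac{k+1}{2(N+1)}-\Delta$, with $\Delta\defeq\ee\bigl[\frac1k\sum_{i\in I}\bigl(\tilde u_i-\widetilde F_{\rhat}(\rhat(y_i))\bigr)\bigr]\ge 0$. Only the split class contributes to $\Delta$.

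Two points in how you close the argument need fixing. First, the remedy is a hypothesis, not a convention. Applying the randomized-rank identity to $V_J$ computes $\ee[\tilde u_{\rm out}]$, which depends on a seed the algorithm discards, not the win-rate of the law of $y_{\rm out}$. The exact-quantile assumption of \Cref{prop:KKT-top-quantile} concerns the population threshold, so it does not help. Nor is ``atomless at the cutoff'' weaker than full atomlessness: an atom of mass $p$ receives all $N$ samples with probability $p^N$ and is then the split class. For instance, take $N=M=2$ and let $\rhat(y)$ equal $0$ with probability $\tfrac12$ and be uniform on $(1,2]$ otherwise. Then $\pp(\rhat\ge 1)=\tfrac12$ exactly, yet $\R^{\rhat}(\piEMhat)=\tfrac{31}{48}\ne\tfrac{32}{48}$. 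The correct statement is that the formula holds if and only if $k=N$ or the law of $\rhat(y)$, $y\sim\piref$, is atomless. Your Steps 1--2 prove the ``if'' direction, and the $p^N$ remark gives the converse.

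Second, the fallback cannot deliver what \eqref{eq:piEMhat-winrate-lb} needs. Your inequality is really the equality above, so everything rests on an upper bound for $\Delta$, and none of order $1/N$ exists: for constant $\rhat$, $\Delta=\frac{N-k}{2(N+1)}$. Even the difference actually used in \Cref{thm:topk-winrate-upper-bound}, $\R^{\rhat}(\pi_M)-\R^{\rhat}(\piEMhat)$, can be of order $\sqrt{M/N}$ for large $N$. This happens when an atom of constant mass occupies the ranks immediately below $1-1/M$, so that $\pp(\rhat\ge\lambda_M)=1/M$ still holds exactly. Taking $\rstar=\rhat$ and $\pistar=\pi_M$ then breaks the $1/N$ term of that theorem. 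With atoms, the sound route is the one in the proof of \Cref{thm:topk-winrate-upper-bound-q} with $q=\piref$. There you write $\R^{\rhat}(\piEMhat)=\ee[\mu(\tau)]$ using $h=\widetilde F_{\rhat}\circ F_{\rhat}^{-1}$, which is constant on atoms and therefore insensitive to the seeds. The price is $M/N+\sqrt{M/N}$ instead of $1/N$.
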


\begin{proof}
Let $F_{\rhat}$ and $F_{\rhat}(\cdot^-)$ be as in \Cref{lem:cdf-identity-pf}, and define the randomized quantile
\begin{align}
\tilde u(y;V)
\;\defeq\;
F_{\rhat}(\rhat(y)^-)+V\left(F_{\rhat}(\rhat(y))-F_{\rhat}(\rhat(y)^-)\right).
\end{align}
Set $\tilde u_i\defeq \tilde u(y_i;V_i)$. The randomized-PIT argument provided in \Cref{lem:cdf-identity-pf} gives
\begin{align}
\tilde u_1,\dots,\tilde u_N \iid {\rm Unif}[0,1].
\end{align}
Moreover, by construction, ordering the indices by $(\rhat(y_i),V_i)$ is equivalent to ordering them by $\tilde u_i$
(since ties in $\rhat$ are broken by $V_i$, which exactly corresponds to the interpolation inside each CDF jump).

Let $y_{\rm out}$ denote the algorithm output and define $\tilde u_{\rm out}\defeq \tilde u(y_{\rm out};V_J)$.
Introduce an independent “opponent’’ draw $y'\sim\piref$ and $V'\sim{\rm Unif}[0,1]$ independent of everything,
and write $\tilde u'\defeq \tilde u(y';V')\sim{\rm Unif}[0,1]$. Then
\begin{align}
\R^{\rhat}(\piEMhat)
&=
\pp\left(\rhat(y_{\rm out})>\rhat(y')\right)+\frac12\,\pp\left(\rhat(y_{\rm out})=\rhat(y')\right)\\
&=
\pp\left(\tilde u_{\rm out}>\tilde u'\right)
=
\ee[\tilde u_{\rm out}],
\end{align}
where the last step uses $\tilde u'\sim{\rm Unif}[0,1]$ independent of $\tilde u_{\rm out}$.

Now condition on the realized batch $(\tilde u_1,\dots,\tilde u_N)$ and $\tilde u_{(1)}\le\cdots\le \tilde u_{(N)}$ be the order statistics.
Since $J$ is uniform over the top-$k$ indices (under the $\tilde u$-ranking),
\begin{align}
\ee\!\big[\tilde u_{\rm out}\mid \tilde u_1,\dots,\tilde u_N\big]
=
\frac{1}{k}\sum_{j=N-k+1}^N \tilde u_{(j)}.
\end{align}
Taking expectations and using $\ee[\tilde u_{(j)}]=\frac{j}{N+1}$ for ${\rm Unif}[0,1]$ order statistics gives
\begin{align}
\R^{\rhat}(\piEMhat)
=
\ee[\tilde u_{\rm out}]
=
\frac{1}{k}\sum_{j=N-k+1}^N \ee[\tilde u_{(j)}]
=
\frac{1}{k}\sum_{j=N-k+1}^N \frac{j}{N+1}
=
1-\frac{k+1}{2(N+1)}.
\end{align}
Since $\big|\frac{k}{N+1}-\frac{1}{M}\big|\le \frac{2}{N+1}$, when $k=\lceil N/M\rceil$, we in fact have
\begin{align}
1-\frac{1}{2M}-\frac{1}{N+1}
\;\le\;
\R^{\rhat}(\piEMhat)
\;\le\;
1-\frac{1}{2M}+\frac{1}{N+1}.
\end{align}
\end{proof}

\subsection{Proof of \Cref{prop:chi2-vs-cov-arbitrary-c}}
\label{sec:separation}

We first describe the hard instance construction before stating the formal separation result and proof.  
\paragraph{Hard instance setup.}
Fix a target constant $c>1$ and choose parameters
$k \ge\sqrt{2c}$ and $\delta\in(0,1)$ with $\delta\ge1-\tfrac{1}{2c}$. 
For any $\epsilon\in(0,\tfrac{1}{4k^2}]$, define
$\alpha \defeq k\sqrt{\epsilon}$ and $M \defeq 1/\alpha$, and let the reference distribution $\piref$ be supported on three outcomes $\{B,C,P\}$ with masses
\begin{align}
\label{eq:separation-setup-piref}
\piref(B)=1-\alpha,\qquad
\piref(C)=\alpha-\epsilon,\qquad
\piref(P)=\epsilon,
\end{align}
so $\epsilon<\alpha\le1/2$ for all $\epsilon\le \tfrac{1}{4k^2}$). Define the score function $\hat r:\{B,C,P\}\to[0,1]$ by 
\begin{align}
\label{eq:separation-setup-rhat}
\hat r(B)=0,\qquad \hat r(C)=1-\delta,\qquad \hat r(P)=1,
\end{align}
and the true reward
$r^\star:\{B,C,P\}\to[0,1]$ by
\begin{align}
\label{eq:separation-setup-rstar}
r^\star(B)=\tfrac12,\qquad r^\star(C)=1,\qquad r^\star(P)=0.
\end{align}
Recall for reward function $r$ the pairwise outcome
\begin{align}
\phi_{r}(y,y')=\indic\{r(y)>r(y')\}+\frac12\,\indic\{r(y)=r(y')\},
\end{align}
and the pairwise reward-model error
\begin{align}
\epw=\ee_{y,y'\sim\piref}\left[|\phi_{\rhat}(y,y')- \phi_{\rstar}(y,y')|\right].
\end{align}
In our construction, $\hat r$ and $r^\star$ induce no ties except when $y=y'$, in which case
$\phi_{\hat r}(y,y')=\phi_{r^\star}(y,y')=\tfrac12$; hence
$|\phi_{\hat r}(y,y')-\phi_{r^\star}(y,y')|=\indic\{\phi_{\hat r}(y,y')\neq \phi_{r^\star}(y,y')\}$.
Since $\rhat(B)<\rhat(C)<\rhat(P)$ while $\rstar(P)<\rstar(B)<\rstar(C)$, the only strict-order disagreements occur exactly when one draw is $P$ and the other is in $\{B,C\}$.
Therefore,
\begin{align}
\label{eq:epw-to-eps}
\epw=\pp(y=P,y'\neq P)+\pp(y\neq P,y'=P)=2\,\piref(P)\,(1-\piref(P))=2\epsilon(1-\epsilon).
\end{align}
Since the above expression is increasing on $(0, \frac{1}{2})$ and our setup assumes $\epsilon \le \frac{1}{4k^2}$, it follows that
\begin{align}
0<\epw \le\frac{1}{2k^2}\left(1-\frac{1}{4k^2}\right).
\end{align}
Finally, the two policies to be compared are the top-$\alpha$ quantile policy under $\rhat$ and the family of solutions (parametrized by $\beta$) to the $\chi^2$-regularized variational problem. 

\paragraph{Proof sketch.} Before stating the formal result, we first provide intuition for why this example creates a separation. The construction creates a spurious top-score spike $P$ of tiny reference mass $\epsilon$ that looks best under $\hat r$ but is worst under $r^\star$, alongside a near-top outcome $C$ of mass $\alpha-\epsilon$ that is truly optimal. The $\piEM$ must spread its mass over $P\cup C$, so it only pays a regret proportional to the unavoidable contamination fraction $\epsilon/\alpha$. In contrast, the $\chi^2$-regularized solution reweights proportionally to the score gap $(\hat r-\lambda)_+$, so it overconcentrates on the spike $P$ whenever $P$ has even a slight score advantage over $C$ (controlled by $\delta$), inducing an additional regret term that cannot be tuned away uniformly over $\beta$. Thus, in order to increase the constant factor win-rate regret gap between the two policies prescribed by a fixed $c$, we would like to increase the score gap by decreasing $\delta$ and increasing the total-mass of the top-quantile $\alpha$, both as a function of $c$. 

\begin{proposition}
\label{prop:chi2-vs-cov-arbitrary-c-pf}
For every target constant $c>1$, there exist $(k, \delta)$ with $k \ge\sqrt{2c}$ and $\delta \ge 1-\tfrac{1}{2c}$, and an  $\epwnaught=\epwnaught(k)$ defined by $\epwnaught\defeq\tfrac{1}{2k^2}(1-\tfrac{1}{4k^2})$, such that for every $\epw\in(0,\epwnaught]$, there exists a hard instance of the form
in the above setup for which
\begin{align}   
\inf_{\beta > 0} R^{\rstar}(\pistar) - R^{\rstar}(\pi^\chi_\beta) \geq c \cdot \left(\inf_{M > 0} R^{\rstar}(\pistar) - R^{\rstar}(\piEM)\right).
\end{align}
\end{proposition}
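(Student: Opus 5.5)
The plan is to work directly with the three-point instance of the setup above and take the comparator to be $\pistar \defeq \piref(\cdot\mid C)$, which places all mass on the truly best outcome. The first step is to write the win-rate under $\rstar$ of an arbitrary policy $\pi$ on $\{B,C,P\}$ as the linear functional
\begin{align}
R^{\rstar}(\pi)=\pi(C)\,a_C+\pi(B)\,a_B+\pi(P)\,a_P,
\end{align}
where $a_C = 1-\tfrac{\alpha-\epsilon}{2}$, $a_B=\epsilon+\tfrac{1-\alpha}{2}$ and $a_P=\tfrac{\epsilon}{2}$. These coefficients follow from $\rstar(P)<\rstar(B)<\rstar(C)$ and the reference masses \eqref{eq:separation-setup-piref}. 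Consequently
\begin{align}
R^{\rstar}(\pistar)-R^{\rstar}(\pi)=\pi(B)(a_C-a_B)+\pi(P)(a_C-a_P),
\end{align}
with $a_C-a_B\ge \tfrac14$ and $a_C-a_P\ge \tfrac12$ for small $\epsilon,\alpha$. Thus the regret is, up to constants, $\pi(B)+\pi(P)$, and it is also at most $\pi(B)+\pi(P)$.

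The second step is the upper bound for $\piEM$. I take $M=1/\alpha$. By \Cref{lem:cov-reg-top-quantile} (equivalently \Cref{prop:KKT-top-quantile} with $s=\rhat$), $\piEM=\piref(\cdot\mid\{C,P\})$, because $\rhat(B)<\rhat(C)<\rhat(P)$ and $\piref(\{C,P\})=\alpha$. This gives $\piEM(P)=\epsilon/\alpha$ and $\piEM(B)=0$, so
\begin{align}
\inf_{M}\bigl(R^{\rstar}(\pistar)-R^{\rstar}(\piEM)\bigr)\le \epsilon/\alpha=\sqrt{\epsilon}/k.
\end{align}

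The third step handles the $\chi^2$ family. I recall from \citet{huang2025best} that $\pi^\chi_\beta$ has density ratio $w_\beta(y)=(\rhat(y)-\lambda_\beta)_+/(2\beta)$, where $\lambda_\beta$ is fixed by normalization. As $\beta$ ranges over $(0,\infty)$, the threshold $\lambda$ sweeps monotonically, so it suffices to lower bound the regret uniformly over $\lambda<1$. I split into three cases.
\begin{itemize}
\item If $\lambda\ge 1-\delta$, all mass sits on $P$ and the regret is $\Omega(1)$.
\item If $0\le\lambda<1-\delta$, then $\pi(B)=0$ and
\begin{align}
\frac{\pi(P)}{\pi(C)}=\frac{\epsilon(1-\lambda)}{(\alpha-\epsilon)(1-\delta-\lambda)}\ge\frac{\epsilon}{\alpha(1-\delta)}\ge 2c\,\frac{\epsilon}{\alpha}.
\end{align}
Here I use $1-\delta\le \tfrac1{2c}$, which holds because $\delta\ge 1-\tfrac1{2c}$. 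Since $\pi(C)\ge\tfrac12$ in this regime (otherwise $\pi(P)\ge\tfrac12$ already), the regret is at least $\tfrac12\pi(P)\gtrsim c\,\epsilon/\alpha$. The constant is then matched by choosing $\delta$ slightly larger, absorbing the factor $\tfrac12$ from $a_C-a_P$.
\item If $\lambda<0$, set $t=-\lambda>0$. The unnormalized weights are $(1-\alpha)t$ on $B$, $(\alpha-\epsilon)(1-\delta+t)$ on $C$ and $\epsilon(1+t)$ on $P$. I lower bound $\pi(B)/4+\pi(P)/2$ as a function of $t$.
\end{itemize}

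The last case is where I expect the main obstacle. For small $t$ the $P$-mass keeps the ratio bound above, degraded only by the factor $(1+t)/(1-\delta+t)$. For large $t$, the $B$-mass is roughly $(1-\alpha)t/\bigl((1-\alpha)t+\alpha(1-\delta+t)\bigr)$, which is of order $t/(t+\alpha/(2c))$. The intended balancing is this: whenever $t\gtrsim \alpha(1-\delta)$, the $B$-mass is $\Omega(1)$, which already dominates $\sqrt\epsilon/k$ once $\epsilon\le\epwnaught$-scale. When $t\lesssim\alpha(1-\delta)\le 1-\delta$, the ratio $(1+t)/(1-\delta+t)\ge 1/(2(1-\delta))\ge c$, so the $P$-mass is still $\gtrsim c\,\epsilon/\alpha$.

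Making the crossover constants consistent is what dictates the choices $k\ge\sqrt{2c}$ (so that $\alpha=k\sqrt\epsilon\le\tfrac12$, with the $\Omega(1)$ regimes beating $c\sqrt\epsilon/k$) and the threshold $\epwnaught(k)$. Given a target $\epw\le\epwnaught$, I would then solve $2\epsilon(1-\epsilon)=\epw$ for $\epsilon\le\tfrac1{4k^2}$ via \eqref{eq:epw-to-eps}. Taking the infimum over $\beta$, equivalently over $\lambda$, of the case-wise lower bounds and comparing with the bound for $\piEM$ yields the factor $c$.
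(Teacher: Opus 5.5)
Your setup, the comparator $\piref(\cdot\mid C)=\delta_C$, the upper bound for $\piEM$ at $M=1/\alpha$, and your Cases 1 and 2 match the paper's argument (its Regimes I and II). In Case 2 you do not even need extra slack in $\delta$: the exact coefficient $a_C-a_P=1-\tfrac{\alpha}{2}$ cancels against $\Reg(\piEM)=(1-\tfrac{\alpha}{2})\tfrac{\epsilon}{\alpha}$, and your ratio bound already gives $\pi(P)\ge c\,\epsilon/\alpha$.

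The problem is Case 3: your crossover argument does not close under the parameter ranges you say suffice. With $t=-\lambda$ and $\mu=\alpha(1-\delta)+\epsilon\delta$, the regret there is exactly
\[
\frac{\tfrac{(1-\epsilon)(1-\alpha)}{2}\,t+\bigl(1-\tfrac{\alpha}{2}\bigr)\epsilon\,(1+t)}{t+\mu},
\]
and each branch of your argument keeps only one of the two numerator terms. Take the admissible extreme $k^2=2c$, $\delta=1-\tfrac{1}{2c}$, $\epsilon=\tfrac{1}{4k^2}$. Then $\alpha=\tfrac12$, $\epsilon/\alpha=\tfrac{1}{4c}$, and the target is $c\,\Reg(\piEM)=\tfrac{3}{16}$. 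For large $c$, the $B$-term alone reaches $\tfrac{3}{16}$ only for $t$ above roughly $3\mu$. The $P$-term alone reaches it only for $t$ below roughly $\mu/3$. So on $(\mu/3,3\mu)$ neither branch suffices, whatever crossover constant you choose. The true regret is fine there; it is the one-term-per-branch bound that is too lossy. Separately, $\alpha(1-\delta)$ is not the right crossover scale. If you push $\delta$ toward $1$ to buy slack, $\mu$ becomes dominated by $\epsilon\delta$, and $t\gtrsim\alpha(1-\delta)$ no longer forces $\pi(B)=\Omega(1)$.

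There are two ways to repair this.

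\emph{The paper's route.} The display is a linear-fractional function of $t$ (equivalently, affine in $1/\beta$), hence monotone on $(0,\infty)$. Its infimum is therefore the smaller of two endpoint values. The first is $(1-\tfrac{\alpha}{2})\epsilon/\mu$, which is your Case 2 value at $\lambda=0$. The second is $\Reg(\piref)=\tfrac{1-\alpha+\epsilon}{2}\ge\tfrac14\ge c\,\Reg(\piEM)$. This needs no crossover and works with exactly $k^2\ge 2c$, $\delta\ge 1-\tfrac{1}{2c}$.

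\emph{Keeping your crossover.} Place it at $t=\mu$ and use the freedom in choosing $(k,\delta)$. For $t\ge\mu$ you get $\pi(B)\ge\tfrac{1-\alpha}{2}\ge\tfrac14$, hence regret at least $\tfrac{1-\epsilon}{8}$. This beats $c\,\Reg(\piEM)\le\tfrac{c}{2k^2}$ once $k^2\ge 8c$. For $t\le\mu$ you get $\pi(P)\ge\epsilon/(2\mu)$. This yields the factor $c$ once $\alpha/\mu\ge 2c$, e.g.\ with $1-\delta\le\tfrac{1}{4c}$ and $k^2\ge 8c$.

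The proposition only requires some $k\ge\sqrt{2c}$ and $\delta\ge 1-\tfrac{1}{2c}$, so the second route is legitimate and avoids the exact endpoint computation. It is strictly lossier, though, and your claim that the crossover constants are consistent with $k\ge\sqrt{2c}$ is false.
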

The range for $\epw$ above is implied by the constraint $\epsilon \le \tfrac{1}{4k^2}$ based on our construction. Moreover, for any target $\epw\in(0,\epwnaught]$ we may choose the unique $\epsilon$ obtained by inverting~\eqref{eq:epw-to-eps}. Since $\epw\asymp \epsilon$, achieving larger $c$ requires choosing smaller $\epsilon$ (hence smaller $\epw$) to keep $\alpha = k\sqrt{\epsilon}\le\tfrac{1}{2}$. In the proof below, we first solve for the explicit form of each policy before comparing performance and concluding a separation result.
\smallskip
\begin{proof} Fix $\beta>0$. Recall from \citet{huang2025best} that the $\chi^2$-regularized solution admits the pointwise form
\begin{align}
\pi^\chi_\beta(y)\;=\;\piref(y)\,w_\beta(y),
\qquad
w_\beta(y)\;:=\;\mathrm{relu}\!\left(\beta^{-1}(\rhat(y)-\lambda)\right)
\;=\;\frac{(\rhat(y)-\lambda)_+}{\beta},
\end{align}
where $\lambda$ is chosen so that $\sum_y \piref(y)w_\beta(y)=1$, i.e.
\begin{align}
\label{eq:chi2-normalization-eq}
\ee_{y\sim\piref}\big[(\rhat(y)-\lambda)_+\big]\;=\;\beta.
\end{align}

\paragraph{Enforcing normalization.}
For our instance, plugging in the values of $\rhat$ from \eqref{eq:separation-setup-rhat} and $\piref$ weights from \eqref{eq:separation-setup-piref}, the normalization condition \eqref{eq:chi2-normalization-eq} becomes
\begin{align}
\label{eq:chi2-normalization-discrete}
(1-\alpha)\,(0-\lambda)_+\;+\;(\alpha-\epsilon)\,(1-\delta-\lambda)_+\;+\;\epsilon\,(1-\lambda)_+\;=\;\beta.
\end{align}
We will define for convenience the reference mean of $\rhat$ on this instance:
\begin{align}
\label{eq:def-mu}
\mu\;:=\;\ee_{\piref}[\rhat]\;=\;(\alpha-\epsilon)(1-\delta)\;+\;\epsilon
\;=\;\alpha(1-\delta)+\epsilon\delta.
\end{align}

\paragraph{Solving for $\lambda(\beta)$.} Before calculating an explicit form of $\pi^\chi_\beta$, we identify three $\lambda$ regimes for the policy based on the values of $\beta$. We solve \eqref{eq:chi2-normalization-discrete} by cases, according to where $\lambda$ falls relative to
$0$, $1-\delta$, and $1$.

\medskip
\noindent
\textbf{Case I: $\lambda\in[1-\delta,\,1)$.}
Then $(1-\delta-\lambda)_+=0$ and $(0-\lambda)_+=0$, while $(1-\lambda)_+=1-\lambda$, so
\begin{align}
\beta=\epsilon(1-\lambda)
\quad\Longrightarrow\quad
\lambda \;=\;1-\beta/\epsilon.
\end{align}
This case is self-consistent iff $\lambda\ge 1-\delta$, i.e. $1-\beta/\epsilon\ge 1-\delta$
equivalently
\begin{align}
\label{eq:case1-range}
\beta \le \epsilon\delta.
\end{align}

\medskip
\noindent
\textbf{Case II: $\lambda\in[0,\,1-\delta)$.}
Then $(0-\lambda)_+=0$, while both $(1-\delta-\lambda)_+=1-\delta-\lambda$ and $(1-\lambda)_+=1-\lambda$, so
\begin{align}
\beta=(\alpha-\epsilon)(1-\delta-\lambda)+\epsilon(1-\lambda) =\mu-\alpha\lambda.
\end{align}
Hence
\begin{align}
\label{eq:lambda-case2}
\lambda\;=\;\frac{\mu-\beta}{\alpha}.
\end{align}
Self-consistency requires $\lambda<1-\delta$ and $\lambda\ge 0$.
The first is
\begin{align}
\frac{\mu-\beta}{\alpha}<1-\delta
\;\Longleftrightarrow\;
\mu-\beta<\alpha(1-\delta)
\;\Longleftrightarrow\;
\beta>\mu-\alpha(1-\delta)=\epsilon\delta,
\end{align}
and the second is $\mu-\beta\ge 0\iff \beta\le \mu$. So Case II holds for
\begin{align}
\label{eq:case2-range}
\epsilon\delta \;<\;\beta\;\le\;\mu.
\end{align}

\medskip
\noindent
\textbf{Case III: $\lambda<0$.}
Then all three ReLU terms are active:
$(0-\lambda)_+=-\lambda$, $(1-\delta-\lambda)_+=1-\delta-\lambda$, $(1-\lambda)_+=1-\lambda$, so
\begin{align}
\label{eq:lambda-case3}
\beta=(1-\alpha)(-\lambda)+(\alpha-\epsilon)(1-\delta-\lambda)+\epsilon(1-\lambda) =\mu-\lambda.
\end{align}
Self-consistency $\lambda<0$ is exactly
\begin{align}
\label{eq:case3-range}
\beta>\mu.
\end{align}

\paragraph{Finding the explicit form of $\pi^\chi_\beta$.}
Recall $\pi^\chi_\beta=\piref\cdot w_\beta$ with
\begin{align}
w_\beta(y)=\frac{(\rhat(y)-\lambda)_+}{\beta},
\end{align}
and $\lambda=\lambda(\beta)$ given by the previous case analysis. We now record the resulting
$\pi^\chi_\beta$ in each $\beta$-regime.

\medskip
\noindent
\textbf{Regime I ($\beta\le \epsilon\delta$):} here $\lambda=1-\beta/\epsilon\in[1-\delta,1)$, so only $P$ is active:
\begin{align}
w_\beta(P)=\frac{1-\lambda}{\beta}=\frac{\beta/\epsilon}{\beta}=\frac1\epsilon,
\end{align}
while
$w_\beta(C)=w_\beta(B)=0$. Thus
\begin{align}
\label{eq:pi-regime1}
\pi^\chi_\beta(P)=1,\qquad \pi^\chi_\beta(C)=0,\qquad \pi^\chi_\beta(B)=0.
\end{align}

\medskip
\noindent
\textbf{Regime II ($\epsilon\delta<\beta\le\mu$):} here $\lambda=(\mu-\beta)/\alpha\in[0,1-\delta)$, so $B$ is inactive and $C,P$ are active:
\begin{align}
w_\beta(C)=\frac{1-\delta-\lambda}{\beta}
=\frac{\beta-\epsilon\delta}{\alpha\beta},
\qquad
w_\beta(P)=\frac{1-\lambda}{\beta}
=\frac{\alpha-\mu+\beta}{\alpha\beta}.
\end{align}
Therefore
\begin{align}
\label{eq:pi-regime2}
\pi^\chi_\beta(B)=0,\qquad
\pi^\chi_\beta(C)=(\alpha-\epsilon)\left(\frac{\beta-\epsilon\delta}{\alpha\beta}\right),\qquad
\pi^\chi_\beta(P)
=\epsilon\left(\frac{\alpha-\mu+\beta}{\alpha\beta}\right).
\end{align}

\medskip
\noindent
\textbf{Regime III ($\beta>\mu$):} here $\lambda=\mu-\beta<0$, so all three actions are active. For $B$ and $C$ we have
\begin{align}
w_\beta(B)=\frac{-\lambda}{\beta}=1-\frac{\mu}{\beta},
\qquad
w_\beta(C)=\frac{1-\delta-\lambda}{\beta}=1+\frac{(1-\delta)-\mu}{\beta},
\end{align}
and for $P$ we have
\begin{align}
w_\beta(P)=\frac{1-\lambda}{\beta}=1+\frac{1-\mu}{\beta}.
\end{align}
Therefore, for $B$ and $C$,
\begin{align}
\label{eq:pi-regime3}
\pi^\chi_\beta(B)=(1-\alpha)\left(1-\frac{\mu}{\beta}\right),\quad
\pi^\chi_\beta(C)=(\alpha-\epsilon)\left(1+\frac{(1-\delta)-\mu}{\beta}\right),
\end{align}
and for $P$,
\begin{align}
\pi^\chi_\beta(P)=\epsilon\left(1+\frac{1-\mu}{\beta}\right).
\end{align}
Using this explicit formulation of $\pi^\chi_\beta$ in these three regimes, we next compute the win-rate exactly. 

\paragraph{Computing $\R^{\rstar}(\pi^\chi_\beta)$.}
For each region $t\in\{B,C,P\}$ recall the definition of win-rate as:
\begin{align}
v(t)\;\defeq\;\pp_{y'\sim\piref}\!\left(\rstar(t)>\rstar(y')\right)
\;+\;\frac12\,\pp_{y'\sim\piref}\!\left(\rstar(t)=\rstar(y')\right).
\end{align}
By construction $\rstar(C)=1$, $\rstar(B)=\tfrac12$, $\rstar(P)=0$, so ties occur only when $y'=t$.
Hence,
\begin{align}
v(C)
&=\pp(y'\in\{B,P\})+\frac12\,\pp(y'=C)
=(1-\alpha+\epsilon)+\frac12(\alpha-\epsilon)
=1-\frac{\alpha-\epsilon}{2},\\
v(B)
&=\pp(y'=P)+\frac12\,\pp(y'=B)
=\epsilon+\frac{1-\alpha}{2},\\
v(P)
&=\frac12\,\pp(y'=P)
=\frac{\epsilon}{2}.
\end{align}
Therefore, for any policy $\pi$ supported on $\{B,C,P\}$,
\begin{align}
\label{eq:WR-v-decomposition}
\R^{\rstar}(\pi)
&=\pp_{y\sim\pi,\,y'\sim\piref}\!\left(\rstar(y)>\rstar(y')\right)
+\frac12\,\pp_{y\sim\pi,\,y'\sim\piref}\!\left(\rstar(y)=\rstar(y')\right)\\
&=\sum_{t\in\{B,C,P\}}\pi(t)\,v(t)
=\pi(C)\,v(C)+\pi(B)\,v(B)+\pi(P)\,v(P).
\end{align}
We now plug in the mass of $\pi^\chi_\beta$ in each regime given in \eqref{eq:pi-regime1}--\eqref{eq:pi-regime3}.

\medskip
\noindent
\textbf{Regime I ($\beta\le \epsilon\delta$).}
Using \eqref{eq:pi-regime1} and \eqref{eq:WR-v-decomposition}, we have $\pi^\chi_\beta(P)=1$, hence
\begin{align}
\label{eq:WR-regime1}
\R^{\rstar}(\pi^\chi_\beta)=v(P)=\frac{\epsilon}{2}.
\end{align}

\medskip
\noindent
\textbf{Regime II ($\epsilon\delta<\beta\le\mu$).}
Here $\pi^\chi_\beta(B)=0$ by \eqref{eq:pi-regime2}, so by \eqref{eq:WR-v-decomposition},
\begin{align}
\R^{\rstar}(\pi^\chi_\beta)
&=\pi^\chi_\beta(C)\,v(C)+\pi^\chi_\beta(P)\,v(P)\\
&=v(P)+\pi^\chi_\beta(C)\left(v(C)-v(P)\right).
\end{align}
With the values $v(P)=\epsilon/2$ and $v(C)-v(P)=1-\alpha/2$, plugging in the explicit $\pi^\chi_\beta(C)$ from \eqref{eq:pi-regime2} yields
\begin{align}
\label{eq:WR-regime2}
\R^{\rstar}(\pi^\chi_\beta)
=\frac{\epsilon}{2}
+\left(1-\frac{\alpha}{2}\right)
(\alpha-\epsilon)\left(\frac{\beta-\epsilon\delta}{\alpha\beta}\right).
\end{align}

\medskip
\noindent
\textbf{Regime III ($\beta>\mu$).}
Plugging in the explicit $\pi^\chi_\beta$ from \eqref{eq:pi-regime3} into \eqref{eq:WR-v-decomposition} yields
\begin{align}
\label{eq:WR-regime3}
\R^{\rstar}(\pi^\chi_\beta)
&=(\alpha-\epsilon)\left(1+\frac{(1-\delta)-\mu}{\beta}\right)\left(1-\frac{\alpha-\epsilon}{2}\right)\\
&\quad+(1-\alpha)\left(1-\frac{\mu}{\beta}\right)\left(\epsilon+\frac{1-\alpha}{2}\right)
+\epsilon\left(1+\frac{1-\mu}{\beta}\right)\left(\frac{\epsilon}{2}\right).
\end{align}

\paragraph{Computing $\R^{\rstar}(\piEM)$.}
On this instance (with $\alpha>\epsilon$), the top-$\alpha$ set under $\rhat$ is exactly $\{P,C\}$, so
\begin{align}
\piEM=\piref(\cdot\mid \{P,C\}),
\qquad
\piEM(C)=\frac{\alpha-\epsilon}{\alpha}=1-\frac{\epsilon}{\alpha},
\qquad
\piEM(P)=\frac{\epsilon}{\alpha},
\qquad
\piEM(B)=0.
\end{align}
Using the values for $v(P)$ and $v(C)$ and plugging into \eqref{eq:WR-v-decomposition}, we obtain
\begin{align}
\label{eq:WR-piEM-halfties}
\R^{\rstar}(\piEM)
&=\piEM(C)\,v(C)+\piEM(P)\,v(P)\\
&=\left(1-\frac{\epsilon}{\alpha}\right)\left(1-\frac{\alpha-\epsilon}{2}\right)
+\frac{\epsilon}{\alpha}\cdot\frac{\epsilon}{2}.
\end{align}
Equivalently, since $v(C)-v(P)=1-\alpha/2$, this can be written more transparently as
\begin{align}
\label{eq:WR-piEM-halfties-simple}
\R^{\rstar}(\piEM)
=\frac{\epsilon}{2}+\left(1-\frac{\alpha}{2}\right)\piEM(C)
=\frac{\epsilon}{2}+\left(1-\frac{\alpha}{2}\right)\left(1-\frac{\epsilon}{\alpha}\right),
\end{align}
Rewriting in this way will prove to be useful when comparing policies in regret. Now that we have the win-rates for each policy, we can compute the regret against the best policy on this instance. 
\paragraph{Computing win-rate regret.}
Since $C$ is strictly optimal under $r^\star$, we take the benchmark policy to be $\pistar\defeq\delta_C$.
Thus,
\begin{align}
\R^{\rstar}(\pistar)=\R^{\rstar}(\delta_C)=v(C)=1-\frac{\alpha-\epsilon}{2}.
\end{align}
For compactness, we define the win-rate regret by
\begin{align}
\label{eq:Reg-def-halfties}
\Reg(\pi)\;\defeq\;\R^{\rstar}(\delta_C)-\R^{\rstar}(\pi)
=v(C)-\R^{\rstar}(\pi).
\end{align}

\medskip
\noindent
\textbf{Regret of $\piEM$.}
Using \eqref{eq:WR-piEM-halfties-simple} and $v(C)=1-\frac{\alpha-\epsilon}{2}=1-\frac{\alpha}{2}+\frac{\epsilon}{2}$, the regret of the top-quantile selector is
\begin{align}
\label{eq:Reg-piEM-halfties}
\Reg(\piEM)
&=v(C)-\R^{\rstar}(\piEM)\\
&=\left(1-\frac{\alpha}{2}+\frac{\epsilon}{2}\right)
-\left(\frac{\epsilon}{2}+\left(1-\frac{\alpha}{2}\right)\left(1-\frac{\epsilon}{\alpha}\right)\right)\\
&=\left(1-\frac{\alpha}{2}\right)\frac{\epsilon}{\alpha}.
\end{align}

\medskip
\noindent
\textbf{Regret of $\pi^\chi_\beta$: regime-wise.}
We use the regime-wise win-rate formulas \eqref{eq:WR-regime1}--\eqref{eq:WR-regime3} together with
$\Reg(\pi)=v(C)-\R^{\rstar}(\pi)$.

\medskip
\noindent
\textbf{Regime I ($\beta\le \epsilon\delta$).}
From \eqref{eq:WR-regime1}, $\R^{\rstar}(\pi^\chi_\beta)=\epsilon/2$, hence
\begin{align}
\label{eq:Reg-regime1-halfties}
\Reg(\pi^\chi_\beta)=v(C)-\frac{\epsilon}{2}=1-\frac{\alpha}{2}.
\end{align}

\medskip
\noindent
\textbf{Regime II ($\epsilon\delta<\beta\le\mu$).}
From \eqref{eq:WR-regime2}, $\R^{\rstar}(\pi^\chi_\beta)=\frac{\epsilon}{2}+\left(1-\frac{\alpha}{2}\right)\pi^\chi_\beta(C)$,
so
\begin{align}
\label{eq:Reg-regime2-halfties-clean}
\Reg(\pi^\chi_\beta)
&=v(C)-\R^{\rstar}(\pi^\chi_\beta)\\
&=\left(1-\frac{\alpha}{2}+\frac{\epsilon}{2}\right)
-\left(\frac{\epsilon}{2}+\left(1-\frac{\alpha}{2}\right)\pi^\chi_\beta(C)\right)\\
&=\left(1-\frac{\alpha}{2}\right)\left(1-\pi^\chi_\beta(C)\right)\\
&=\left(1-\frac{\alpha}{2}\right)\Bigg[\frac{\epsilon}{\alpha}
+\frac{\epsilon\delta}{\beta}\left(1-\frac{\epsilon}{\alpha}\right)\Bigg],
\end{align}
where in the last line we used $\pi^\chi_\beta(C)=\left(1-\epsilon/\alpha\right)\left(1-\epsilon\delta/\beta\right)$ (from \eqref{eq:pi-regime2}).
In particular, comparing to \eqref{eq:Reg-piEM-halfties},
\begin{align}
\label{eq:Reg-regime2-compare-halfties}
\Reg(\pi^\chi_\beta)
=\underbrace{\left(1-\frac{\alpha}{2}\right)\frac{\epsilon}{\alpha}}_{=\,\Reg(\piEM)}
\;+\;
\left(1-\frac{\alpha}{2}\right)\cdot \frac{\epsilon\delta}{\beta}\left(1-\frac{\epsilon}{\alpha}\right).
\end{align}
The second term above is the \emph{additional penalty} induced by $\chi^2$ tilting:
even though $\piEM$ only needs to include a $(1-\alpha/2)\cdot\epsilon/\alpha$ fraction of the poisoned spike $P$,
the $\chi^2$ solution incurs an extra loss proportional to the score gap $\delta$ and inversely proportional to the regularization parameter $\beta$, matching our intuition in the hard instance setup.

\medskip
\noindent
\textbf{Regime III ($\beta>\mu$).}
Although we already showed how to calculate the win-rate in this regime in \eqref{eq:WR-regime3}, we will now show a simpler method that is more amenable to our casework. In this regime we have $\lambda=\mu-\beta<0$, hence for all $y\in\{B,C,P\}$,
\begin{align}
w_\beta(y)
=\frac{\rhat(y)-\lambda}{\beta}
=\frac{\rhat(y)-(\mu-\beta)}{\beta}
=1+\frac{\rhat(y)-\mu}{\beta},
\end{align}
and therefore
\begin{align}
\pi^\chi_\beta(y)
=\piref(y)\,w_\beta(y)
=\piref(y)\left(1+\frac{\rhat(y)-\mu}{\beta}\right).
\end{align}
Plugging $\pi=\pi^\chi_\beta$ into the win-rate formula in \eqref{eq:WR-v-decomposition} gives
\begin{align}
\R^{\rstar}(\pi^\chi_\beta)
&=\sum_{t}\piref(t)\left(1+\frac{\rhat(t)-\mu}{\beta}\right)v(t)\\
&=\underbrace{\sum_{t}\piref(t)v(t)}_{=\R^{\rstar}(\piref)}
+\frac{1}{\beta}\underbrace{\sum_{t}\piref(t)\left(\rhat(t)-\mu\right)v(t)}_{=:S(\alpha,\epsilon,\delta)}.
\end{align}
where the constant $S(\alpha,\epsilon,\delta)$ depends only on the instance (and not on $\beta$). Indeed, one can easily verify this by rearranging \eqref{eq:WR-regime3}. Thus, for $\beta>\mu$,
\begin{align}
\label{eq:WR-regime3-affine-halfties}
\R^{\rstar}(\pi^\chi_\beta)=\R^{\rstar}(\piref)+\frac{S(\alpha,\epsilon,\delta)}{\beta},
\end{align}
Consequently, using the regret definition \eqref{eq:Reg-def-halfties} with benchmark $v(C)$, for all $\beta>\mu$,
\begin{align}
\label{eq:Reg-regime3-affine-halfties}
\Reg(\pi^\chi_\beta)
=v(C)-\R^{\rstar}(\piref)-\frac{S(\alpha,\epsilon,\delta)}{\beta},
\end{align}
which is affine in $1/\beta$.

\paragraph{Comparing regrets across regimes.}
Recall from \eqref{eq:Reg-piEM-halfties} that
\begin{align}
\label{eq:Reg-EM-recall}
\Reg(\piEM)=\left(1-\frac{\alpha}{2}\right)\frac{\epsilon}{\alpha},
\end{align}
and from \eqref{eq:Reg-def-halfties} that $\Reg(\pi)=v(C)-\R^{\rstar}(\pi)$ with
$v(C)=1-\frac{\alpha-\epsilon}{2}$. We will now lower bound $\inf_{\beta>0}\Reg(\pi^\chi_\beta)$ by analyzing each $\beta$-regime.

\medskip
\noindent
\textbf{Regime I ($\beta\le \epsilon\delta$).}
From \eqref{eq:Reg-regime1-halfties}, $\Reg(\pi^\chi_\beta)=1-\frac{\alpha}{2}$.
Using $\alpha=k\sqrt{\epsilon}$ and $\epsilon\le \frac{1}{4k^2}$, we have
\begin{align}
\frac{\epsilon}{\alpha}=\frac{\sqrt{\epsilon}}{k}\ \le\ \frac{1}{2k^2}.
\end{align}
Plugging into \eqref{eq:Reg-EM-recall} yields
\begin{align}
\Reg(\piEM)=\left(1-\frac{\alpha}{2}\right)\frac{\epsilon}{\alpha}
\ \le\ \left(1-\frac{\alpha}{2}\right)\frac{1}{2k^2}.
\end{align}
Therefore,
\begin{align}
\Reg(\pi^\chi_\beta)=1-\frac{\alpha}{2}
\ \ge\ 2k^2\,\Reg(\piEM)
\ \ge\ c\,\Reg(\piEM),
\end{align}
since $k^2\ge 2c$.

\medskip
\noindent
\textbf{Regime II ($\epsilon\delta<\beta\le\mu$).}
From \eqref{eq:Reg-regime2-halfties-clean},
\begin{align}
\Reg(\pi^\chi_\beta)
=\left(1-\frac{\alpha}{2}\right)\Bigg[\frac{\epsilon}{\alpha}
+\frac{\epsilon\delta}{\beta}\left(1-\frac{\epsilon}{\alpha}\right)\Bigg],
\end{align}
which is decreasing in $\beta$, hence minimized at $\beta=\mu$:
\begin{align}
\inf_{\epsilon\delta<\beta\le\mu}\Reg(\pi^\chi_\beta)=\Reg(\pi^\chi_\mu)
=\left(1-\frac{\alpha}{2}\right)\frac{\epsilon}{\mu}.
\end{align}
Dividing by \eqref{eq:Reg-EM-recall} gives
\begin{align}
\label{eq:ratio-mu-halfties}
\frac{\Reg(\pi^\chi_\mu)}{\Reg(\piEM)}
=\frac{\alpha}{\mu}
=\frac{1}{(1-\delta)+\delta\,\epsilon/\alpha}.
\end{align}
Using $\epsilon/\alpha=\sqrt{\epsilon}/k\le \frac{1}{2k^2}$ (from $\epsilon\le \frac{1}{4k^2}$),
\begin{align}
(1-\delta)+\delta\,\epsilon/\alpha \ \le\ (1-\delta)+\frac{\delta}{2k^2}
\ \le\ \frac{1}{2c}+\frac{1}{4c}
\ =\ \frac{3}{4c}
\ \le\ \frac{1}{c},
\end{align}
where we used $\delta\ge 1-\frac{1}{2c}$ and $k^2\ge 2c$ in the second inequality.
Plugging into \eqref{eq:ratio-mu-halfties} yields $\Reg(\pi^\chi_\mu)\ge c\,\Reg(\piEM)$.
Since $\Reg(\pi^\chi_\beta)$ is decreasing in $\beta$ on Regime II, for all $\beta\in(\epsilon\delta,\mu]$ we have
\begin{align}
\Reg(\pi^\chi_\beta)\ \ge\ \Reg(\pi^\chi_\mu)\ \ge\ c\,\Reg(\piEM).
\end{align}

\medskip
\noindent
\textbf{Regime III ($\beta>\mu$).}
From \eqref{eq:Reg-regime3-affine-halfties},
\begin{align}
\Reg(\pi^\chi_\beta)=v(C)-\R^{\rstar}(\piref)-\frac{S(\alpha,\epsilon,\delta)}{\beta},
\end{align}
so $\Reg(\pi^\chi_\beta)$ is affine in $1/\beta$ over $\beta>\mu$. Hence the minimum over $\beta>\mu$
is attained at an endpoint:
\begin{align}
\label{eq:regime3-endpoints-halfties}
\inf_{\beta>\mu}\Reg(\pi^\chi_\beta)=\min\Big\{\Reg(\pi^\chi_\mu),\ \Reg(\piref)\Big\}.
\end{align}
The first endpoint $\Reg(\pi^\chi_\mu)$ is already lower bounded by $c\,\Reg(\piEM)$ from Regime II.
It remains to lower bound the second endpoint, which is $\Reg(\piref)$. Under the half-ties convention, by symmetry of i.i.d.\ draws,
\begin{align}
\label{eq:R-ref-halfties}
\R^{\rstar}(\piref)
=\pp_{y,y'\sim\piref}\left(\rstar(y)>\rstar(y')\right)
+\frac12\pp_{y,y'\sim\piref}\left(\rstar(y)=\rstar(y')\right)
=\frac12.
\end{align}
Therefore,
\begin{align}
\Reg(\piref)=v(C)-\R^{\rstar}(\piref)
=\left(1-\frac{\alpha-\epsilon}{2}\right)-\frac12
=\frac{1-\alpha+\epsilon}{2} \ge \frac{1}{4},
\end{align}
where the last inequality follows from $\alpha\le \tfrac12$ by construction.
On the other hand, using $\epsilon/\alpha\le \frac{1}{2k^2}$ again,
\begin{align}
\Reg(\piEM)=\left(1-\frac{\alpha}{2}\right)\frac{\epsilon}{\alpha}\le \frac{\epsilon}{\alpha}\le \frac{1}{2k^2}\le \frac{1}{4c},
\end{align}
where the last inequality uses $k^2\ge 2c$. Hence $\Reg(\piref)\ge \frac14 \ge c\,\Reg(\piEM)$.
Plugging into \eqref{eq:regime3-endpoints-halfties} gives $\Reg(\pi^\chi_\beta)\ge c\,\Reg(\piEM)$ for all $\beta>\mu$.

\medskip
\noindent
\textbf{Concluding the separation.}
Combining the three regimes yields
\begin{align}   \inf_{\beta > 0} R^{\rstar}(\pistar) - R^{\rstar}(\pi^\chi_\beta) \geq c \cdot \left(\inf_{M > 0} R^{\rstar}(\pistar) - R^{\rstar}(\piEM)\right),
\end{align}
as claimed.
\end{proof}

\section{Proofs under win-rate against an arbitrary comparison policy}
\label{sec:q-winrate}
In this section, we provide proofs for the upper bounds on the performance of BoN and $\cE_M$-regularized BoN when win-rate is compared against an arbitrary measure $q$. This section is organized as follows:
\begin{enumerate}
    \item \Cref{sec:q-transfer-lemma} provides a modified win-rate transfer lemma introduced in \Cref{sec:supporting-lemmas} under win-rate against $\piref$, under a bounded density ratio assumption between $q$ and $\piref$. 
    \item \Cref{sec:bon-upper-bound-q-pf} provides a general upper bound on the regret of BoN in terms of the $\cE_M$-divergence introduced in \Cref{sec:pw-error}, with a new baseline comparator policy for win-rate.
    \item In \Cref{sec:topk-wr-q-pf}, we give a general upper bound on the regret of our implementation of the $\cE_M$-regularized BoN policy under the win-rate against an arbitrary $q$. 
\end{enumerate}
Before proceeding to the proofs, recall from \Cref{sec:win_rate_arbit} that we defined the \emph{win-rate against $q$} by
\begin{align}
\R^{r}_{q}(\pi)
\;\defeq\;
\ee_{y\sim \pi,\;y'\sim q}\!\left[\phi_{r}(y,y')\right]
\quad \text{and} \quad
\phi_{r}(y,y')
\defeq
\indic\{r(y)>r(y')\}+\tfrac12\,\indic\{r(y)=r(y')\}.
\end{align}
This definition generalizes the win-rate definition against $\piref$ in \Cref{eq:win-rate-definition}, where we now draw $y' \sim q$ to compare against a random sample from our policy. We will assume throughout this section that $q \ll \piref$, and
\begin{align}
\label{eq:q-domination}
w_q(y)\defeq \frac{dq}{d\piref}(y)\le L \qquad \piref\text{-a.s.}
\end{align}
for some constant $L\ge 1$. We continue to measure pairwise reward-model error with respect to $\piref$ via
\begin{align}
\epspw(\rhat)
\;=\;
\ee_{y,y'\sim \piref}\!\left[\left|\phi_{\rhat}(y,y')-\phi_{\rstar}(y,y')\right|\right].
\end{align}

\subsection{Transfer lemma for win-rate against $q$}
\label{sec:q-transfer-lemma}

\begin{lemma}[Win-rate transfer against $q$]
\label{lem:win-transfer-em-q}
Fix $\piref$, $q$ satisfying \eqref{eq:q-domination}, a true reward $\rstar$, and a learned reward model $\rhat$.
For any policy $\pi\ll\piref$ with density ratio $w_\pi\defeq d\pi/d\piref$ and any $M\ge 1$,
\begin{align}
\label{eq:transfer-q}
\left|\R^{\rstar}_{q}(\pi)-\R^{\rhat}_{q}(\pi)\right|
\;\le\;
L\left(M\cdot \epspw(\rhat) + \cE_M(\pi\|\piref)\right),
\end{align}
where $\cE_M(\pi\|\piref)\defeq \ee_{y\sim\piref}[(w_\pi(y)-M)_+]$.
\end{lemma}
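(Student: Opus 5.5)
The plan mirrors the proof of \Cref{lem:win-transfer-em}, with one additional change of measure on the comparison draw $y'$. Set $\ell(y,y')\defeq|\phi_{\rstar}(y,y')-\phi_{\rhat}(y,y')|\in[0,1]$. By the triangle inequality,
\begin{align}
\left|\R^{\rstar}_q(\pi)-\R^{\rhat}_q(\pi)\right|\le \ee_{y\sim\pi,\,y'\sim q}\left[\ell(y,y')\right]
= \ee_{y,y'\sim\piref}\left[w_\pi(y)\,w_q(y')\,\ell(y,y')\right],
\end{align}
where the equality changes measure in both coordinates. This is valid because $\pi\ll\piref$, $q\ll\piref$, and the pair $(y,y')$ is drawn from a product measure.

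Next, bound the weight on the comparison draw using \eqref{eq:q-domination}: since $0\le w_q(y')\le L$ holds $\piref$-a.s. and the remaining integrand is nonnegative, the expression is at most $L\cdot\ee_{y,y'\sim\piref}[w_\pi(y)\ell(y,y')]$.

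What remains is exactly the quantity already handled in the proof of \Cref{lem:win-transfer-em}. Split $w_\pi=(w_\pi\wedge M)+(w_\pi-M)_+$. The first piece contributes at most $M\,\ee_{\piref\otimes\piref}[\ell]=M\,\epspw(\rhat)$. The second piece contributes at most $\ee_{\piref}[(w_\pi-M)_+]=\cE_M(\pi\|\piref)$, using $\ell\le1$ and integrating out $y'$. Multiplying both by $L$ gives \eqref{eq:transfer-q}.

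There is no real obstacle here. The only point needing care is the order of operations: $w_q$ must be bounded by $L$ before the truncation split, since that is what leaves a clean product-measure expectation of $\ell$. Sharper bounds, such as putting $L$ only on the $\epspw$ term, are not needed for the stated result.
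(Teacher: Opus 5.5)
Your proof is correct and follows the paper's argument step for step: bound the gap by $\ee_{y\sim\pi,\,y'\sim q}[\ell]$, change measure in both coordinates, use $w_q\le L$, then split $w_\pi=(w_\pi\wedge M)+(w_\pi-M)_+$. Your remark about a sharper bound is also right: integrating $w_q(y')$ out on the excess-mass piece removes the factor $L$ from the $\cE_M$ term, which shows that bounding $w_q$ before the split is not actually required.
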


\begin{proof}
The proof follows the same strategy as \Cref{lem:win-transfer-em}. We first define $\ell(y,y')\defeq |\phi_{\rstar}(y,y')-\phi_{\rhat}(y,y')|\in[0,1]$.
Then
\begin{align}
\left|\R^{\rstar}_{q}(\pi)-\R^{\rhat}_{q}(\pi)\right|
&=
\left|\ee_{y\sim\pi,\;y'\sim q}\!\left[\phi_{\rstar}(y,y')-\phi_{\rhat}(y,y')\right]\right|\\
&\le
\ee_{y\sim\pi,\;y'\sim q}\!\left[\ell(y,y')\right].
\end{align}
We can then perform a change of measure using $w_\pi=d\pi/d\piref$ and $w_q=dq/d\piref$:
\begin{align}
\ee_{y\sim\pi,\;y'\sim q}\!\left[\ell(y,y')\right]
=
\ee_{y,y'\sim\piref}\!\left[w_\pi(y)\,w_q(y')\,\ell(y,y')\right]
\le
L\cdot \ee_{y,y'\sim\piref}\!\left[w_\pi(y)\,\ell(y,y')\right],
\end{align}
where the inequality uses $w_q\le L$ $\piref$-a.s.
Now decompose $w_\pi(y)=(w_\pi(y)\wedge M) + (w_\pi(y)-M)_+$ and use $\ell\in[0,1]$:
\begin{align}
\ee_{y,y'\sim\piref}\!\left[w_\pi(y)\,\ell(y,y')\right]
&=
\ee_{y,y'\sim\piref}\!\left[(w_\pi(y)\wedge M)\,\ell(y,y')\right]
+
\ee_{y,y'\sim\piref}\!\left[(w_\pi(y)-M)_+\,\ell(y,y')\right]\\
&\le
M\cdot \ee_{y,y'\sim\piref}\!\left[\ell(y,y')\right]
+
\ee_{y\sim\piref}\!\left[(w_\pi(y)-M)_+\right]\\
&=
M\cdot \epspw(\rhat)+\cE_M(\pi\|\piref).
\end{align}
Multiplying by $L$ yields \eqref{eq:transfer-q}.
\end{proof}

\subsection{Proof of \Cref{thm:bon-win-rate-upper-bound-q-body}}
\label{sec:bon-upper-bound-q-pf}

Next, we provide a proof of the upper bound on the regret of the BoN policy when win-rate is comparing against an arbitrary measure $q$. Optimizing the following bound in the same way that we did for the $\piref$-win-rate case yields the same result as \Cref{thm:bon-win-rate-upper-bound} up to constant factors. 

\begin{theorem}
\label{thm:bon-win-rate-upper-bound-q}
Under the pointwise bound on the density ratio of $q$ in \eqref{eq:q-domination}, for any $N\geq1$, reference model $\piref$, comparator $\pistar\ll\piref$, ground truth reward $\rstar$, and learned reward model $\rhat$, it holds that
\begin{align}
\R_q^{\rstar}(\pistar)-\R_q^{\rstar}(\pihatbon)
\lesssim
L \cdot N \cdot \epspw(\rhat) \cdot \log\left( \nicefrac{1}{\epspw(\rhat)} \right) + \cE_{N / \log(1/\epspw(\rhat))}\left( \pistar \| \piref \right).
\end{align}
\end{theorem}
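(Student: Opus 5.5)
We mirror the proof of \Cref{thm:bon-win-rate-upper-bound}, replacing every win-rate functional $\R^{r}$ by $\R^{r}_q$. For any $M>0$ we use the same three-term decomposition
\begin{align}
\R_q^{\rstar}(\pistar)-\R_q^{\rstar}(\pihatbon)
=\underbrace{\R_q^{\rstar}(\pistar)-\R_q^{\rhat}(\pistar_M)}_{\mathrm{(T1)}}
+\underbrace{\R_q^{\rhat}(\pistar_M)-\R_q^{\rhat}(\pihatbon)}_{\mathrm{(T2)}}
+\underbrace{\R_q^{\rhat}(\pihatbon)-\R_q^{\rstar}(\pihatbon)}_{\mathrm{(T3)}},
\end{align}
with $\pistar_M$ the $M$-capped total-variation projection from \eqref{eq:tv-proj-def}. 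The key observation is that for any fixed $r$, $\R^r_q(\pi)=\ee_{y\sim\pi}[G_r(y)]$ with $G_r(y)\defeq\ee_{y'\sim q}[\phi_r(y,y')]\in[0,1]$. Hence $\R^r_q$ is $1$-Lipschitz in $\TV$ with no factor of $L$. Moreover, $G_{\rhat}$ is a nondecreasing function of $\rhat(y)$: for each fixed $y'$, $\phi_{\rhat}(y,y')$ is nondecreasing in $\rhat(y)$.

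For (T1), split it as $\R_q^{\rstar}(\pistar)-\R_q^{\rstar}(\pistar_M)$ plus $\R_q^{\rstar}(\pistar_M)-\R_q^{\rhat}(\pistar_M)$. The first piece is at most $\TV(\pistar,\pistar_M)=\cE_M(\pistar\|\piref)$, by \Cref{lem:em-upper-bound-tv} and the Lipschitz observation. The second piece is at most $L\,M\,\epw$, by \Cref{lem:win-transfer-em-q} applied with $\cE_M(\pistar_M\|\piref)=0$.

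For (T2), introduce the rejection-sampling selection rule $\pistar_{M,\mathrm{rej}}$ on the same $N$ draws. Then $\R_q^{\rhat}(\pistar_M)-\R_q^{\rhat}(\pistar_{M,\mathrm{rej}})\le\TV(\pistar_M,\pistar_{M,\mathrm{rej}})\le\tfrac12 e^{-N/M}$ by \Cref{tv-to-em-rs}. Next, $\R_q^{\rhat}(\pistar_{M,\mathrm{rej}})-\R_q^{\rhat}(\pihatbon)\le0$. To see this, couple the two rules on a common batch $\YhatN$. BoN selects an element maximizing $\rhat$, and $G_{\rhat}$ is monotone in $\rhat$, so BoN's conditional value of $G_{\rhat}$ dominates that of any other element of $\YhatN$. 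This is exactly the argument of the $\piref$ case, with $\widetilde F_{\rhat}$ replaced by $G_{\rhat}$.

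For (T3), recall from the proof of \Cref{thm:bon-win-rate-upper-bound} that $d\pihatbon/d\piref\le N$. \Cref{lem:win-transfer-em-q} with truncation $N$ then gives $\mathrm{(T3)}\le L\,N\,\epw$.

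Summing the three bounds yields
\begin{align}
\R_q^{\rstar}(\pistar)-\R_q^{\rstar}(\pihatbon)\lesssim \cE_M(\pistar\|\piref)+L(M+N)\epw+e^{-N/M}.
\end{align}
Choosing $M=N/\log(1/\epw)$ makes $e^{-N/M}=\epw\le L N\epw$, which gives the claimed bound.

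The main point to verify is that the $L$ factor does not leak into the $\cE_M$ term. This holds because the only place $\cE_M(\pistar\|\piref)$ enters is through the $\TV$-Lipschitz step, which uses only the boundedness of $G_r$ and never the density of $q$. The transfer lemma is applied only to the capped policies $\pistar_M$ and $\pihatbon$, whose $\cE$-terms vanish.
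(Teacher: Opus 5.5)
Your proposal is correct and follows the paper's proof essentially step for step: the same three-term decomposition through $\pistar_M$, the $\TV$-Lipschitz step together with \Cref{lem:win-transfer-em-q} for (T1), rejection sampling plus monotonicity of $g_{\rhat,q}$ (your $G_{\rhat}$) for (T2), and the density-ratio bound $N$ for (T3). One caveat, which the paper's own final step also leaves implicit: turning $\cE_M(\pistar\|\piref)+L(M+N)\epw+e^{-N/M}$ into the stated bound with $M=N/\log(1/\epw)$ requires $\log(1/\epw)\gtrsim 1$, since $L(M+N)\epw=LN\epw\bigl(1+1/\log(1/\epw)\bigr)$; so the step only works for $\epw$ bounded away from $1$ (or with $\log(1/\epw)$ read as $\log(e/\epw)$).
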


\begin{proof}
We follow the proof of \Cref{thm:bon-win-rate-upper-bound} detailed in \Cref{sec:bon-upper-bound-pf}, replacing $\R^{(\cdot)}$ with $\R^{(\cdot)}_q$.
In particular, for any $M>0$, the same regret decomposition holds:
\begin{align}
\label{eq:bon-regret-decomp-q}
\R_q^{\rstar}(\pistar)-\R_q^{\rstar}(\pihatbon)
=
\underbrace{\R_q^{\rstar}(\pistar)-\R_q^{\rhat}(\pistar_M)}_{\mathrm{(T1)}}
+
\underbrace{\R_q^{\rhat}(\pistar_M)-\R_q^{\rhat}(\pihatbon)}_{\mathrm{(T2)}}
+
\underbrace{\R_q^{\rhat}(\pihatbon)-\R_q^{\rstar}(\pihatbon)}_{\mathrm{(T3)}},
\end{align}
where $\pistar_M$ is the same $M$-capped TV projection defined in \eqref{eq:tv-proj-def}.
To bound $\mathrm{(T1)}$, we decompose as in \Cref{sec:bon-upper-bound-pf}:
\begin{align}
\R_q^{\rstar}(\pistar)-\R_q^{\rhat}(\pistar_M)
=
\R_q^{\rstar}(\pistar)-\R_q^{\rstar}(\pistar_M)
+
\R_q^{\rstar}(\pistar_M)-\R_q^{\rhat}(\pistar_M).
\end{align}
The first difference is controlled exactly as before: for fixed $(r,q)$, the win-rate functional is bounded and therefore $1$-Lipschitz in total variation. Thus,
\begin{align}
\R_q^{\rstar}(\pistar)-\R_q^{\rstar}(\pistar_M)\le \TV(\pistar,\pistar_M)=\cE_M(\pistar\|\piref),
\end{align}
where the equality uses the same arguments from \citet{block2023sample} as in the proof of \Cref{thm:bon-win-rate-upper-bound}.
For the second difference, apply \Cref{lem:win-transfer-em-q} with $\pi=\pistar_M$; since $\nicefrac{d\pistar_M}{d\piref}\le M$,
we have $\cE_M(\pistar_M\|\piref)=0$, and thus
\begin{align}
\R_q^{\rstar}(\pistar_M)-\R_q^{\rhat}(\pistar_M)\le L\cdot M\,\epspw(\rhat).
\end{align}
Therefore,
\begin{align}
\label{eq:T1-q}
\mathrm{(T1)}\le \cE_M(\pistar\|\piref)+L\,M\,\epspw(\rhat).
\end{align}
Next, in order to bound $\mathrm{(T2)}$, we introduce the same approximate rejection sampling selection rule $\pistar_{M,\mathrm{rej}}$ as in \Cref{sec:bon-upper-bound-pf} and decompose
\begin{align}
\R_q^{\rhat}(\pistar_M)-\R_q^{\rhat}(\pihatbon)
=
\R_q^{\rhat}(\pistar_M)-\R_q^{\rhat}(\pistar_{M,\mathrm{rej}})
+
\R_q^{\rhat}(\pistar_{M,\mathrm{rej}})-\R_q^{\rhat}(\pihatbon).
\end{align}
The first difference is controlled exactly as before by the approximate rejection sampling analysis of \citet{block2023sample}, using only boundedness of win-rate:
\begin{align}
\R_q^{\rhat}(\pistar_M)-\R_q^{\rhat}(\pistar_{M,\mathrm{rej}})
\le \TV(\pistar_M,\pistar_{M,\mathrm{rej}})\le \tfrac12 \exp(-N/M),
\end{align}
since $\cE_M(\pistar_M\|\piref)=0$. For the second difference, conditional on any fixed batch $\YhatN=(y_1,\dots,y_N)$, BoN outputs an index attaining
$\max_i \rhat(y_i)$. We define 
\begin{align}
g_{r,q}(r(y))\defeq \pp_{y'\sim q}\left(r(y')<r(y)\right)+\tfrac12\,\pp_{y'\sim q}\left(r(y')=r(y)\right)\in[0,1],
\end{align}
so that for any policy $\pi$, $\R^{\rhat}_{q}(\pi)=\ee_{y\sim\pi}[g_{\rhat, q}(\rhat(y))]$ by conditioning on $y$ exactly as in \Cref{lem:cdf-identity-pf}. From this, it is clear that $g_{\rhat,q}$ is nondecreasing in $\rhat(y)$. Therefore, for any $y\in\YhatN$ we have $g_{\rhat,q}(\rhat(y))\le \max_{i\in[N]} g_{\rhat,q}(\rhat(y_i))$, and the same coupling
argument as in the proof of \Cref{thm:bon-win-rate-upper-bound} gives
\begin{align}
\R^{\rhat}_q(\pistar_{M,\mathrm{rej}})-\R^{\rhat}_q(\pihatbon)\le 0.
\end{align}
Combining the bounds on the two differences ensures
\begin{align}
\label{eq:T2-q}
\mathrm{(T2)}\le \tfrac12 \exp(-N/M).
\end{align}
To bound $\mathrm{(T3)}$, recall that in \Cref{sec:bon-upper-bound-pf} we showed that BoN satisfies $\nicefrac{d\pihatbon}{d\piref}\le N$ pointwise, so $\cE_N(\pihatbon\|\piref)=0$.
Applying \Cref{lem:win-transfer-em-q} with $\pi=\pihatbon$ and truncation $M=N$ yields
\begin{align}
\label{eq:T3-q}
\mathrm{(T3)}
=
\R_q^{\rhat}(\pihatbon)-\R_q^{\rstar}(\pihatbon)
\le L\cdot N\,\epspw(\rhat).
\end{align}
Combining \eqref{eq:T1-q}, \eqref{eq:T2-q}, and \eqref{eq:T3-q} with \eqref{eq:bon-regret-decomp-q} gives
\begin{align}
\R_q^{\rstar}(\pistar)-\R_q^{\rstar}(\pihatbon)
\;\lesssim\;
\cE_M(\pistar\|\piref) + L(M+N)\,\epspw(\rhat) + \exp(-N/M).
\end{align}
Setting $M = N / \log\left( \nicefrac{1}{\epw(\rhat)} \right)$ and observing that for such $M$, it holds that $\exp(-N/M) = \epw(\rhat)$, we obtain the desired result.
\end{proof}

\subsection{Proof of \Cref{thm:topk-winrate-upper-bound-q-body}}
\label{sec:topk-wr-q-pf}

Finally, we provide an upper bound on the regret of the $\cE_M$-regularized BoN policy when win-rate is comparing against an arbitrary measure $q$. We again work under the assumption that the density ratio of $\piEMhat$ is bounded pointwise \eqref{eq:q-domination}. Note that while the following bound is qualitatively similar to the result from \Cref{thm:bon-win-rate-upper-bound}, we obtain a slower rate in $N$ while staying minimax optimal.

\begin{theorem}
\label{thm:topk-winrate-upper-bound-q}
For any $M, N \geq 1$, reference model $\piref$, comparator $\pistar\ll\piref$, ground truth reward $\rstar$, and learned reward model $\rhat$, it holds that
\begin{align}\label{eq:EM-reg-winrate-ub-q}
    R^{\rstar}_q(\pistar) - R^{\rstar}_q(\piEMhat) \lesssim L\cdot\cE_M(\pistar\|\piref)+L\cdot M\cdot\epw(\rhat)+\sqrt{\frac{M}{N}}.
\end{align}
\end{theorem}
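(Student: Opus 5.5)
We will mirror the three-term decomposition used for \Cref{thm:topk-winrate-upper-bound}, now with $\R_q$ in place of $\R$:
\begin{align}
\R_q^{\rstar}(\pistar)-\R_q^{\rstar}(\piEMhat)
=\big(\R_q^{\rstar}(\pistar)-\R_q^{\rhat}(\pistar_M)\big)+\big(\R_q^{\rhat}(\pistar_M)-\R_q^{\rhat}(\piEMhat)\big)+\big(\R_q^{\rhat}(\piEMhat)-\R_q^{\rstar}(\piEMhat)\big),
\end{align}
with $\pistar_M$ the $M$-capped TV projection from \eqref{eq:tv-proj-def}.

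\emph{Term (T1).} Split it as $\R_q^{\rstar}(\pistar)-\R_q^{\rstar}(\pistar_M)$ plus $\R_q^{\rstar}(\pistar_M)-\R_q^{\rhat}(\pistar_M)$.
\begin{itemize}
\item The first piece is at most $\TV(\pistar,\pistar_M)=\cE_M(\pistar\|\piref)$, by boundedness of win-rate together with \Cref{lem:em-lower-bound-tv,lem:em-upper-bound-tv}.
\item The second piece is at most $L M\epw$, by \Cref{lem:win-transfer-em-q}, since $\cE_M(\pistar_M\|\piref)=0$.
\end{itemize}

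\emph{Term (T3).} The proof of \Cref{thm:topk-winrate-upper-bound} already showed $d\piEMhat/d\piref\le N/\lceil N/M\rceil\le M$. Applying \Cref{lem:win-transfer-em-q} again gives $(\mathrm{T3})\le LM\epw$.

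\emph{Term (T2).} Write $\R_q^{\rhat}(\pi)=\ee_{y\sim\pi}[g(\rhat(y))]$, where $g=g_{\rhat,q}$ is the nondecreasing $[0,1]$-valued function from the proof of \Cref{thm:bon-win-rate-upper-bound-q}. We split (T2) through the population quantile policy $\pi_M=\piref(\cdot\mid\rhat\ge\lambda_M)$.
\begin{itemize}
\item $\R_q^{\rhat}(\pistar_M)\le\R_q^{\rhat}(\pi_M)$. Indeed, $\pi_M$ maximizes $\ee_\pi[s]$ over density ratios bounded by $M$ for any nondecreasing transform $s$ of $\rhat$; this is the same top-quantile/bathtub argument as \Cref{prop:KKT-top-quantile}, with $s=g\circ\rhat$, whose top-$1/M$ set can be taken to be the top-$1/M$ set of $\rhat$ after randomized tie-breaking.
\item It remains to bound $\R_q^{\rhat}(\pi_M)-\R_q^{\rhat}(\piEMhat)$. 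This is the main obstacle, because the order-statistic identity of \Cref{lem:topk-exact-winrate-hatr} used $q=\piref$.
\end{itemize}

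\emph{Main step: the quantile-coupling bound.} Pass to the randomized quantiles $\tilde u_i\iid\mathrm{Unif}[0,1]$ of \Cref{lem:cdf-identity-pf}, and let $h(u)$ denote $g$ evaluated at the point of quantile $u$. Then $h$ is nondecreasing with values in $[0,1]$, and
\begin{align}
\R_q^{\rhat}(\pi_M)=M\int_{1-1/M}^1 h(u)\,du,\qquad \R_q^{\rhat}(\piEMhat)=\ee\Big[\tfrac1k\sum_{j>N-k}h(\tilde u_{(j)})\Big].
\end{align}
The plan is a conditional comparison. Conditional on the $k$-th largest order statistic $\tilde u_{(N-k+1)}=\tau$, the top $k-1$ points are i.i.d.\ uniform on $[\tau,1]$. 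Hence the empirical policy is essentially uniform sampling on $[\tau,1]$, and only the cutoff $\tau$ differs from the population cutoff $1-1/M$.

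Uniform sampling on $[\tau,1]$ and on $[1-1/M,1]$ differ in TV by at most $\min\{1,M|\tau-(1-1/M)|\}$, up to an $O(1/k)$ term from the single boundary point. Since $h\in[0,1]$, the win-rate gap is bounded by $\ee\min\{1,M|\tau-1+1/M\|\}+O(1/k)$. This TV step is where the lack of regularity of $q$ enters: $h$ may jump near the cutoff, so we use only boundedness and not smoothness.

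Since $\tau$ is a Beta order statistic with mean within $O(1/N)$ of $1-1/M$ and standard deviation $\asymp\sqrt{k}/N\asymp\sqrt{1/(MN)}$, we get
\begin{align}
M\,\ee|\tau-1+1/M|\lesssim M\sqrt{1/(MN)}=\sqrt{M/N}.
\end{align}
The $O(1/k)=O(M/N)$ term is at most $\sqrt{M/N}$ whenever $M\le N$; when $M>N$ the bound $\sqrt{M/N}\ge1$ holds trivially.

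Summing the three terms gives the claim. The factor $L$ on $\cE_M$ is harmless, since $L\ge1$.
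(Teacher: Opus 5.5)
Your proposal is correct and follows the paper's proof essentially step for step. It uses the same three-term decomposition, the same monotonicity/top-quantile argument giving $\R_q^{\rhat}(\pistar_M)\le \R_q^{\rhat}(\pi_M)$, and the same reduction of the remaining gap to the cutoff fluctuation via $\TV(\mathrm{Unif}[\tau,1],\mathrm{Unif}[1-1/M,1])\le M|\tau-(1-1/M)|$ plus Beta concentration of $\tau$. The only differences are minor: you split (T1) through $\R_q^{\rstar}(\pistar_M)$, which avoids the factor $L$ on $\cE_M$ that the paper's split through $\R_q^{\rhat}(\pistar)$ incurs, and you condition on the $k$-th rather than the $(k+1)$-th largest rank, which costs you an extra $O(1/k)$ boundary term that the paper's choice $\tau=u_{(N-k)}$ avoids.
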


\begin{proof}
Fix $M\ge 1$ and let $k\defeq\lceil N/M\rceil$.  Recall that $\piEMhat$ is the (unconditional) law of the random output
$y_{\mathrm{out}}$ produced by: sample $y_1,\dots,y_N\iid\piref$ and i.i.d.\ $V_1,\dots,V_N\sim{\rm Unif}[0,1]$
independent; let $I$ index the top-$k$ samples under the lexicographic order $(\rhat(y_i),V_i)$; output $y_J$
with $J$ uniform on $I$ conditional on $(\YhatN,V_{1:N})$. As in the proof of \Cref{thm:topk-winrate-upper-bound} with $\piref$-win-rate, we consider the same regret decomposition:
\begin{align}
\R^{\rstar}_{q}(\pistar)-\R^{\rstar}_{q}(\piEMhat)
&=
\underbrace{\left(\R^{\rstar}_{q}(\pistar)-\R^{\rhat}_{q}(\pistar_M)\right)}_{\mathrm{(T1)}}
+
\underbrace{\left(\R^{\rhat}_{q}(\pistar_M)-\R^{\rhat}_{q}(\piEMhat)\right)}_{\mathrm{(T2)}}
+
\underbrace{\left(\R^{\rhat}_{q}(\piEMhat)-\R^{\rstar}_{q}(\piEMhat)\right)}_{\mathrm{(T3)}}.
\end{align}
Here, $\pistar_M$ is defined exactly as before: it minimizes total variation distance to $\pistar$ over all $\pi\ll\piref$ such that
$\frac{d\pi}{d\piref}\le M$ $\piref$-a.s. To bound $\mathrm{(T1)}$, we decompose
\begin{align}
\R^{\rstar}_{q}(\pistar)-\R^{\rhat}_{q}(\pistar_M)
=
\R^{\rstar}_{q}(\pistar)-\R^{\rhat}_{q}(\pistar)
+
\R^{\rhat}_{q}(\pistar)-\R^{\rhat}_{q}(\pistar_M).
\end{align}
For the first difference, we apply \Cref{lem:win-transfer-em-q} to obtain
\begin{align}
\R^{\rstar}_{q}(\pistar)-\R^{\rhat}_{q}(\pistar)
\;\le\;
L\left(M\,\epw(\rhat)+\cE_M(\pistar\|\piref)\right).
\end{align}
For the second difference, we observe that for fixed $(q,\rhat)$ the win-rate functional 
is bounded and therefore $1$-Lipschitz in total variation.
Thus,
\begin{align}
\R^{\rhat}_{q}(\pistar)-\R^{\rhat}_{q}(\pistar_M)\le \TV(\pistar,\pistar_M)=\cE_M(\pistar\|\piref),
\end{align}
where the last equality uses the same approximate rejection sampling arguments from \citet{block2023sample} as before.
Combining,
\begin{align}
\label{eq:T1-q-topk}
\mathrm{(T1)}\;\lesssim\;L\,\cE_M(\pistar\|\piref)\;+\;L\,M\,\epw(\rhat).
\end{align}
For $\mathrm{(T2)}$, as before, we introduce the ideal top-$1/M$ threshold policy
\begin{align}
\pi_M(\cdot)\defeq \piref\left(\cdot \mid \rhat(\cdot)\ge \lambda_M\right),
\end{align}
so that $\pp_{y\sim\piref}\left(\rhat(y)\ge \lambda_M\right)=\frac1M$, and we decompose as
\begin{align}
\R^{\rhat}_{q}(\pistar_M)-\R^{\rhat}_{q}(\piEMhat)
=
\R^{\rhat}_{q}(\pistar_M)-\R^{\rhat}_{q}(\pi_M)
+
\R^{\rhat}_{q}(\pi_M)-\R^{\rhat}_{q}(\piEMhat).
\end{align}
For the first difference, as in the proof of \Cref{thm:bon-win-rate-upper-bound-q}, we define 
\begin{align}
g_{r,q}(r(y))\defeq \pp_{y'\sim q}\left(r(y) > r(y')\right)+\tfrac12\,\pp_{y'\sim q}\left(r(y)=r(y')\right)\in[0,1],
\end{align}
so that for any policy $\pi$, $\R^{\rhat}_{q}(\pi)=\ee_{y\sim\pi}[g_{\rhat, q}(\rhat(y))]$ by conditioning on $y$ exactly as in \Cref{lem:cdf-identity-pf}. From this, it is clear that $g_{\rhat,q}$ is nondecreasing in $\rhat(y)$, so $\pi_M$ (output by \Cref{prop:KKT-top-quantile} with $s=\rhat$) maximizes $\R^{\rhat}_{q}(\pi)$ over the class of policies with density ratio bounded by $M$. Recalling that $\pistar_M$ is also feasible subject to this constraint, it follows that
$\R^{\rhat}_{q}(\pistar_M)\le \R^{\rhat}_{q}(\pi_M)$, and hence
\begin{align}
\R^{\rhat}_{q}(\pistar_M)-\R^{\rhat}_{q}(\pi_M)\le 0.
\end{align}
It remains to bound $\R^{\rhat}_{q}(\pi_M)-\R^{\rhat}_{q}(\piEMhat)$.
As in \Cref{app:topk-wr-pf}, we let $F_{\rhat}$ and $F_{\rhat}(\cdot^-)$ denote the cumulative density functions of $\rhat(y)$ with $y \sim \piref$ as in \Cref{lem:cdf-identity-pf}, and given an auxiliary seed $V\sim{\rm Unif}[0,1]$ independent of $y$, we define
\begin{align}
\tilde u(y;V)\defeq F_{\rhat}(\rhat(y)^-)+V\left(F_{\rhat}(\rhat(y))-F_{\rhat}(\rhat(y)^-)\right)\in[0,1].
\end{align}
By the same uniformization argument as in \Cref{lem:cdf-identity-pf}, $\tilde u(y;V)\sim{\rm Unif}[0,1]$, and moreover
\begin{align}
F_{\rhat}^{-1}(\tilde u(y;V))=\rhat(y)
\end{align}
$\piref$-a.s., where $F_{\rhat}^{-1}$ is the generalized inverse. As in the proof of \Cref{thm:topk-winrate-upper-bound}, ranking by
$(\rhat(y_i),V_i)$ is equivalent to ranking by $\tilde u_i\defeq \tilde u(y_i;V_i)$. We let $u_1,\dots,u_N\iid{\rm Unif}[0,1]$ denote these randomized ranks, and write $u_{(1)}\le\cdots\le u_{(N)}$. Now we set $t\defeq 1-\frac1M$ and define the empirical cutoff $\tau\defeq u_{(N-k)}$ (so the selected set corresponds to the top $k$ ranks). We also define  the bounded monotone function
\begin{align}
h(u)\defeq g_{\rhat,q}\!\left(F_{\rhat}^{-1}(u)\right)\in[0,1],
\end{align}
so that $\R^{\rhat}_{q}(\pi)=\ee_{y\sim\pi}[h(\tilde u(y;V))]$.
Under $\pi_M$, we have $\tilde u(y;V)\sim{\rm Unif}[t,1]$, and hence
\begin{align}
\R^{\rhat}_{q}(\pi_M)=\mu(t),
\end{align}
where $\mu(s)\defeq \ee[h(u)\mid u\ge s]$ and $u\sim{\rm Unif}[0,1]$.
Under $\piEMhat$, the output rank $u_{\rm out}$ is uniform over the top $k$ ranks. Following the same logic as \Cref{lem:topk-exact-winrate-hatr}, a standard order-statistic property implies that conditional on $\tau$, the multiset of the $k$ ranks above $\tau$
is i.i.d.\ ${\rm Unif}[\tau,1]$; therefore $u_{\rm out}\mid \tau \sim {\rm Unif}[\tau,1]$ and
\begin{align}
\R^{\rhat}_{q}(\piEMhat)=\ee\left[\mu(\tau)\right],
\end{align}
where the expectation is over the randomness in $\tau$. At this point we have reduced the gap to
\begin{align}
\R^{\rhat}_{q}(\pi_M)-\R^{\rhat}_{q}(\piEMhat)
=
\mu(t)-\ee[\mu(\tau)].
\end{align}
To upper bound this quantity, we note that it is immediate that
\begin{align}
\mu(t)-\ee[\mu(\tau)] \leq \ee\left[|\mu(t)-\mu(\tau)|\right].
\end{align}
Recall $\mu(s)=\ee[h(U)\mid U\ge s]$ with $U\sim{\rm Unif}[0,1]$ and $h\in[0,1]$.
Equivalently, if we write $P_s\defeq{\rm Unif}[s,1]$, then $\mu(s)=\ee_{P_s}[h]$.
Hence for any $s,s'\in[0,1)$,
\begin{align}
|\mu(s)-\mu(s')|
= \left|\ee_{P_s}[h]-\ee_{P_{s'}}[h]\right|
\le \TV(P_s,P_{s'}).
\end{align}
We next show how to bound this total variation distance. Assume wlog $s\le s'$. The corresponding densities are
\begin{align}
f_s(u)=\frac{1}{1-s}\,\indic\{u\in[s,1]\},
\qquad
f_{s'}(u)=\frac{1}{1-s'}\,\indic\{u\in[s',1]\},
\end{align}
so
\begin{align}
\TV(P_s,P_{s'})
&=\frac12\int_0^1 |f_s(u)-f_{s'}(u)|\,du \\
&=\frac12\left(\int_s^{s'}\frac{1}{1-s}\,du
\;+\;\int_{s'}^{1}\left(\frac{1}{1-s'}-\frac{1}{1-s}\right)\,du\right) \\
&=\frac12\left(\frac{s'-s}{1-s}
\;+\;(1-s')\left(\frac{1}{1-s'}-\frac{1}{1-s}\right)\right)
=\frac{s'-s}{1-s}.
\end{align}
Therefore, for all $s,s'\in[0,1)$,
\begin{align}
|\mu(s)-\mu(s')|
\le \frac{|s-s'|}{1-\min\{s,s'\}}.
\end{align}
Applying this with $s=t$ and $s'=\tau$ yields
\begin{align}
|\mu(t)-\mu(\tau)|
\le \frac{|t-\tau|}{1-\min\{t,\tau\}}
\le \frac{|t-\tau|}{1-t}
= M\,|t-\tau|,
\end{align}
since $t=1-\frac1M$ and $\min\{t,\tau\}\le t$.
Hence
\begin{align}
\R^{\rhat}_q(\pi_M)-\R^{\rhat}_q(\hat\pi_M)
=\mu(t)-\ee[\mu(\tau)]
\le \ee\left[|\mu(t)-\mu(\tau)|\right]
\le M\,\ee[|t-\tau|].
\end{align}
Hence
\begin{align}
\label{eq:lipschitz-bound-q}
\R^{\rhat}_{q}(\pi_M)-\R^{\rhat}_{q}(\piEMhat)
\le M\ee\left[|t-\tau|\right],
\end{align}
and it remains to bound $\ee\left[|t-\tau|\right]$. Since $u_1,\dots,u_N\iid{\rm Unif}[0,1]$, the cutoff $\tau=u_{(N-k)}$ has the Beta law
$\tau\sim{\rm Beta}(N-k,\;k+1)$.
In particular,
\begin{align}
\ee[\tau]=\frac{N-k}{N+1}
\qquad\text{and}\qquad
\Var(\tau)=\frac{(N-k)(k+1)}{(N+1)^2(N+2)}\;\le\;\frac{k+1}{(N+1)^2}.
\end{align}
We can decompose using the triangle inequality as
\begin{align}
\ee[|t-\tau|]\;\le\;|t-\ee[\tau]|+\ee[|\tau-\ee[\tau]|],
\end{align}
and using Cauchy-Schwarz on the second term, we obtain
\begin{align}
|t-\ee[\tau]|+\ee[|\tau-\ee[\tau]|]
\;\le\;|t-\ee[\tau]|+\sqrt{\Var(\tau)}.
\end{align}
Since $t=1-\tfrac1M$ and $k=\lceil N/M\rceil$, one can easily verify that $|\ee[\tau]-t|\lesssim \tfrac1N$ (due to rounding) and
$\sqrt{\Var(\tau)}\lesssim \tfrac{1}{\sqrt{MN}}$, yielding
\begin{align}
\ee[|t-\tau|]
\;\lesssim\;\frac1N+\frac{1}{\sqrt{MN}}.
\end{align}
Combining with \eqref{eq:lipschitz-bound-q} gives
\begin{align}
\R^{\rhat}_q(\pi_M)-\R^{\rhat}_q(\hat\pi_M)
\;\lesssim\;
M\left(\frac1N+\frac{1}{\sqrt{MN}}\right)
\;=\;
\frac{M}{N}+\sqrt{\frac{M}{N}},
\end{align}
and hence we can conclude that 
\begin{align}
\label{eq:T2-q-topk}
\mathrm{(T2)}=\R^{\rhat}_{q}(\pistar_M)-\R^{\rhat}_{q}(\piEMhat)
\;\lesssim\;
\frac{M}{N}+\sqrt{\frac{M}{N}},
\end{align}
where the second term dominates for large $N$. For $\mathrm{(T3)}$, as in the proof of \Cref{thm:topk-winrate-upper-bound}, we have $\cE_M(\piEMhat\|\piref)=0$ since $w_{\piEMhat}\le N/k\le M$ $\piref$-a.s.
Applying \Cref{lem:win-transfer-em-q} to $\pi=\piEMhat$ under~\eqref{eq:q-domination} yields
\begin{align}
\label{eq:T3-q-topk}
\mathrm{(T3)}
=\R^{\rhat}_{q}(\piEMhat)-\R^{\rstar}_{q}(\piEMhat)
\;\le\;
L\left(M\,\epw(\rhat)+\cE_M(\piEMhat\|\piref)\right)
=
L\,M\,\epw(\rhat).
\end{align}
Finally, combining \eqref{eq:T1-q-topk}, \eqref{eq:T2-q-topk}, and \eqref{eq:T3-q-topk} gives
\begin{align}
R^{\rstar}_q(\pistar) - R^{\rstar}_q(\piEMhat) \lesssim L\cdot\cE_M(\pistar\|\piref)+L\cdot M\cdot\epw(\rhat)+\sqrt{\frac{M}{N}}.
\end{align}
\end{proof}
While we do not provide a computational lower bound proving that the $\sqrt{M/N}$ estimation term is tight in full generality,  this slower rate can be viewed as an artifact of the general-$q$ setting. The key obstruction is that for general $q$ the win-rate becomes a \emph{generic tail functional} of the randomized $\rhat$-rank rather than an explicitly computable linear statistic as in the proof of \Cref{thm:bon-win-rate-upper-bound} in \Cref{sec:bon-upper-bound-pf}. Concretely, we write
\begin{align}
\R^{\rhat}_q(\pi)=\ee_{y\sim\pi,\,V}\!\big[h(\tilde u(y;V))\big],
\qquad
h(u)\defeq g_{\rhat,q}\!\big(F_{\rhat}^{-1}(u)\big)\in[0,1],
\end{align}
where $h$ is merely bounded (and monotone) under~\eqref{eq:q-domination}. When $q=\piref$, the corresponding transform is linear (indeed $h(u)=u$), which is why the empirical top-$1/M$ policy admits an exact order-statistics calculation and yields a $1/N$ term. For general $q$, however, the shape of $h$ near the fixed cutoff $t=1-\tfrac1M$ is uncontrolled: $h$ can place almost all of its variation in an $O(1/M)$ neighborhood of $t$. In such worst cases, small fluctuations of the empirical cutoff $\tau$ around $t$ can translate into a sensitivity bound of the form
\begin{align}
|\mu(t)-\mu(\tau)| \;\lesssim\; M\,|t-\tau|,
\end{align}
which holds for all bounded $h$ and captures the fact that changing the threshold by $|t-\tau|$ perturbs a tail of mass $1/M$. Since $\tau$ fluctuates around $t$ at scale $\ee|t-\tau|\asymp 1/\sqrt{MN}$ for $k\approx N/M$, this leads to an estimation error on the order of $\sqrt{M/N}$ without additional regularity assumptions on $q$ (equivalently, on $h$) beyond~\eqref{eq:q-domination}.

\end{document}